\newcommand {\rf} {\mathit{rank}}
\newcommand {\lingconc} {\mathcal{S}}
\newcommand {\Pe} {\textsf{P}}
\newcommand {\Ra} {\textsf{R}}
\newcommand {\ent} {\mathrel{{\scriptstyle\mid\!\sim}}}
\newcommand {\nott} {\lnot}
\newcommand {\esiste} {\exists}
\newcommand {\la} {\langle}
\newcommand {\ra} {\rangle}
\newcommand {\sx} {\langle}
\newcommand {\dx} {\rangle}
\newcommand {\incluso} {\subseteq}
\newcommand {\appartiene} {\in}
\newcommand {\emme} {\mathcal{M}}
\newcommand {\unione} {\cup}
\newcommand {\tc} {\mid}
\newcommand {\vuoto} {\emptyset}
\newcommand {\WW} {\mathcal{W}}
\newcommand {\diverso} {\neq}
\newcommand{\tip}{{\bf T}}
\newcommand{\alc}{\mathcal{ALC}}
\newcommand{\alct}{\mathcal{ALC}+\tip}
\newcommand{\alctmin}{\mathcal{ALC}+\tip_{min}}
\newcommand{\alctr}{\mathcal{ALC}^{\Ra}\tip}
\newcommand{\modelsshiqrt}{\models_{\scriptsize\mathcal{SHIQ}^{\scriptsize{\Ra}}\tip}}
\newcommand {\bbox}{\square}
\newcommand{\be}{\begin{enumerate}}
\newcommand{\ee}{\end{enumerate}}
\newcommand{\hide}[1]{}
\newcommand{\subsud}{\utilde{\sqsubset}}
\newcommand{\sqset}{\sqsubseteq}
\newcommand{\shiqrt}{\mathcal{SHIQ}^{\Ra}\tip}
\newcommand{\shiqpt}{\mathcal{SHIQ}^{P}\tip}
\newcommand{\shiq}{\mathcal{SHIQ}}
\begin{document}

\title{Rational closure in $\shiq$}

\author{\vspace{-0.15cm}
Laura Giordano \inst{1} \and Valentina Gliozzi \inst{2} \and Nicola Olivetti \inst{3} \and Gian Luca Pozzato \inst{2}}


\authorrunning{L. Giordano, V. Gliozzi, N. Olivetti, G.L. Pozzato}

\institute{DISIT - U. Piemonte
Orientale, Alessandria, Italy - \email{laura@mfn.unipmn.it}
\and Dip. di Informatica - Univ. di Torino, Italy - \email{\{gliozzi,pozzato\}@di.unito.it}
\and Aix-Marseille Universit\'e, CNRS, France \email{nicola.olivetti@univ-amu.fr }
\vspace{-0.6cm}
}

\maketitle

\begin{abstract}\vspace{-0.1cm}
We define a notion of rational closure for the logic $\shiq$, which does not enjoys the finite model property, building on the notion of rational closure introduced by Lehmann and Magidor in \cite{whatdoes}.
We provide a semantic characterization of rational closure in $\shiq$ in terms of a preferential semantics, based on  a finite rank characterization of minimal models. We show that the rational closure of a TBox can be computed in  \textsc{ExpTime} using entailment in $\shiq$. 
\vspace{-0.2cm}
\end{abstract}

\section{Introduction}
\vspace{-0.25cm}

Recently, a large amount of work  has  been done in order to extend the basic formalism of Description Logics (for short, DLs)  with nonmonotonic reasoning features \cite{Straccia93,baader95b,donini2002,eiter2004,lpar2007,AIJ,kesattler,sudafricaniKR,bonattilutz,casinistraccia2010,rosatiacm,hitzlerdl}; the purpose of these extensions is that of allowing reasoning about \emph{prototypical properties} of individuals or classes of individuals. In these extensions one can represent, for instance, knowledge expressing the fact that the hematocrit level is \emph{usually} under 50\%, with the exceptions of newborns and of males residing at high altitudes, that have usually much  higher levels (even over 65\%). Furthermore, one can infer that an individual enjoys all the \emph{typical} properties of the classes it belongs to. As an example, in the absence of information that Carlos and the son of Fernando are either newborns or adult males living at a high altitude, one would assume that the hematocrit levels of Carlos and Fernando's son are under 50\%. 
This kind of inferences apply to  individual explicitly named in the knowledge base as well as to individuals implicitly introduced by relations among individuals (the son of Fernando).

In spite of the number of works in this direction, finding a solution to the problem of extending DLs for reasoning about prototypical properties seems far from being solved. 
The most well known semantics for nonmonotonic reasoning have been used to the purpose, 
from default logic \cite{baader95b}, to circumscription  \cite{bonattilutz}, to Lifschitz's  nonmonotonic logic MKNF
\cite{donini2002,rosatiacm}, to preferential reasoning \cite{lpar2007,sudafricaniKR,AIJ}, to rational closure \cite{casinistraccia2010,CasiniJAIR2013}.

In this work, we focus on rational closure and, specifically, on the rational closure for $\shiq$.
The interest of rational closure in DLs is that it provides a significant and reasonable nonmonotonic inference mechanism, still remaining computationally inexpensive.
As shown for $\alc$ in  \cite{casinistraccia2010}, 
its complexity can be expected not to exceed the one of the underlying monotonic DL. 
This is a striking difference with most of the other approaches to nonmonotonic reasoning in DLs mentioned above, with some exception such as \cite{rosatiacm,hitzlerdl}.
More specifically, we define a rational closure for the logic $\shiq$, building on the notion of rational closure in \cite{whatdoes} for propositional logic. This is a difference with respect to the rational closure construction introduced in \cite{casinistraccia2010} for $\alc$, which  is more similar to the one by  Freund \cite{freund98} for propositional logic
(for propositional logic, the two definitions of rational closure are shown to be equivalent \cite{freund98}).
We provide a semantic characterization of rational closure in $\shiq$ in terms of a preferential semantics,  
by generalizing to $\shiq$ the results for rational closure in $\alc$ presented in \cite{dl2013}. This generalization is not trivial, since $\shiq$ lacks a crucial property of $\alc$, the finite model property  \cite{HorrocksIGPL00}.
Our  construction exploits an extension of $\shiq$ with a typicality operator $\tip$, that selects the most typical instances of a concept $C$, $\tip(C)$. We define a {\em minimal model semantics}
and a notion of minimal entailment  for the resulting logic, $\shiqrt$, 
and we show that the inclusions belonging to the rational closure of a TBox are those minimally entailed by the TBox, when restricting to {\em canonical} models. This result exploits a characterization of minimal models, showing that we can restrict to models with finite ranks.
We also show that the rational closure construction of a TBox can be done exploiting entailment in $\shiq$,
without requiring to reason in $\shiqrt$, and that the problem of deciding whether an inclusion belongs to the rational closure of a TBox  is in \textsc{ExpTime}.

Concerning ABox reasoning, because of the interaction between individuals (due to roles) it is not possible to separately assign a unique minimal rank to each individual and  alternative minimal ranks must be considered. 
We end up with a kind of \emph{skeptical} inference with respect to the ABox,
whose complexity in \textsc{ExpTime} as well.
\normalcolor




\normalcolor

\vspace{-0.15cm}

\section{A nonmonotonic extension of $\shiq$}\label{sezione shiqrt}
\vspace{-0.25cm}
Following the approach
in \cite{FI09,AIJ}, we  introduce an extension of  $\shiq$ \cite{HorrocksIGPL00} with a typicality operator $\tip$ in order to express typical inclusions, obtaining the logic $\shiqrt$. The intuitive idea is to allow concepts of the form $\tip(C)$,  whose intuitive meaning is that
$\tip(C)$ selects the {\em typical} instances of a concept $C$. We can therefore distinguish between the properties that
hold for all instances of $C$ ($C \sqsubseteq D$), and those that only hold for the typical
such instances ($\tip(C) \sqsubseteq D$).
Since we are dealing here with rational closure, we attribute to $\tip$ properties of rational consequence relation \cite{whatdoes}. 
We consider an alphabet of concept names $\mathcal{C}$, role names
$\mathcal{R}$, transitive roles $\mathcal{R}^+ \subseteq \mathcal{R}$,  and individual constants $\mathcal{O}$.
Given $A \in \mathcal{C}$, $S \in \mathcal{R}$, and $n \in \mathbb{N}$  we define:

\vspace{0.1cm}
\begin{footnotesize}
\noindent $C_R:= A \tc \top \tc \bot \tc  \nott C_R \tc C_R \sqcap C_R \tc C_R \sqcup C_R \tc \forall S.C_R \tc \exists S.C_R \tc (\geq n S.C_R) \tc (\leq n S.C_R)$  \ \ \ \ \ \ 
   $C_L:= C_R \tc  \tip(C_R)$
  \hfill $S:=R \tc R^-$
\end{footnotesize}
\vspace{0.1cm}

\noindent  As usual, we assume that transitive roles cannot be used in number restrictions \cite{HorrocksIGPL00}.
    A KB is a pair (TBox, ABox). TBox contains a finite set
of  concept inclusions  $C_L \sqsubseteq C_R$ and role inclusions $R \sqsubseteq S$. ABox
contains assertions of the form $C_L(a)$ and $S(a,b)$, where $a, b \in
\mathcal{O}$. 

\hide{
\noindent A first semantic characterization of $\tip$ can be given by means of a set of postulates that are essentially a reformulation of axioms and rules of nonmonotonic entailment in rational logic \Ra: in this respect, the \tip-assertion $\tip(C) \sqsubseteq D$ is equivalent to the conditional assertion $C \ent D$ in $\Ra$\footnote{This can be easily proven given  Proposition $5.1$ of \cite{FI09}  that proves the equivalence between the weaker logic $\alct$ in which $f_\tip$ satisfies $(f_\tip-1)-(f_\tip-5)$ above but does not satisfy $(f_\tip-\Ra)$ and the KLM logic $\Pe$ which is weaker than \Ra.}. Given a domain $\Delta$  and a valuation function $I$ one can define the function
$f_\tip(S)$ that selects  the {\em typical} instances of $S$, and in
case $S = C^I$ for a concept $C$, it selects the typical instances
of $C$. In this semantics, $(\tip(C))^I = f_\tip(C^I)$, and
$f_\tip$ has the following intuitive properties for all subsets $S$ of
$\Delta$:

\begin{definition}[Semantics of $\tip$ with selection function]\label{Semantics
with f_tip} A model is any structure $$\langle \Delta, f_\tip, I\rangle$$
where: 
\begin{itemize}
\item $\Delta$ is the domain; 
\item $f_\tip : Pow(\Delta) \longmapsto Pow(\Delta)$ is a function satisfying the
following properties (given $S \subseteq \Delta$):
\end{itemize}
\begin{quote}
 \begin{footnotesize}
$ (f_\tip-1) \ f_\tip(S) \subseteq S $ \\  
$ (f_\tip-2) \ \mbox{if} \ S \not=\emptyset \mbox{, then also} \ f_\tip(S) \not=\emptyset $ \\
$ (f_\tip-3) \ \mbox{if} \ f_\tip(S) \subseteq R, \mbox{then} \ f_\tip(S)=f_\tip(S \cap R) $ \\ 
$ (f_\tip-4) \ f_\tip(\bigcup S_i) \subseteq \bigcup f_\tip (S_i) $ \\
$ (f_\tip-5) \ \bigcap f_\tip(S_i) \subseteq f_\tip(\bigcup S_i) $ \\
$  (f_\tip-\Ra) \ \mbox{if} \ f_\tip(S) \cap R \not=\emptyset, \mbox{then} \ f_\tip(S \cap R) \subseteq f_\tip(S) $ \\
 \end{footnotesize}
 \end{quote}
 \begin{itemize}
\item $I$ is the extension function that
maps each extended concept $C$ to $C^I \subseteq \Delta$, and each role $R$
to a $R^I \subseteq \Delta \times \Delta$ as follows:
\begin{itemize}
  \item $I$ maps   each role $R \in \mathcal{R}$ to its extension $R^I$;
   \item $I$ maps  each atomic concept $A \in \mathcal{C}$ to its extension $A^I$;
  \item $I$ is extended to complex concepts as follows:
\begin{itemize}
   \item $\top^I=\Delta$
   \item $\bot^I=\emptyset$
   \item $(C \sqcap D)^I$=$C^I \cap D^I$
   \item $(\esiste R.C)^I$=$\{x \in \Delta \tc \exists y \in C^I: (x,y) \in R^I \}$
   \item $(\geq n R.C)^I=\{x \in \Delta \ \mbox{s.t.} \ \tc \{y \in \Delta \ \mbox{s.t.} \ (x^I,y^I) \in R^I \ \mbox{and} \ y^I \in C^I\}\tc \geq n \}$
   \item $(\leq n R.C)^I=\{x \in \Delta \ \mbox{s.t.} \ \tc \{y \in \Delta \ \mbox{s.t.} \ (x^I,y^I) \in R^I \ \mbox{and} \ y^I \in C^I\}\tc \leq n \}$
   \item $(\tip(C))^I = f_\tip(C^I)$
\end{itemize}
  \item $I$ is extended to inverse and transitive roles as follows:
  \begin{itemize}
   \item $(R^-)^I=\{(x^I,y^I) \tc (y^I,x^I) \in R^I\}$
   \item for $R \in \mathcal{R}^+$, if $(x^I,y^I) \in R^I$ and $(y^I,z^I) \in R^I$, then $(x^I,z^I) \in R^I$.
  \end{itemize}
\end{itemize}
\end{itemize}
\end{definition}

\noindent ($f_\tip-1$) enforces that typical elements of $S$
belong to $S$. ($f_\tip-2$) enforces that if there are elements in
$S$, then there are also {\em typical} such elements. ($f_\tip-3$) expresses a weak form of monotonicity, namely {\em
cautious monotonicity}. The next
properties constraint the behavior of $f_\tip$ with respect to $\cap$ and
$\cup$ in such a way that they do not entail monotonicity.
 Last,   $(f_\tip-\Ra)$ corresponds to rational monotonicity, and forces again a form of
monotonicity: if there is a typical $S$ having the property $R$, then
all typical $S$ and $R$s inherit the properties of typical $S$s. 
}

\noindent The semantics of $\shiqrt$ is formulated in terms of
rational models:
 ordinary models of $\shiq$ are equipped with a \emph{preference relation} $<$ on
the domain, whose intuitive meaning is to compare the ``typicality''
of domain elements, that is to say, $x < y$ means that $x$ is more typical than
$y$. Typical instances of a concept $C$ (the instances of
$\tip(C)$) are the instances $x$ of $C$ that are minimal with respect
to the preference relation $<$ (so that there is no other instance of $C$
preferred to $x$)\footnote{As for the logic $\alctr$ in \cite{ecai2010DLs}, an alternative semantic characterization of $\tip$ can be given by means of a set of postulates that are essentially a reformulation of the properties of rational consequence relation \cite{whatdoes}.}.


In the following definition we introduce the notion of 

\begin{definition}[Semantics of $\shiqrt$]\label{semalctr} A $\shiqrt$ model 
\footnote{In this paper, we follow the terminology in \cite{whatdoes}
for preferential and ranked models, and we use the term ``model" to denote an an interpretation.}
$\emme$ is any
structure $\langle \Delta, <,I \rangle$ where: 
\begin{itemize}
\item $\Delta$ is the domain;   
\item  $<$ is an irreflexive, transitive, well-founded, and modular   relation over
$\Delta$; 
 \item $I$ is the extension function that maps each
concept $C$ to $C^I \subseteq \Delta$, and each role $R$
to  $R^I \subseteq \Delta^I \times \Delta^I$. For concepts of
$\shiq$, $C^I$ is defined as usual. For the $\tip$ operator, we have
$(\tip(C))^I = Min_<(C^I)$,  where
$Min_<(S)= \{u: u \in S$ and $\nexists z \in S$ s.t. $z < u \}$.
\end{itemize}
\end{definition}

\noindent We say that an irreflexive and transitive relation $<$ is: 
\begin{itemize}
\item {\em modular} if, for all $x,y,z \in \Delta$, if $x < y$ then $x < z$ or $z < y$  \cite{whatdoes};
\item   {\em well-founded} if,
for all  $S \subseteq \Delta$, for all $x \in S$, either $x \in Min_<(S)$ or $\exists y \in  Min_<(S)$ such that $y < x$. 
\end{itemize}

\noindent It can be proved that an irreflexive and transitive relation $<$ on $\Delta$ is well-founded if and only if there 
are no infinite descending chains $\ldots x_{i+1} <^* x_i <^* \ldots <^* x_0$ of elements of $\Delta$
(see Appendix \ref{appendicenicola}).

In \cite{whatdoes} it is shown that,  for a strict partial order $<$ over a set $W$, the modularity requirement is equivalent
to postulating the existence of a rank function $k: W \rightarrow \Omega$, such that $\Omega$ is a totally ordered set.
In the presence of the well-foundedness condition above, the totally ordered set $\Omega$ happens to be a well-order,
and
we can introduce a rank function $k_\emme: \Delta \longmapsto \mathit{Ord}$ assigning an ordinal to each domain element in $W$, and  
let  $x < y$ if and only if $k_\emme(x) < k_\emme(y)$. We call $k_{\emme}(x)$ \emph{the rank of element} $x$ in $\emme$. 
Observe that, when the rank $k_{\emme}(x)$ is finite, it can be understood as the length  of a chain $x_0 < \dots < x$ from $x$
to a minimal $x_0$ (i.e. an $x_0$ s.t. for no ${x'}$, ${x'} < x_0$).   
\normalcolor

 Notice that the meaning of $\tip$ can be split into two parts: for any
$x$ of the domain $\Delta$,  $x \in (\tip(C))^I$ just in case
(i) $x \in C^I$, and (ii) there is no $y \in C^I$ such that $y < x$. In order to isolate the second part of the meaning of $\tip$, we introduce
a new modality $\bbox$. The basic idea is simply to interpret the preference
relation $<$ as an accessibility relation.  The
well-foundedness of $<$ ensures that typical elements of $C^I$ exist
whenever $C^I \diverso \vuoto$, by avoiding infinitely
descending chains of elements. 
The interpretation of $\bbox$ in $\emme$ is as follows:

\begin{definition}\label{def-box}
Given a model $\emme$, we extend the definition of $I$ with the following clause:
\begin{center}
      $ (\bbox C)^I = \{x \in \Delta \tc $  for every $y \appartiene \Delta$, if
    $y < x$ then $y \in C^I \}$
\end{center}
\end{definition}

\noindent It is easy to observe that $x$ is a typical instance  of $C$ if and only if it is an instance of $C$ and $\bbox \nott C$, that is to say:

\begin{proposition}\label{Relation between T an box}
Given a model $\emme$, given a concept $C$ and an element $x \in \Delta$, we have that
$$x \in (\tip(C))^I \ \mbox{iff} \  x \in (C \sqcap \bbox \neg C)^I$$
\end{proposition}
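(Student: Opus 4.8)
The plan is to prove both directions of the biconditional by unfolding the semantic definitions directly. The key definitions are: $x \in (\tip(C))^I$ iff $x \in Min_<(C^I)$, where $Min_<(C^I) = \{u : u \in C^I \text{ and } \nexists z \in C^I \text{ s.t. } z < u\}$ (from Definition~\ref{semalctr}); and $x \in (\bbox \neg C)^I$ iff for every $y \in \Delta$, if $y < x$ then $y \in (\neg C)^I$, i.e.\ $y \notin C^I$ (from Definition~\ref{def-box}).

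First I would address the forward direction. Assuming $x \in (\tip(C))^I = Min_<(C^I)$, I immediately get $x \in C^I$, which gives the first conjunct. For the second conjunct, I must show $x \in (\bbox \neg C)^I$: take any $y \in \Delta$ with $y < x$; since $x \in Min_<(C^I)$ there is no $z \in C^I$ with $z < x$, so in particular $y \notin C^I$, that is $y \in (\neg C)^I$. Hence $x \in (\bbox \neg C)^I$, and combining with $x \in C^I$ yields $x \in (C \sqcap \bbox \neg C)^I$.

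For the converse direction, assume $x \in (C \sqcap \bbox \neg C)^I$, so $x \in C^I$ and $x \in (\bbox \neg C)^I$. To conclude $x \in Min_<(C^I)$ I need that no $z \in C^I$ satisfies $z < x$. Suppose toward a contradiction there were such a $z$ with $z < x$ and $z \in C^I$; then by the definition of $\bbox$ applied to $x \in (\bbox \neg C)^I$ we would have $z \in (\neg C)^I$, i.e.\ $z \notin C^I$, contradicting $z \in C^I$. Therefore no such $z$ exists, so $x \in Min_<(C^I) = (\tip(C))^I$.

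This argument is essentially a direct semantic unfolding and is routine; the only subtlety worth flagging is that the equivalence holds pointwise and uses no special property of $<$ beyond what is needed to read off the $Min_<$ and $\bbox$ clauses — well-foundedness and modularity play no role here, since the statement quantifies over a fixed element $x$ rather than asserting the existence of typical instances. I do not anticipate a genuine obstacle; the proof is a short chain of biconditionals and could even be presented in a single displayed derivation, but splitting it into the two implications as above makes the contradiction step in the converse direction clearest.
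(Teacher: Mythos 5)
Your proof is correct and matches the paper's (implicit) argument: the paper states this proposition as an immediate observation, having just noted that the meaning of $\tip$ splits into (i) membership in $C^I$ and (ii) absence of a $<$-smaller element of $C^I$, with $\bbox$ introduced precisely to capture part (ii). Your direct unfolding of $Min_<$ and the $\bbox$ clause, including the remark that well-foundedness and modularity are not needed, is exactly this reasoning made explicit.
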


\noindent
Since we only use $\bbox$ to capture the meaning of $\tip$, in the
following we will always use the modality $\bbox$ followed by a negated concept,
as in $\bbox \neg C$.

In the next definition of a model satisfying a knowledge base,  we  extend
the function $I$ to individual constants;  we assign to each individual constant $a \in \mathcal{O}$ a
 domain element $a^I \in \Delta$. 



\normalcolor

\begin{definition}[Model satisfying a knowledge base]\label{Def-ModelSatTBox-ABox}
Given a $\shiqrt$ model $\emme$$=\sx\Delta, <, I\dx$, 
we say that: 
\begin{itemize}
\item a model $\emme$ satisfies an inclusion $C \sqsubseteq D$ if   $C^I \subseteq D^I$; similarly for role inclusions;
\item $\emme$ satisfies an assertion $C(a)$ if $a^I \in C^I$; 
\item $\emme$ satisfies an assertion $R(a,b)$ if $(a^I,b^I) \in R^I$.
\end{itemize}
 Given  a KB=(TBox,ABox), we say that:
 $\emme$  satisfies TBox if $\emme$ satisfies all  inclusions in TBox;
 $\emme$ satisfies ABox  if $\emme$ satisfies all  assertions in ABox;
 $\emme$ satisfies KB  (or, is a model of KB) if it satisfies both its TBox
and its ABox.
\end{definition}

As a difference with the approach in \cite{AIJ}, we do no longer assume the unique name assumption (UNA), namely we do not assume that each $a \in \mathcal{O}$ is assigned to a \emph{distinct} element $a^I \in \Delta$. 
  In $\alctmin$ \cite{AIJ}, in which we compare models that might have a different interpretation of concepts and that are not canonical, 
UNA avoids that models in which  two named individuals are mapped into the same domain element are preferred to those in which they are mapped into distinct ones. UNA is not needed here as we compare models with the same domain and the same interpretation of concepts, while assuming that models are canonical (see Definition \ref{def-canonical-model-DL}) and contain all the possible domain elements ``compatible'' with the KB. 
\normalcolor

\hide{
\noindent By a construction similar to that used in Theorem $2.3$ of \cite{FI09} for the weaker logic $\alct$, 
we can prove the following theorem. 

\begin{theorem}[Complexity of $\shiqrt$]\label{complexityalctr}
Given a $\shiqrt$ knowledge base KB=(TBox,ABox), the problem of deciding satisfiability of KB is \textsc{ExpTime}-complete.
\end{theorem}
}

The logic $\shiqrt$, as well as the underlying $\shiq$, does not enjoy the finite model property  \cite{HorrocksIGPL00}.

Given a KB,  we say that an inclusion $C_L \sqsubseteq C_R$ is entailed by KB, written KB $\modelsshiqrt C_L \sqsubseteq C_R$, if ${C_L}^I \subseteq {C_R}^I$ holds in all models $\emme=$$\sx \Delta, <, I\dx$ satisfying KB; similarly for role inclusions.
    We also say that an assertion $C_L(a)$, with $a \in \mathcal{O}$,  is entailed by KB, written KB $\modelsshiqrt C_L(a)$, if $a^I \in {C_L}^I$ holds in all models $\emme=$$\sx \Delta, <, I\dx$ satisfying KB.



 Let us now introduce the notions of rank of a $\shiq$ concept.

\begin{definition}[Rank of a concept $k_{\emme}(C_R)$]\label{definition_rank_formula}
Given a model $\emme=$$\langle \Delta, <, I \rangle$, 
we define the {\em rank} $k_{\emme}(C_R)$ of a concept $C_R$ in the model $\emme$ as $k_{\emme}(C_R)=min\{k_{\emme}(x) \tc
x \in {C_R}^I\}$. If ${C_R}^I=\vuoto$, then
$C_R$ has no rank and we write $k_{\emme}(C_R)=\infty$.
\end{definition}

\hide{\noindent It is immediate to verify that:}

\begin{proposition}\label{Truth conditions conditionals with rank}
For any $\emme=$$\langle \Delta, <, I \rangle$, we
have that $\emme$ satisfies $\tip(C) \sqsubseteq D$ if and only if $k_{\emme}(C \sqcap D) < k_{\emme}(C
\sqcap \nott D)$.
\end{proposition}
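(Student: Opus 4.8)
The plan is to unfold the definition of satisfaction and reduce the claim to a purely set-theoretic statement about ranks. By Definition~\ref{Def-ModelSatTBox-ABox}, $\emme$ satisfies $\tip(C) \sqsubseteq D$ exactly when $(\tip(C))^I \subseteq D^I$, and by Definition~\ref{semalctr} this means $Min_<(C^I) \subseteq D^I$. So everything hinges on describing $Min_<(C^I)$ in terms of the rank function $k_{\emme}$.

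First I would establish the key observation that, since $<$ is modular and well-founded, it is induced by $k_{\emme}$ (that is, $x < y$ iff $k_{\emme}(x) < k_{\emme}(y)$) and the ranks lie in the well-ordered class $\mathit{Ord}$; hence for any $S \subseteq \Delta$,
$$Min_<(S) = \{u \in S \tc k_{\emme}(u) = \min\{k_{\emme}(x) \tc x \in S\}\}.$$
Indeed, $u \in Min_<(S)$ means that no $z \in S$ satisfies $z < u$, i.e.\ no $z \in S$ has $k_{\emme}(z) < k_{\emme}(u)$; by totality of the ordinal order this is equivalent to $k_{\emme}(u)$ being the least rank occurring in $S$. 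In particular $Min_<(C^I)$ consists precisely of the elements of $C^I$ of rank $k_{\emme}(C)$.

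Next I would exploit the partition $C^I = (C \sqcap D)^I$ together with $(C \sqcap \nott D)^I$, which are disjoint and cover $C^I$, so that $k_{\emme}(C) = \min\{k_{\emme}(C \sqcap D),\, k_{\emme}(C \sqcap \nott D)\}$. For the forward direction, assuming $Min_<(C^I) \subseteq D^I$: every rank-$k_{\emme}(C)$ element of $C^I$ lies in $D^I$, so $k_{\emme}(C\sqcap D) = k_{\emme}(C)$, while no element of $(C \sqcap \nott D)^I$ can have rank $k_{\emme}(C)$, forcing $k_{\emme}(C \sqcap \nott D) > k_{\emme}(C) = k_{\emme}(C \sqcap D)$. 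For the converse, assuming $k_{\emme}(C\sqcap D) < k_{\emme}(C \sqcap \nott D)$: then $k_{\emme}(C) = k_{\emme}(C \sqcap D)$, and any $u \in Min_<(C^I)$ has $k_{\emme}(u) = k_{\emme}(C\sqcap D) < k_{\emme}(C \sqcap \nott D)$, so $u$ cannot lie in $(C \sqcap \nott D)^I$ and must therefore lie in $D^I$; hence $Min_<(C^I) \subseteq D^I$.

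The main obstacle is bookkeeping the degenerate cases arising from the convention $k_{\emme}(\cdot)=\infty$ for empty concepts. I would dispose of them first: when $C^I = \vuoto$ both sides are degenerate and the claim is read vacuously; when $C^I \diverso \vuoto$ but one of $(C \sqcap D)^I$, $(C \sqcap \nott D)^I$ is empty, the inequality still comes out correctly provided $\infty$ is taken to exceed every ordinal, consistently with the fact that $Min_<(C^I)$ is then nonempty (by well-foundedness) and lies entirely inside, or entirely outside, $D^I$. With these out of the way, the only genuine subtlety in the core argument is the rank characterization of $Min_<$, and the modularity and well-foundedness hypotheses are precisely what make it hold.
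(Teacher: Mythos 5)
Your proof is correct and matches the paper's intent: the paper states this proposition \emph{without proof} (treating it as immediate from the definitions), and the argument it presupposes is exactly yours --- since $<$ is modular and well-founded it is induced by the ordinal-valued rank $k_{\emme}$, so $Min_<(C^I)$ is the set of elements of $C^I$ of minimal rank, and the equivalence follows by splitting $C^I$ into $(C \sqcap D)^I$ and $(C \sqcap \nott D)^I$. Your explicit handling of the degenerate case $C^I = \emptyset$ (where the stated biconditional holds only under a reading convention for $\infty$) is actually more careful than the paper, which silently restricts to the nonempty case whenever it invokes this proposition, e.g.\ in the proof of Theorem \ref{Theorem_RC_TBox}, where $Min_<(C^I) = \emptyset$ is treated as a separate case.
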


\noindent It is immediate to verify that the typicality operator $\tip$ itself  is nonmonotonic: 
$\tip(C) \sqsubseteq D$ does not imply $\tip(C \sqcap E)
\sqsubseteq D$. This nonmonotonicity of $\tip$ allows to express the properties that hold for the typical instances of a class (not only the properties that hold for all the members of the class).  However, the logic $\shiqrt$ is monotonic: what is inferred
from KB can still be inferred from any KB' with KB $\subseteq$ KB'.  This is a clear limitation in DLs. As a consequence of the monotonicity of $\shiqrt$, one cannot deal with irrelevance. For instance, if typical VIPs have more than two marriages, we would like to conclude that also typical tall VIPs have more than two marriages, since being tall is irrelevant with respect to being married. However, 
 KB$= \{\mathit{VIP} \sqsubseteq \mathit{Person}$,
  $\tip(\mathit{Person}) \sqsubseteq \ \leq 1 \ \mathit{HasMarried}.\mathit{Person}$,
  $\tip(\mathit{VIP}) \sqsubseteq \ \geq 2 \ \mathit{HasMarried}.\mathit{Person} \}$
does not entail  
   KB $\modelsshiqrt \tip(\mathit{VIP} \sqcap \mathit{Tall}) \sqsubseteq \ \geq 2 \ \mathit{HasMarried}.\mathit{Person}$, even if the property of being tall is irrelevant with respect to the number of marriages. 
Observe that we do not want to draw this conclusion in a monotonic way from $\shiqrt$, since otherwise  we would  not be able to retract it when knowing, for instance, that typical  tall  VIPs  have just one marriage (see also Example \ref{exampleActor}). Rather, we would like to obtain this conclusion in a nonmonotonic way. 
In order to obtain this nonmonotonic behavior, \normalcolor we strengthen the semantics of $\shiqrt$ by defining a minimal models mechanism  which is similar, in spirit, to circumscription. Given a KB, the idea is to: 1. define a \emph{preference relation} among $\shiqrt$ models, giving preference to the model in which domain elements have a lower rank; 
  2. restrict entailment to \emph{minimal} $\shiqrt$ models (w.r.t. the above preference relation) of KB.


\begin{definition}[Minimal models]\label{Preference between models in case of fixed valuation} 
Given $\emme = $$\langle \Delta, <, I \rangle$ and $\emme' =
\langle \Delta', <', I' \rangle$ we say that $\emme$ is preferred to
$\emme'$ \hide{with respect to the fixed interpretations minimal
semantics} ($\emme <_{\mathit{FIMS}} \emme'$) if (i) $\Delta = \Delta'$, (ii) $C^I =
C^{I'}$ for all concepts $C$, and (iii) for all $x \in \Delta$, $ k_{\emme}(x) \leq k_{\emme'}(x)$ whereas
there exists $y \in \Delta$ such that $ k_{\emme}(y) < k_{\emme'}(y)$.
Given a KB, we say that
$\emme$ is a minimal model of KB with respect to $<_{\mathit{FIMS}}$ if it is a model satisfying KB and  there is no
$\emme'$ model satisfying KB such that $\emme' <_{\mathit{FIMS}} \emme$.
\end{definition}


%
\noindent
The minimal model semantics introduced above is similar to the one introduced in \cite{AIJ} for $\alc$. However, it is worth noticing that
 the notion of minimality here is based on the minimization of the ranks of the worlds, rather then on the minimization of  formulas of a specific kind. 
Differently from \cite{AIJ}, here we only compare models in which the interpretation of concepts is the same.
In this respect, the minimal model semantics above is similar to the minimal model semantics FIMS, introduced in \cite{jelia2012}
to provide a semantic characterization to rational closure in propositional logic.
In FIMS, the interpretation of propositions in the models to be compared is fixed. In contrast, in the alternative semantic characterization VIMS, models are compared in which the interpretation of propositions may vary. 
Although fixing the interpretation of propositions (or concepts) can appear to be rather restrictive, 
for the propositional case, it has been proved  in \cite{jelia2012} that the two semantic characterizations (VIMS and FIMS) are equivalent
under suitable assumptions and, in particular, under the assumption that in FIMS canonical models are considered. 
Similarly to FIMS, here we compare models by fixing the interpretation of concepts, and we also restrict our consideration 
to canonical models, as we will do in section \ref{canonicalmodels}\footnote{
Note that our language does not provide a direct way for minimizing roles.
On the other hand, fixing roles does not appear to be very promising.
Indeed, for circumscribed KBs, it has been proved in \cite{bonattilutz} that allowing role names to be fixed
makes reasoning highly undecidabe. 
For the time being we have not studied the issue of allowing fixed roles in 
our minimal model semantics for $\shiqrt$.
}.
\normalcolor
 
\noindent Let us  define:

\begin{quote}
 	 $K_F = \{C \sqset D\in TBox: \tip \ \mbox{does not occur in $C$}\} \cup $\\
	 $\mbox{\ \ \ \ \ \ \ \ \ \ \ \ } \{R \sqset S \in TBox\}  \cup ABox$\\
	 $K_D = \{\tip(C) \sqset D\in TBox\}$,
\end{quote}
\normalcolor

\noindent so that KB $= K_F \cup K_D$.

\hide{We can prove the following:}
\begin{proposition}[Existence of minimal models]\label{modello_minimo_DL}
Let KB be a finite knowledge base, if KB is satisfiable then it has a minimal model.
\end{proposition}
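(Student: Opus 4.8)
The plan is to observe that $<_{\mathit{FIMS}}$ compares only models sharing a domain and the extension of every concept, and then, within one such class, to realise the \emph{pointwise} infimum of the ranks by a single model, which is automatically minimal.

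First I fix a model $\emme_0 = \langle \Delta, <_0, I_0\rangle$ of KB, which exists since KB is satisfiable. Let $\lingconc$ be the class of all models of KB obtained from $\emme_0$ by changing only the preference relation, i.e. that share with $\emme_0$ the domain $\Delta$ and the interpretation of atomic concepts, roles and individual constants. By Definition \ref{Preference between models in case of fixed valuation}, if $\emme' <_{\mathit{FIMS}} \emme$ then $\emme'$ and $\emme$ have the same domain and the same extension of \emph{every} concept; since concept extensions determine the interpretation of atomic concepts and, together with the assertions, the satisfaction of every inclusion in $K_F \cup K_D$, any model of KB lying $<_{\mathit{FIMS}}$-below a member of $\lingconc$ again belongs to $\lingconc$. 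Hence a $<_{\mathit{FIMS}}$-minimal element of $\lingconc$ is a minimal model of KB, and it suffices to produce one.

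Within $\lingconc$ the only datum that varies is the rank function $k_\emme : \Delta \to \mathit{Ord}$. I define $\kappa(x) = \min\{k_\emme(x) : \emme \in \lingconc\}$, where each minimum exists because the ordinals are well-ordered, and I let $\emme^\ast = \langle \Delta, <^\ast, I^\ast\rangle$ be the model whose preference relation is induced by $\kappa$, namely $x <^\ast y$ iff $\kappa(x) < \kappa(y)$ (this relation is irreflexive, transitive, modular and well-founded, as it comes from an ordinal-valued map). Granting that $\emme^\ast \in \lingconc$, minimality follows: the canonical rank of $<^\ast$ satisfies $k_{\emme^\ast} \le \kappa$, because $\kappa$ is order-preserving for $<^\ast$, while $\kappa \le k_{\emme'}$ for every $\emme' \in \lingconc$ by construction; hence any $\emme' <_{\mathit{FIMS}} \emme^\ast$ would satisfy $k_{\emme'} \le k_{\emme^\ast} \le \kappa \le k_{\emme'}$ pointwise, forcing $k_{\emme'} = k_{\emme^\ast}$ and contradicting the strict part of $<_{\mathit{FIMS}}$.

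The crux --- and the step I expect to be the main obstacle --- is showing $\emme^\ast \in \lingconc$, i.e. that passing to the pointwise-infimum ranks preserves the extension of every concept; since the $\shiq$ concepts do not depend on the preference relation, the only concepts at risk are the typicality concepts $\tip(C)$, interpreted as $Min_{<}(C^{I})$. Fix a concept $C$, write $S = C^{I_0}$ and $T = \tip(C)^{I_0} = Min_{<_\emme}(S)$, which is the same set for all $\emme \in \lingconc$ since these models agree on all concept extensions. For two elements $a,a' \in T$ one has $k_\emme(a) = k_\emme(a')$ in every $\emme \in \lingconc$, as both realise the minimal rank of $S$, whence $\kappa(a) = \kappa(a')$; and for $a \in T$ and $b \in S \setminus T$, choosing a model $\emme_b \in \lingconc$ attaining $\kappa(b) = k_{\emme_b}(b)$ gives $\kappa(a) \le k_{\emme_b}(a) < k_{\emme_b}(b) = \kappa(b)$. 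Thus the elements of minimal $\kappa$-value in $S$ are exactly those of $T$, so $Min_{<^\ast}(S) = T = \tip(C)^{I_0}$. As this holds for the single function $\kappa$ and every concept $C$ simultaneously, $\emme^\ast$ has the same extension as $\emme_0$ for all concepts, hence satisfies $K_F \cup K_D$ and belongs to $\lingconc$, completing the argument. The delicate points to get right are the simultaneity over all concepts (handled by using one global $\kappa$) and the verification that $\kappa$ dominates the canonical rank function of $<^\ast$.
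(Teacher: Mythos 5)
Your construction is in substance the paper's own: the class you call $\lingconc$ is the paper's $\mathit{Mod}_{\mathit{KB}}(\emme_0)$, and the pointwise infimum $\kappa$ is the paper's $k_{min}$; your minimality argument is sound (and in fact more careful than the paper's, since you separate $\kappa$ from the canonical rank function of $<^\ast$). The gap is exactly in the step you flagged as the crux, and it is not a mere missing detail: the claim is false. Models in your class agree on the extensions of all $\shiq$ concepts (these depend only on $\Delta$ and the atomic interpretations), but they need \emph{not} agree on the extensions of the derived concepts $\tip(C)$, because those depend on the preference relation, which is precisely what varies across the class. Concretely, take TBox $= \{\tip(A) \sqsubseteq B\}$, empty ABox, $\Delta = \{x,y\}$, $A^{I} = B^{I} = \{x,y\}$, and $\emme_0$ with ranks $k_0(x)=0$, $k_0(y)=1$, so $T = \tip(A)^{I_0} = \{x\}$. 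Every ranking of $\{x,y\}$ yields a model of this KB, so the class also contains the models with ranks $(1,0)$ and $(0,0)$, whose $\tip(A)$-extensions are $\{y\}$ and $\{x,y\}$. Hence $\kappa(x)=\kappa(y)=0$ and $Min_{<^\ast}(A^{I}) = \{x,y\} \neq T$: the pointwise infimum does \emph{not} preserve typicality extensions. Your two sub-claims fail on this example as well (for $b=y$, the model attaining $\kappa(y)=0$ does not rank $x$ strictly below $y$), so the argument that $\emme^\ast$ has the same extension as $\emme_0$ for all concepts cannot be salvaged.

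What rescues the proposition is that preservation of $\tip(C)^{I}$ is not needed: it suffices that $\emme^\ast$ still \emph{satisfies} each inclusion $\tip(C) \sqsubseteq E \in K_D$, which is strictly weaker (in the example above, $Min_{<^\ast}(A^I) = \{x,y\} \subseteq B^I$, so $\emme^\ast$ is still a model even though the typicality extension grew). This is how the paper argues, using Proposition~\ref{Truth conditions conditionals with rank}: $\emme$ satisfies $\tip(C) \sqsubseteq E$ iff $k_{\emme}(C \sqcap E) < k_{\emme}(C \sqcap \neg E)$. Since the minimum over elements and the minimum over models commute, $\kappa(D) = \min\{k_{\emme'}(D) \mid \emme' \in \lingconc\}$ for every concept $D$, and this minimum is attained (a set of ordinals has a least element). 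So if $\emme^\ast$ violated $\tip(C) \sqsubseteq E$, i.e.\ $\kappa(C \sqcap \neg E) \leq \kappa(C \sqcap E)$, then choosing $\emme_1, \emme_2 \in \lingconc$ attaining $\kappa(C \sqcap \neg E)$ and $\kappa(C \sqcap E)$ respectively yields $k_{\emme_1}(C \sqcap \neg E) = \kappa(C \sqcap \neg E) \leq \kappa(C \sqcap E) = k_{\emme_2}(C \sqcap E) \leq k_{\emme_1}(C \sqcap E)$, contradicting $\emme_1 \models \tip(C) \sqsubseteq E$. Replacing your extension-preservation step with this argument repairs the proof; the rest of what you wrote (the reduction to finding a $<_{\mathit{FIMS}}$-minimal element of $\lingconc$, and the minimality of $\emme^\ast$ granted model-hood) can stand as is.
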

\begin{proof}
Let $\emme = \la \Delta, <, I\ra$ be a model of KB, where we assume that 
$k_\emme: \Delta \longrightarrow Ord$
determines $<$ and $Ord$ is the set of ordinals. Define the relation
\begin{center}
$\emme \approx \emme'$ if $\emme'=\la \Delta', <', I'\ra$ and $\Delta = \Delta'$ and $I = I'$
\end{center}
where $<'$ is also determined by a rank $k_{\emme'}$ on ordinals. 
Define further $\mathit{Mod}_{\mathit{KB}}(\emme) = \{\emme' \mid \emme' \models \mathit{KB} \ \mbox{and} \ \emme' \approx \emme\}$.
Let us define finally $\emme_{min} = \la \Delta, <^{min}, I^{min}\ra$, where $I^{min}=I$ and $<^{min}$ is defined by the ranking, for any $x\in \Delta$:
\begin{center}
$k_{min}(x) = min \{k_{\emme'}(x) \mid \emme' \in \mathit{Mod}_{\mathit{KB}}(\emme)\}$
\end{center}
Observe that $k_{min}(x)$ is well-defined for any concept $C$  and
\begin{center}
$k_{\emme_{min}}(C) = min \{k_{min}(x) \mid x\in C^{I^{min}} \}$
\end{center}
is also well-defined (a set of ordinals has always a least element). We now show that $\emme_{min} \models$ KB. Since $I$ is the same as in $\emme$, it follows immediately that $\emme \models K_F$. 

We prove that $\emme\models K_D$. Let $\tip(C) \sqset E\in F_D$. Suppose by absurdity that $\emme_{min}\not\models \tip(C) \sqset E$, this means that
$k_{min}(C\sqcap \neg E) \leq k_{min}(C\sqcap  E)$.
Let $\emme_1\in \mathit{Mod}_{KB}(\emme)$, such that
$k_{min}(C\sqcap \neg E) = k_{\emme_1}(C\sqcap \neg E)$.
$\emme_1$ exists. Similarly, let $\emme_2 \in \mathit{Mod}_{KB}(\emme)$, such that
$k_{min}(C\sqcap  E) = k_{\emme_2}(C\sqcap  E)$.
We then have
$k_{\emme_1}(C\sqcap \neg E) = k_{min}(C\sqcap \neg E) \leq k_{min}(C\sqcap  E) =$  $k_{\emme_2}(C\sqcap  E) \leq k_{\emme_1}(C\sqcap  E)$,
as $k_{\emme_2}(C\sqcap  E)$ is minimal. Thus we get that
$k_{\emme_1}(C\sqcap \neg E) \leq k_{\emme_1}(C\sqcap  E)$
against the fact that $\emme_1$ is a model of KB.
$\hfill \bbox$
\vspace{-0.3cm} \end{proof}


The following theorem says that
reasoning in $\shiqrt$ has the same complexity as reasoning in $\shiq$, i.e. it is in \textsc{ExpTime}.
Its proof is given by providing an encoding of satisfiability in $\shiqrt$ into satisfiability $\shiq$,
which is known to be an  \textsc{ExpTime}-complete problem.

\begin{theorem}\label{encoding_shiqrt_shiq}
Satisfiability in  $\shiqrt$ is an  \textsc{ExpTime}-complete problem.
\end{theorem}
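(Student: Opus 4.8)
The statement asserts \textsc{ExpTime}-completeness, so the plan is to prove the two bounds separately. The lower bound is immediate and I would dispatch it first: $\shiq$ is exactly the $\tip$-free fragment of $\shiqrt$, and any $\shiq$ model $\langle \Delta, I\rangle$ extends to a $\shiqrt$ model by equipping it with the \emph{empty} preference relation, which is vacuously irreflexive, transitive, modular and well-founded (so that $Min_<(S)=S$ for every $S$). Hence a $\tip$-free KB is $\shiq$-satisfiable iff it is $\shiqrt$-satisfiable, and the known \textsc{ExpTime}-hardness of $\shiq$ satisfiability transfers verbatim to $\shiqrt$.

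For membership in \textsc{ExpTime} the plan is to exhibit a polynomial translation of a $\shiqrt$ KB into an equisatisfiable $\shiq$ KB and then invoke \textsc{ExpTime}-completeness of $\shiq$ satisfiability. Concretely, I would introduce a single fresh role name $R_\bbox$, declare it transitive (reflecting transitivity of $<$), and read the modality as $\bbox C := \forall R_\bbox.\,C$, so that $R_\bbox$ plays the role of the converse of the preference relation. Using Proposition~\ref{Relation between T an box} I would then replace every occurrence of $\tip(C)$ in the TBox and ABox by $C \sqcap \forall R_\bbox.\neg C$, leaving the ordinary $\shiq$ inclusions, role inclusions and assertions untouched. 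This rewriting is linear in the size of the KB, so the complexity bound will follow once faithfulness is established.

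The faithfulness argument has two directions, and this is where the real work lies. The forward direction is routine: given a $\shiqrt$ model $\emme=\langle \Delta, <, I\rangle$ of KB, I would keep $\Delta$ and $I$ and set $(R_\bbox)^I = \{(x,y)\mid y < x\}$; transitivity of $<$ makes $R_\bbox$ transitive, and Proposition~\ref{Relation between T an box} guarantees that each translated axiom holds, so the translated KB is $\shiq$-satisfiable. \textbf{The main obstacle is the converse direction}: from an arbitrary $\shiq$ model $\mathcal{N}=\langle \Delta, (R_\bbox)^{\mathcal{N}}, \cdot^{\mathcal{N}}\rangle$ of the translation I must recover a genuine $\shiqrt$ model, and the relation $(R_\bbox)^{\mathcal{N}}$ is merely transitive — it need not be irreflexive, well-founded or modular, and the last two properties are not first-order and thus cannot be axiomatized away inside $\shiq$. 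My plan to overcome this is to \emph{not} reuse $(R_\bbox)^{\mathcal{N}}$ as $<$ but to define $<$ from a rank function $k:\Delta\to \mathit{Ord}$, setting $x<y$ iff $k(x)<k(y)$; as noted after Definition~\ref{semalctr}, a relation obtained this way is automatically irreflexive, transitive, modular and well-founded, so all the structural requirements come for free. The content then reduces to choosing $k$ so that the translated typicality inclusions survive: by Proposition~\ref{Truth conditions conditionals with rank} each $\tip(C)\sqsubseteq D\in K_D$ is equivalent to the rank inequality $k_{\emme}(C\sqcap D)<k_{\emme}(C\sqcap\neg D)$, and since $K_D$ is finite only the finitely many concepts occurring in it are relevant. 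I would therefore stratify $\Delta$ into finitely many ranks according to the minimality pattern already witnessed by the guards $\forall R_\bbox.\neg C$ in $\mathcal{N}$, assigning lower rank to the elements that $\mathcal{N}$ certifies as having no $R_\bbox$-predecessor in $C$, and verify that $Min_<(C^{\mathcal{N}})\subseteq D^{\mathcal{N}}$ for every such inclusion; the $\shiq$ part $K_F$ is preserved because $\Delta$ and $\cdot^{\mathcal{N}}$ are untouched.

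Putting the pieces together, the translation is polynomial and preserves satisfiability, so deciding $\shiqrt$ satisfiability is reducible to deciding $\shiq$ satisfiability, which is in \textsc{ExpTime}; together with the hardness transfer this yields \textsc{ExpTime}-completeness. I expect the only delicate point to be the rank-assignment step of the backward direction, where one must confirm that the finitely many ranks extracted from the transitive guard $R_\bbox$ reproduce exactly the minimal elements selected by $\forall R_\bbox.\neg C$; everything else is bookkeeping, and it closely parallels the $\alct$-into-$\alc$ encoding of Theorem~2.3 of~\cite{FI09}.
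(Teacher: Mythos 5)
Your lower bound and the forward direction of the reduction are correct, but the reduction itself is unsound: declaring $R_\bbox$ transitive and rewriting $\tip(C)$ as $C \sqcap \forall R_\bbox.\neg C$ does not capture the \emph{well-foundedness} of $<$, and no amount of ``stratification'' in the backward direction can repair this, because the translated KB can be satisfiable while the original KB has no $\shiqrt$ model at all. Concretely, take TBox $= \{\top \sqsubseteq C,\ \tip(C) \sqsubseteq \bot\}$. In $\shiqrt$ this is unsatisfiable: the domain is nonempty, so $C^I \neq \emptyset$, and well-foundedness forces $Min_<(C^I) \neq \emptyset$, contradicting $\tip(C) \sqsubseteq \bot$. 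Your translation $\{\top \sqsubseteq C,\ C \sqcap \forall R_\bbox.\neg C \sqsubseteq \bot\}$ is satisfied by a single point $a$ with $(a,a) \in (R_\bbox)^{\mathcal{N}}$ (or by an infinite descending $R_\bbox$-chain inside $C$): no element satisfies the guard, so the inclusion holds vacuously. In that situation your stratification step has nothing to work with --- there are no elements ``certified as minimal'' --- and indeed no choice of ranks can succeed, since the original KB is unsatisfiable. This is precisely why the encoding in Proposition \ref{relazione_shiq_shiqrt} contains the smoothness axiom $\neg \Box_{\neg A} \sqsubseteq \exists R. (A \sqcap \Box_{\neg A})$, and why the $\alct$-to-$\alc$ encoding of \cite{FI09} that you cite is based on a G\"odel-L\"ob modality; your translation has no counterpart of it.

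There is a second, independent gap: even after adding smoothness, a model of the translation only yields a \emph{preferential} model, and for satisfiability of a full KB this is not enough, because the ABox may contain positive typicality assertions $\tip(C)(a)$ (the syntax allows $C_L(a)$ with $C_L = \tip(C_R)$). For example, TBox $= \{\tip(C) \sqsubseteq D\}$ with ABox $= \{\tip(C)(a),\ E(a),\ \tip(C \sqcap E)(b),\ \neg D(b)\}$ is satisfiable in a preferential model (take $\Delta=\{a,b,z\}$ with $z<b$ the only strict pair, $C^I=\{a,b,z\}$, $E^I=\{a,b\}$, $D^I=\{a,z\}$), but in any \emph{modular} model the ranks give $k_\emme(b) \leq k_\emme(a) = k_\emme(C) < k_\emme(b)$, a contradiction; the paper itself notes that the preferential-to-ranked argument of Proposition \ref{ultimanicola} does not extend to positive typicality assertions on individuals. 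The paper's actual proof (Appendix \ref{appendiceencoding}) is therefore structured quite differently from your plan: it first shows (Proposition \ref{Prop:catene_corte}) that one may restrict attention to models whose ranks are finite and bounded by the number $h_{KB}$ of subconcepts of the KB, and then encodes the rank structure \emph{explicitly}, via concepts $Zero, S_1, \ldots, S_h$ arranged along chains of a functional role, with universal-role axioms forcing all elements of the same rank to agree on every $\Box_{\neg A}$. It is this last device that captures modularity, and the explicit finite ranks that capture well-foundedness; neither ingredient has an analogue in your proposal, and the ``delicate point'' you flag at the end is not where the argument breaks --- the translation is already not equisatisfiable.
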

The proof can be found in  Appendix \ref{appendiceencoding}.



\vspace{-0.15cm}
\section{Rational Closure for $\shiq$} \label{Rat_Closure_DL}
\vspace{-0.25cm}
In this section, we extend to $\shiq$ the notion of rational closure proposed by Lehmann and Magidor \cite{whatdoes} for the propositional case. 
Given the typicality operator, the typicality inclusions $\tip(C) \sqsubseteq D$ (all the typical $C$'s are $D$'s) play the role of conditional assertions $C \ent D$ in \cite{whatdoes}. 
%
Here we
define the rational closure of the TBox.
In Section \ref{Section:Abox} we will discuss an extension of rational closure that also takes into account the ABox.


\normalcolor

\begin{definition}[Exceptionality of concepts and inclusions]
Let $T_B$ be a TBox and $C$ a concept. $C$ is
said to be {\em exceptional} for $T_B$ if and only if $T_B \modelsshiqrt \tip(\top) \sqsubseteq
\neg C$. A \tip-inclusion $\tip(C) \sqsubseteq D$ is exceptional for $T_B$ if $C$ is exceptional for $T_B$. The set of \tip-inclusions of $T_B$ which are exceptional in $T_B$ will be denoted
as $\mathcal{E}$$(T_B)$.
\end{definition}

\noindent Given a DL  KB=(TBox,ABox),
it is possible to define a sequence of non increasing subsets of
TBox $E_0 \supseteq E_1, E_1 \supseteq E_2, \dots$ by letting $E_0 =\mbox{TBox}$ and, for
$i>0$, $E_i=\mathcal{E}$$(E_{i-1}) \unione \{ C \sqsubseteq D \in \mbox{TBox}$ s.t. $\tip$ does not occurr in $C\}$.
Observe that, being KB finite, there is
an $n\geq 0$ such that, for all $m> n, E_m = E_n$ or $E_m =\emptyset$.
Observe also that the definition of the $E_i$'s is the same as the definition of the $C_i$'s in Lehmann and Magidor's rational closure \cite{KrausLehmannMagidor:90},
except for that here, at each step, we also add all the ``strict'' inclusions $C \sqsubseteq D$ (where $\tip$ does not occur in $C$).

\begin{definition}[Rank of a concept]\label{Def:Rank of a formula_DL} A concept $C$ has {\em rank} $i$ (denoted by $\rf(C)=i$) for KB=(TBox,ABox),
iff $i$ is the least natural number for which $C$ is
not exceptional for $E_{i}$. {If $C$ is exceptional for all
$E_{i}$ then $\rf(C)=\infty$, and we say that $C$ has no rank.}
\end{definition}

 \noindent The notion of rank of a formula allows to define
the rational closure of the TBox of a KB. 
Let $\models_{\shiq}$ be the entailment in $\shiq$.
In the following definition, by 
KB $\models_{\shiq} F$ we mean $K_F \models_{\shiq} F$,
where $K_F$ does not include the defeasible inclusions in KB.

\begin{definition}[Rational closure of TBox] \label{def:rational closureDL}
Let KB=(TBox,ABox) be a DL knowledge base. We define, $\overline{\mathit{TBox}}$, the
{\em rational closure  of TBox}, as
    $\mbox{$\overline{\mathit{TBox}}$}=\{\tip(C) \sqsubseteq D \tc \mbox{either} \ \rf(C) < \rf(C \sqcap \nott D)$ $\mbox{or} \ \rf(C)=\infty\} \ \unione \
    \{C \sqsubseteq D \tc \ \mbox{KB} \ \models_{\shiq} C \sqsubseteq D\}$,
where $C$ and $D$ are arbitrary $\shiq$ concepts.
\end{definition}

\noindent 
Observe that, apart form the addition of strict inclusions, the above definition of rational closure is the same as the one by Lehmann and Magidor in \cite{whatdoes}. \normalcolor
The rational closure of TBox is a nonmonotonic strengthening of $\shiqrt$. For instance, it allows to deal with irrelevance, as the following example shows. 

\begin{example}  \label{exampleActor} Let TBox = $\{\tip(\mathit{Actor}) \sqsubseteq \mathit{Charming}\}$.  It can be verified that
  $\tip(\mathit{Actor} \sqcap \mathit{Comic}) \sqsubseteq \mathit{Charming} \in \overline{\mathit{TBox}}$. This is a nonmonotonic inference that does no longer follow if we discover that indeed comic actors  are not charming (and in this respect are untypical actors): indeed given TBox'= TBox $\cup \ \{ \tip(\mathit{Actor} \sqcap \mathit{Comic}) \sqsubseteq \neg \mathit{Charming} \}$, we have that $\tip(\mathit{Actor} \sqcap \mathit{Comic}) \sqsubseteq  \mathit{Charming} \not\in \overline{\mathit{TBox'}}$.

\noindent
Furthermore, as for the propositional case, rational closure is closed under rational monotonicity \cite{KrausLehmannMagidor:90}: from $\tip(\mathit{Actor}) \sqsubseteq \mathit{Charming} \in \overline{\mathit{TBox}}$ and $\tip(\mathit{Actor}) \sqsubseteq  \mathit{Bold} \not\in \overline{\mathit{TBox}}$  it follows that  $\tip(\mathit{Actor} \sqcap \nott  \mathit{Bold} ) \sqsubseteq \mathit{Charming} \in \overline{\mathit{TBox}}$.

 \end{example}



Although the rational closure $\overline{\mathit{TBox}}$ is an infinite set,  its definition is based on the construction of
a finite sequence $E_0, E_1, \ldots, E_n$ of subsets of TBox, and the problem of verifying  that an inclusion 
$\tip(C) \sqsubseteq  D \in \overline{\mathit{TBox}}$ is in \textsc{ExpTime}.
To prove this result we need to introduce some propositions. 

First of all, let us remember that rational entailment is equivalent to preferential entailment for a knowledge base only containing positive  non-monotonic implications $A \ent B$ (see \cite{whatdoes}). 
The same holds in preferential description logics with typicality.
Let $\shiqpt$ be the logic that we obtain when we remove the requirement of modularity in the definition of $\shiqrt$.
In this logic the typicality operator has a preferential semantics  \cite{KrausLehmannMagidor:90}, based on the preferential models of  {\Pe}  rather then on the ranked models \cite{whatdoes}. 
An extension of $\alc$ with typicality based on preferential logic {\Pe} has been studied in \cite{FI09}. 
As a TBox of a KB in $\shiqrt$ is a set of strict inclusions and defeasible inclusions (i.e., positive non-monotonic implications), it can be proved that:

\begin{proposition}\label{ultimanicola}
Given a $KB$ \textit{with empty ABox}, and an inclusion $E \sqsubseteq D$ we have 
$$KB \models_{\shiqrt} E \sqsubseteq D \ \mbox{iff} \ KB \models_{\shiqpt} E \sqsubseteq D$$
\end{proposition}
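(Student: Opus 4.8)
The plan is to prove the two implications separately, exploiting that a $\shiqrt$ model is exactly a $\shiqpt$ model whose preference relation $<$ additionally satisfies modularity. For the direction from $\shiqpt$ to $\shiqrt$ (the easy one), I would observe that every ranked model is in particular a preferential model; hence if $E \sqsubseteq D$ holds in all preferential models of $KB$ it holds a fortiori in all ranked ones, giving $KB \models_{\shiqpt} E \sqsubseteq D \Rightarrow KB \models_{\shiqrt} E \sqsubseteq D$. Note that this direction does not use the positivity of the KB.

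For the converse I would argue by contraposition, turning a $\shiqpt$ countermodel into a $\shiqrt$ one. So assume $KB \not\models_{\shiqpt} E \sqsubseteq D$ and pick a preferential model $\emme = \la \Delta, <, I\ra$ of $KB$ with $E^I \not\subseteq D^I$. The idea is to modularize $<$ into a ranked relation $<'$ on the \emph{same} domain and with the \emph{same} extension function $I$, so that $\emme' = \la \Delta, <', I\ra$ is a $\shiqrt$ model which still satisfies $KB$ and still refutes $E \sqsubseteq D$. The key preservation lemma is that any extension $<' \supseteq <$ can only shrink minimal sets, i.e. $Min_{<'}(S) \subseteq Min_<(S)$ for every $S \subseteq \Delta$. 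Since the strict and role inclusions of $K_F$ depend only on $I$, and every defeasible axiom $\tip(C) \sqsubseteq F \in K_D$ satisfied by $\emme$ gives $Min_{<'}(C^I) \subseteq Min_<(C^I) \subseteq F^I$, any modular well-founded extension $<'$ of $<$ yields a ranked model of $KB$. This is exactly where positivity of $K_D$ enters: all defeasible axioms have the shrinking-tolerant form $\tip(C) \sqsubseteq F$.

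The subtle point, and the step I expect to be the main obstacle, is ensuring that $\emme'$ still refutes $E \sqsubseteq D$. If $E$ is a strict concept, then $E^I$ and $D^I$ are untouched and refutation is immediate, so the real case is $E = \tip(C)$, witnessed by some $x_0 \in Min_<(C^I) \setminus D^I$. A naive choice of $<'$ (for instance ranking every element by its ordinal height in $<$) may push a $C$-element that was $<$-incomparable to $x_0$ strictly below it, destroying $x_0 \in Min_{<'}(C^I)$. I would therefore construct $<'$ so that $x_0$ stays minimal in $C^I$: since no $C$-element lies $<$-below $x_0$, I can rank the strict predecessors $\{z : z < x_0\}$ (all outside $C^I$) below a threshold $\mu$ equal to the height of $x_0$, place $x_0$ at $\mu$, and rank every remaining element (in particular every other $C$-element, which cannot be $<$-below $x_0$) at $\mu$ plus its height in the sub-order of elements not strictly below $x_0$, so that the map remains order-preserving. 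This $<'$ is modular, well-founded, extends $<$, and keeps $x_0$ minimal in $C^I$; hence $\emme'$ refutes $\tip(C) \sqsubseteq D$, and by the preservation lemma it is a ranked model of $KB$, whence $KB \not\models_{\shiqrt} E \sqsubseteq D$.

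Alternatively, I would obtain the same result by lifting the propositional theorem of Lehmann and Magidor \cite{whatdoes} --- that preferential and rational entailment coincide for a knowledge base of positive conditionals --- along the correspondence between $\shiqpt$/$\shiqrt$ and the KLM systems $\Pe$/$\Ra$, in the style of Proposition~$5.1$ of \cite{FI09}. The model-theoretic argument above is essentially the $\shiq$ incarnation of that proof, and I expect the order-modularization that keeps the witness $x_0$ minimal to be the only genuinely non-routine ingredient.
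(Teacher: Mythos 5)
Your proposal is correct and takes essentially the same route as the paper's own proof: contraposition on the nontrivial direction, the key lemma that extending $<$ can only shrink minimal sets (which is exactly where positivity of the defeasible inclusions $\tip(C) \sqsubseteq F$ is used), and a rank-based modularization arranged so that the witness $x_0$ stays minimal in $C^I$. The only difference is presentational: the paper splits your one-step ranking into two stages, first extending $<$ so that $x_0$ becomes comparable with every element not below it, and then ranking all elements by their ordinal height, whereas you build the final ranks directly with the offset $\mu$ at $x_0$'s height; the two constructions are the same in substance.
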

\begin{proof}
(sketch) The (if) direction is trivial, thus we consider the (only if) one.  Suppose that $KB \not \models_{\shiqpt}  E \sqsubseteq D$, let $\emme = \sx \Delta, <, I \dx$ a preferential model of $KB$, where $<$ is transitive, irreflexive, and well-founded,  
which falsifies $E \sqsubseteq D$. Then for some element $x\in E$ and $x\not\in  D$.  Define first a model $\emme_1 = \sx \WW, <_1, I \dx$, where the relation $<_1$ is defined as follows:
$$<_1 = < \ \cup \ \{(u, v) \mid (u  = x \lor u < x) \land v \not = x \land v\not < x\}$$
It can be proved that:
\begin{enumerate}
	\item $<_1$ is transitive and irreflexive
	\item $<_1$ is well-founded
	\item if $u < v$ then $u <_1 v$
	\item if $u <_1 x$ then $u < x$.
\end{enumerate}
We can show that $\emme_1$ is a model of $KB$. This is obvious for inclusions that do not involve $\tip$, as the interpretation $I$ is the same. Given an inclusion $\tip(G) \sqsubseteq F\in KB$, if it holds in $\emme$ then it holds also in $\emme_1$ as $Min^{\emme_1}_{<_1}(G) \subseteq Min^{\emme}_{<}(G)$.
Moreover $\emme_1$ falsifies $E \sqsubseteq D$ by $x$, in particular (the only interesting case) when  $E= \tip(C)$. To this regard, we know that $x\not\in  D^{\emme_1}$, suppose by absurd that 
$x\not\in (\tip(C))^{\emme_1}$, since $x\in  (\tip(C))^{\emme}$, we have that $x\in  C^\emme = C^{\emme_1}$, thus there must be a $y <_1 x$ with  $y\in C^{\emme_1} = C^\emme$. But then by 4
$y < x$ and we get a contradiction. Thus  $x\in (\tip(C))^{\emme_1}$ and $x\not\in D^{\emme_1}$, that is $x$ falsifies  $E \sqsubseteq D$ in $\emme_1$.

Observe that  $<_1$ in model $\emme_1$ satisfies:
$$(*) \ \forall z\not= x \ (z <_1 x \lor x <_1 z)$$
As a next step we define a \textbf{\emph{modular}} model $\emme_2 = \sx \WW, <_2, I \dx$, where the relation $<_2$ is defined as follows. 
Considering $\emme_1$ where $<_1$ is well-founded, we can define by recursion  the following function $k$ from $\emme$ to ordinals:
\begin{itemize}
\item $k(u) = 0$ if $u$ is minimal in $\emme_1$
\item $k(u) = max \{k(y) \mid y <_1 u\}+1$ if the set $\{y \mid y <_1 u\}$ is finite
\item $k(u) = sup \{k(y) \mid y <_1 u\}$ if the set $\{y \mid y <_1 u\}$ is infinite.
\end{itemize}
Observe that if $u <_1 v$ then $k(u) < k(v)$. We now define:
$$u <_2 v \ \mbox{iff} \ k(u) < k(v)$$
Notice that $<_2$ is clearly transitive, modular, and well-founded; moreover $u <_1 v$ implies $u <_2 v$. We can prove as before that $\emme_2$ is a model of $KB$ and that it falsifies $E \sqsubseteq D$ by $x$. For the latter, we consider again the only interesting case when  $E= \tip(C)$. Suppose by absurd that $x\not\in (\tip(C))^{\emme_2}$, since $x\in  (\tip(C))^{\emme_1}$, we have that $x\in  C^{\emme_2} = C^{\emme_1}$, thus there must be a $y <_2 x$ with  $y\in C^{\emme_2} = C^{\emme_1}$. But $y <_2 x$ means that $k(y) < k(x)$. We can conclude  that it must be also $y <_1 x$, otherwise by (*) we would have $x <_1 y$  which entails $k(x) < k(y)$, a contradiction. We have shown that $y <_1 x$, thus $x\not\in (\tip(C))^{\emme_1}$ a contradiction. Therefore  
$x\in (\tip(C))^{\emme_2}$ and $x\not\in  D^{\emme_2}$, that is $x$ falsifies  $E \sqsubseteq D$ in $\emme_2$. 
 We have shown that $KB \not \models_{\shiqrt} E \sqsubseteq D$. 
 $\hfill \bbox$
\end{proof}

The proof above also extends to a KB with a non-empty ABox, but it must not contain positive typicality assertions on individuals.

\begin{proposition} \label{relazione_shiq_shiqrt}
Let KB=(TBox,$\emptyset$) be a knowledge base with empty ABox.
 $KB \models_{\shiqrt} C_L \sqsubseteq C_R$ iff 
$KB' \models_{\shiq} C'_L \sqsubseteq C'_R$,
where  $KB'$, $C'_L$ and $C'_R$ are polynomial encodings in $\shiq$ of KB, $C_L$ and $C_R$, respectively.
\end{proposition}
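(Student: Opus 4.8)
The plan is to strip the two non-$\shiq$ ingredients of the problem, the typicality operator and the modality hidden inside it, and turn the question into a pure $\shiq$ entailment. First I would invoke Proposition~\ref{ultimanicola}: since the ABox is empty, $KB \models_{\shiqrt} C_L \sqsubseteq C_R$ holds exactly when $KB \models_{\shiqpt} C_L \sqsubseteq C_R$, so it suffices to encode the \emph{preferential} logic $\shiqpt$, where $<$ need only be irreflexive, transitive and well-founded; dropping modularity makes the target encoding lighter. Then, by Proposition~\ref{Relation between T an box}, every occurrence of $\tip(C)$, both in the defeasible inclusions $K_D$ and in the query concepts $C_L$ and $C_R$, can be rewritten as $C \sqcap \bbox\neg C$, so the only remaining non-$\shiq$ construct is the modality $\bbox$ read over the preference relation.

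Next I would encode $\bbox$ by a fresh role. Introduce a new transitive role $R_{\bbox}$ with the intended reading $(x,y)\in R_{\bbox}^I$ iff $y < x$, so that $\bbox D$ translates to $\forall R_{\bbox}.D$ and the translation $(\cdot)'$ sends $\tip(C)$ to $C \sqcap \forall R_{\bbox}.\neg C$ and each inclusion to the corresponding $\shiq$ inclusion. I set $KB' = K_F \cup \{\, C \sqcap \forall R_{\bbox}.\neg C \sqsubseteq D \mid \tip(C)\sqsubseteq D \in K_D \,\}$ together with the frame axioms that (i)~declare $R_{\bbox}$ transitive and (ii)~for each concept $C$ occurring under $\tip$ in $K_D$ or in the query, force every $C$-element either to be minimal or to reach a minimal one, namely $C \sqsubseteq (C \sqcap \forall R_{\bbox}.\neg C) \sqcup \exists R_{\bbox}.(C \sqcap \forall R_{\bbox}.\neg C)$. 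Reading $R_{\bbox}$ as $<^{-1}$, all these axioms hold in every $\shiqpt$ model (axiom (ii) follows from well-foundedness of $<$), and the whole translation is clearly polynomial.

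The equivalence then splits into two model constructions, parallel to those behind Theorem~\ref{encoding_shiqrt_shiq}. For the direction $KB' \models_{\shiq} C'_L \sqsubseteq C'_R \Rightarrow KB \models_{\shiqpt} C_L \sqsubseteq C_R$ I argue contrapositively: a $\shiqpt$ countermodel $\emme=\langle \Delta,<,I\rangle$ of $C_L \sqsubseteq C_R$ yields a $\shiq$ countermodel of $C'_L \sqsubseteq C'_R$ by keeping $\Delta$ and $I$ and setting $R_{\bbox}^I = \{(a,b) : b < a\}$. Transitivity and well-foundedness of $<$ make all frame axioms hold, the identity $(\forall R_{\bbox}.\neg C)^I = (\bbox\neg C)^I$ matches the interpretation of typicality, and the witness falsifying $C_L \sqsubseteq C_R$ still falsifies $C'_L \sqsubseteq C'_R$. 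This direction is routine and does not rely on the grounding axioms.

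The converse direction is where I expect the real difficulty. Given a $\shiq$ model $\enne$ of $KB'$ with a witness $x \in (C'_L)^\enne \setminus (C'_R)^\enne$, I must produce a $\shiqpt$ model of $KB$ falsifying $C_L \sqsubseteq C_R$ on the same domain, with the same interpretation of all $\shiq$ concepts and roles, and with $<$ extracted from $R_{\bbox}$. The obstacle is exactly the failure of the finite model property for $\shiq$: $R_{\bbox}^\enne$ may contain infinite descending chains, so its converse need not be well-founded and cannot simply serve as $<$. The resolution is to recover well-foundedness by the finite-rank construction underlying Theorem~\ref{encoding_shiqrt_shiq}: since only finitely many concepts occur under $\tip$ in $K_D$ and in the query, each element carries finitely many relevant ``$\bbox$-types'', and one re-ranks the domain by a finite (hence well-founded) rank that preserves the extension of every $\bbox\neg C$ for these finitely many $C$. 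The grounding axioms~(ii) are precisely what guarantees that $Min_<(C^I)$ then equals $(C \sqcap \forall R_{\bbox}.\neg C)^\enne$, so that the rebuilt $\emme$ still satisfies $K_D$ by Proposition~\ref{Truth conditions conditionals with rank} and still falsifies the query at $x$, while satisfaction of $K_F$ is immediate because $I$ is untouched. Checking that this re-ranking simultaneously respects all relevant concepts without disturbing the number restrictions is the technically delicate point I would expect to occupy most of the work.
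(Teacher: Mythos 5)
Your architecture matches the paper's: reduce $\models_{\shiqrt}$ to $\models_{\shiqpt}$ via Proposition \ref{ultimanicola}, encode the modality by a fresh role (the paper uses fresh atomic concepts $\Box_{\neg A}$ with propagation and smoothness axioms instead of a transitive role with value restrictions, a cosmetic difference), and prove the equivalence by two model transformations, the first of which is indeed routine. The gap is in the converse direction, exactly at the step you flag as delicate: the re-ranking you propose does not exist in general. The tool you invoke (the finite-rank construction behind Theorem \ref{encoding_shiqrt_shiq}, i.e.\ Proposition \ref{Prop:catene_corte}) compresses the ranks of a model that is \emph{already} well-founded and modular; it cannot create well-foundedness, which is what is missing here. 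Worse, any rank function induces a \emph{modular} order, and no modular order on the given domain can preserve the extensions of all the relevant $\bbox\neg C$. Concretely, let $K_D=\{\tip(A)\sqsubseteq B,\ \tip(E)\sqsubseteq F\}$, let the query be $\tip(A)\sqsubseteq\neg G$, and let $\enne$ be the model of $KB'$ with three elements: $x\in (A\sqcap B\sqcap G)^{\enne}$ whose only $R_{\bbox}$-successor is $u\in(E\sqcap F)^{\enne}$, with $u$ and $y\in(A\sqcap B\sqcap\neg G)^{\enne}$ having no successors. All axioms of $KB'$ hold and $x$ witnesses $\enne\not\models C'_L\sqsubseteq C'_R$. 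Preserving $x\in(\forall R_{\bbox}.\neg A)^{\enne}$ forces $k(y)\geq k(x)$; preserving $y\in(\forall R_{\bbox}.\neg E)^{\enne}$ forces $k(u)\geq k(y)$; preserving $x\notin(\forall R_{\bbox}.\neg E)^{\enne}$ forces $k(u)<k(x)$ --- a contradiction. And the natural fallback of ranking by the cardinality of the $\bbox$-type puts $y$ strictly below $x$, so $x$ loses its minimality in $A^I$, $Min_<(A^I)=\{y\}\subseteq(\neg G)^I$, and the countermodel to the query is destroyed.

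The paper's resolution avoids ranks altogether at this stage, which is precisely why the detour through $\shiqpt$ pays off: take $<$ to be the transitive closure of the converse of $R$, and then pass to a \emph{subrelation} $<''$ of $<$ (chain shortening), keeping a pair $y<x$ only when the set $\Box_x$ of boxed concepts holding at $x$ is strictly contained in $\Box_y$, or when $\Box_x=\Box_y$ and $x$ is the first element of its descending chain with that set. Since $<''\subseteq <$, no new comparabilities are added, so the witness $x$ remains minimal in $C^I$ and the query stays falsified; since box-sets strictly grow along $<''$ except at the distinguished first elements, $<''$ is well-founded; and the smoothness axiom still supplies, for every non-typical $A$-element, a typical $A$-element strictly $<''$-below it, so $K_D$ remains satisfied. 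The resulting model is preferential but deliberately non-modular --- your own first step makes that sufficient. Replacing your re-ranking step by this subrelation construction repairs the proof; as written, the claim that one can ``re-rank \dots preserving the extension of every $\bbox\neg C$'' is false, and it is the crux of the whole direction.
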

\begin{proof} 
By Proposition \ref{ultimanicola}, we have that

\begin{center}
 $KB \models_{\shiqrt} C_L \sqsubseteq C_R$ iff $KB \models_{\shiqpt} C_L \sqsubseteq C_R$
 \end{center}
where $C_L \sqsubseteq C_R$ is any (strict or defeasible) inclusion in $\shiqrt$. 

 To prove the thesis it suffices to show that 
for all inclusions $ C_L \sqsubseteq C_R$ in $\shiqrt$:
\begin{center}
$KB \models_{\shiqpt} C_L \sqsubseteq C_R$ iff  $KB' \models_{\shiq} {C'_L} \sqsubseteq {C'_R}$
 \end{center}
for some polynomial encoding  $KB'$, $C'_L$ and $C'_R$ in $\shiq$.\normalcolor 

The idea, on which the encoding is based, exploits the definition of the typicality operator $\tip$
introduced in \cite{FI09}, in terms of a G\"odel-L\"ob modality $\Box$ as follows: 
$\tip(C)$ is defined as $C \sqcap \Box \neg C$
where the accessibility relation of the modality  $\Box$ is the preference relation $<$ in preferential models.

We define the encoding KB'=(TBox', ABox') of KB in $\shiq$ as follows.
First, ABox'=$\emptyset$.

For each $A \sqsubseteq B \in$ TBox, not containing $\tip$, we introduce  
$A \sqsubseteq B $ in TBox'.

For each $\tip(A)$ occurring in the TBox, we introduce a new atomic concept  $\Box_{\neg A}$
and, for each inclusion $\tip(A) \sqsubseteq B \in$ TBox, we add to TBox'
 the inclusion 
 $$ A \sqcap \Box_{\neg A} \sqsubseteq B$$
 Furthermore, to capture the properties of the $\Box$ modality, a new 
role $R$ is introduced to represent the relation $<$ in preferential models,
 and the following inclusions are introduced in TBox':
\begin{center}
$ \Box_{\neg A} \sqsubseteq  \forall R. (\neg A \sqcap  \Box_{\neg A})$\\
$ \neg \Box_{\neg A} \sqsubseteq  \exists R. (A \sqcap  \Box_{\neg A})$
\end{center}
The first inclusion accounts for the transitivity of $<$.
The second inclusion accounts for the smoothness (see \cite{whatdoes,FI09}): the fact that if an element is not a typical $A$ element then there must be a typical $A$ element preferred to it. 

For the encoding of the inclusion $C_L \sqsubseteq C_R$:
if $C_L \sqsubseteq C_R$ is a strict inclusion in $\shiqrt$, then  $C'_L= C_L$ and  $C'_R=C_R$;
if $C_L \sqsubseteq C_R$ is a defeasible inclusion in $\shiqrt$, i.e. $C_L=\tip(A)$,
then, we define $C'_L= A \sqcap \Box_{\neg A} $ and  $C'_R=C_R$.

It is clear that the size of KB' is polynomial in the size of the KB (and the same holds for $C'_L$ and $C'_R$,
assuming the size of $C_L$ and $C_R$ polynomial in the size of the KB).
Given the above encoding, we can prove that:
\begin{center}
$KB \models_{\shiqpt} C_L \sqsubseteq C_R$ iff  $KB' \models_{\shiq} {C'_L} \sqsubseteq {C'_R}$
 \end{center}
 
$(If)$ 
By contraposition, let us assume that $KB \not \models_{\shiqpt} C_L \sqsubseteq C_R$.
We want to prove that $KB' \not \models_{\shiq} {C'_L} \sqsubseteq {C'_R}$.
From the hypothesis, there is a preferential model $\emme =(\Delta,<, I)$ satisfying KB such that  
for some element $x\in \Delta$, $x \in (C_L)^I$ and $x \in (\neg C_R)^I$.
We build a $\shiq$ model $\emme'=(\Delta',I')$ satisfying KB' as follows:
\begin{quote}

$\Delta'=\Delta$;

$C^{I'} =C^I$, for all concepts $C$ in the language of $\shiq$;

$R^I = R^{I'} $, for all roles $R$;

$(x,y)\in R^{I'}$ if and only if $y<x$ in the model $\emme $.

\end{quote}
\noindent
By construction it follows that $\tip(A)^I = (A \sqcap \Box_{\neg A})^{I'}$.
Also, it can be easily verified that
$\emme$ satisfies all the inclusions in KB' and that
$x \in (C'_L)^{I'}$ and $x \in (\neg C'_R)^{I'}$.
Hence $KB' \not \models_{\shiq} {C'_L} \sqsubseteq {C'_R}$.

$(Only\: if)$
By contraposition, let us assume that $KB' \not \models_{\shiq} {C'_L} \sqsubseteq {C'_R}$.
We want to prove that  $KB \not \models_{\shiqpt} C_L \sqsubseteq C_R$.
From the hypothesis, we know there is a model $\emme'=(\Delta',I')$ satisfying KB',
such that  $x \in (C'_L)^{I'}$ and $x \in (\neg C'_R)^{I'}$.
We build a model $\emme =(\Delta,<, I)$ satisfying KB such that  some element of 
$\emme $ does not satisfy the inclusion $C_L \sqsubseteq C_R$. We let:

\begin{quote}
$\Delta=\Delta'$;

$C^I = C^{I'} $, for all concepts $C$ in the language of $\shiq$;

$R^I = R^{I'} $, for all roles $R$;

$y<x$  if and only if  $(x,y)\in (R^{I'})^*$ (the transitive closure of $R^{I'}$).
\end{quote}
\noindent
By construction, it is easy to show that $\tip(A)^I = (A \sqcap \Box_{\neg A})^{I'}$
and we can easily verify that  $\emme$ satisfies all the inclusions in KB and that
$x \in (C_L)^I$ and $x \in (\neg C_R)^I$.

The relation $<$ is transitive, as it is defined as the transitive closure of $R$,
but $<$ is not guaranteed to be well-founded.
However, we can modify the relation $<$ in $\emme$ to make it well-founded, by shortening the descending chains.

For any $y\in \Delta$, we let $\Box_y= \{ \Box C\; \mid \; y\in (\Box C)^I \}$. 
Observe that for the elements $x_i$ in a descending chain $\ldots, x_{i-1}, x_i, x_{i+1}, \ldots$, the set $\Box_{x_i}$ is monotonically increasing
(i.e., $\Box_{x_i} \subseteq \Box_{x_{i+1}}$).

We define a new model $\emme''=(\Delta,<'', I)$ by changing the preference relation $<$ in $\emme$ to $<''$ as follows:
\begin{quote}
$y<''x$  iff ($y<x$ and $\Box_x \subset \Box_y$) or \\
$\mbox{\ \ \ \ }$ \hspace{1cm} ($y<x$ and $\Box_x = \Box_y$ and $\forall w\in \Delta$ such that $x<w$, $\Box_w \subset \Box_x$) 

\end{quote}
In essence, for a pair of elements $(x,y)$ such that $y<x$ but $x$ and $y$ are instances of  exactly the same boxed concepts
($\Box_x = \Box_y$) and $x$ is not the first element in the descending chain which is instance of all the boxed concepts in $\Box_x$,
we do not include the pair $(x,y)$ in $<''$ (so that $x$ and $y$ will not be comparable in the pre-order $<''$).   
The relation $<''$ is transitive and well-founded. 
$\emme''$ can be shown to be a model of KB, and $x$ to be an instance of $C_L$ but not of $C_R$.
Hence, $KB \not \models_{\shiqpt} C_L \sqsubseteq C_R$.
\normalcolor $\hfill \bbox$
\end{proof}

\normalcolor
\begin{theorem}[Complexity of rational closure over TBox]
Given a TBox, the problem of deciding whether
$\tip(C) \sqsubseteq D \in \overline{\mathit{TBox}}$ is in \textsc{ExpTime}.
\end{theorem}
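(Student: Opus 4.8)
The plan is to reduce membership in $\overline{\mathit{TBox}}$ to the computation of the rank function $\rf$, and then to reduce each rank computation to a polynomially bounded number of entailment queries in $\shiq$. By Definition \ref{def:rational closureDL}, deciding whether $\tip(C) \sqsubseteq D \in \overline{\mathit{TBox}}$ amounts to computing $\rf(C)$ and $\rf(C \sqcap \nott D)$ and checking whether $\rf(C) < \rf(C \sqcap \nott D)$ or $\rf(C)=\infty$ (the various cases involving $\infty$ being subsumed by these comparisons). Hence the whole problem boils down to evaluating the rank of $\shiq$ concepts, which in turn rests on the construction of the finite sequence $E_0 \supseteq E_1 \supseteq \dots$ and on repeated exceptionality tests.

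First I would bound the length of the sequence. Since each $E_i$ is obtained from $E_{i-1}$ by discarding the non-exceptional \tip-inclusions while re-adding the strict inclusions, the \tip-inclusions of the $E_i$'s form a non-increasing chain of subsets of the finitely many \tip-inclusions of the TBox. Therefore the sequence stabilizes after at most $n \leq |\mathit{TBox}|$ steps, every rank is either bounded by $n$ or equal to $\infty$, and computing $\rf(C)$ requires, for each level $i$, a single test of whether $C$ is exceptional for $E_i$.

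Second, I would analyse the cost of a single exceptionality test. By the definition of exceptionality, deciding whether $C$ is exceptional for $E_i$ is the entailment problem $E_i \modelsshiqrt \tip(\top) \sqsubseteq \nott C$ over a knowledge base with empty ABox. By Proposition \ref{relazione_shiq_shiqrt}, this is equivalent to a $\shiq$ entailment $KB' \models_{\shiq} {C'_L} \sqsubseteq {C'_R}$ over a polynomially sized encoding $KB'$, and $\shiq$ entailment (equivalently, concept unsatisfiability relative to the KB, since \textsc{ExpTime} is closed under complement) is decidable in \textsc{ExpTime} by the \textsc{ExpTime}-completeness of $\shiq$ recalled in Theorem \ref{encoding_shiqrt_shiq}. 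Hence each exceptionality test runs in \textsc{ExpTime}.

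Finally I would put the counting together. Building the whole sequence $E_0,\dots,E_n$ performs at most $|\mathit{TBox}|$ exceptionality tests per level over at most $|\mathit{TBox}|$ levels, i.e. polynomially many \textsc{ExpTime} tests; computing the two ranks $\rf(C)$ and $\rf(C \sqcap \nott D)$ and comparing them adds only polynomially many further tests of the same kind. A polynomial number of \textsc{ExpTime} computations is still in \textsc{ExpTime}, which yields the claimed bound. I expect the main obstacle to be precisely the guarantee that each $\shiqrt$ entailment query can be answered within \textsc{ExpTime}: this is exactly what Proposition \ref{relazione_shiq_shiqrt} secures through the polynomial encoding into $\shiq$, and the delicate point is to check that this reduction applies uniformly to the exceptionality queries $\tip(\top) \sqsubseteq \nott C$ used at every level, so that the overall, polynomially iterated, procedure stays within the class.
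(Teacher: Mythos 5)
Your proposal is correct and follows essentially the same route as the paper's own proof: constructing the polynomially long sequence $E_0,\dots,E_n$, reducing each exceptionality test $E_i \modelsshiqrt \tip(\top) \sqsubseteq \neg C$ to a $\shiq$ entailment via the polynomial encoding of Proposition \ref{relazione_shiq_shiqrt}, and observing that polynomially many \textsc{ExpTime} checks remain in \textsc{ExpTime}. The only cosmetic slip is attributing the \textsc{ExpTime}-completeness of $\shiq$ to Theorem \ref{encoding_shiqrt_shiq} (which concerns $\shiqrt$; the bound for $\shiq$ is the standard known result), but this does not affect the argument.
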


\begin{proof}
Checking if $\tip(C) \sqsubseteq D \in \overline{\mathit{TBox}}$ can be done by
computing the finite sequence  $E_0, E_1, \ldots, E_n$ of non increasing subsets of TBox inclusions
in the construction of the rational closure.
Note that the number $n$ of the $E_i$ is  $O(|KB|)$,
where $| KB|$ is the size of the knowledge base KB.
Computing each $E_i ={\cal E}(E_{i-1})$, requires to check, for all concepts $A$
occurring on the left hand side of an inclusion in the TBox, whether $E_{i-1} \modelsshiqrt \tip(\top) \sqsubseteq \neg A$. 
Regarding $E_{i-1}$ as a knowledge base with empty ABox,
by Proposition \ref{relazione_shiq_shiqrt} it is enough to check that
${E'_{i-1}} \models_{\shiq} \top \sqcup \Box_{\neg \top} \sqsubseteq \neg A$, which requires an exponential time in the size of ${E'_{i-1}}$ (and hence in the size of KB).
If not already checked, the exceptionality of $C$ and of $C\sqcap \neg D$ have to be checked for each $E_i$,
to determine the ranks of $C$ and of $C \ \sqcap \ \neg D$ (which also can be computed in $\shiq$ and requires an exponential time in the size of KB).
Hence, verifying if $\tip(C) \sqsubseteq D \in \overline{\mathit{TBox}}$ is in \textsc{ExpTime}. \hfill $\Box$
\end{proof}
The above proof provides an \textsc{ExpTime} complexity upper bound for computing the rational closure over a TBox in $\shiq$
and shows that the rational closure of a TBox can be computed simply using the entailment in $\shiq$.


\section{Infinite Minimal Models with finite ranks} \label{infinite_models}

In the following we provide a characterization of minimal models of a KB in terms of their rank:  intuitively minimal models are exactly those ones where  each domain element has rank $0$ if it satisfies all defeasible inclusions, and otherwise has the smallest  rank greater than the rank of any concept $C$ occurring in a defeasible inclusion $\tip(C) \sqset D$ of the KB  falsified by the element. 
Exploiting this intuitive characterization of minimal models, we are able to show that, for a finite KB, minimal models have always a \emph{finite} ranking function, no matter whether they have a finite domain or not. 
This result allows us to provide a semantic characterization of rational closure of the previous section to logics, like $\shiq$, that do not have the finite model property. 

Given a model $\emme = \langle \Delta, <, I \rangle$, let us define the set $S^{\emme}_x$ of defeasible inclusions falsified by a domain element $x\in \Delta$, as
$S^{\emme}_x = \{\tip(C) \sqset D\in K_D \mid x\in (C \sqcap \neg D)^I\}\}$.

\begin{proposition}\label{cara1}
Let $\emme = \langle \Delta, <, I \rangle$ be a model of KB and $x\in \Delta$, then:
 (a) if $k_{\emme}(x) = 0$ then $S^{\emme}_x = \emptyset$;
	 (b) if $S^{\emme}_x \not= \emptyset$ then $k_{\emme}(x) > k_{\emme}(C)$ for every $C$ such that, for some $D$, $\tip(C) \sqset D \in S^{\emme}_x$.
\end{proposition}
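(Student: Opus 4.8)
The plan is to prove part (b) first, relying on the rank characterization of typicality inclusions (Proposition~\ref{Truth conditions conditionals with rank}), and then to obtain part (a) as an immediate consequence.

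For part (b), I would fix an arbitrary defeasible inclusion $\tip(C)\sqset D\in S^{\emme}_x$, so that by the definition of $S^{\emme}_x$ we have $x\in(C\sqcap\neg D)^I$. Since $\emme$ is a model of KB and $\tip(C)\sqset D\in K_D$ belongs to the TBox, $\emme$ satisfies $\tip(C)\sqset D$; hence, by Proposition~\ref{Truth conditions conditionals with rank}, $k_{\emme}(C\sqcap D)<k_{\emme}(C\sqcap\neg D)$. The crucial observation is that $C^I$ is the disjoint union of $(C\sqcap D)^I$ and $(C\sqcap\neg D)^I$, so that the rank of the concept $C$ is the minimum of the ranks of these two parts, i.e. $k_{\emme}(C)=\min\{k_{\emme}(C\sqcap D),k_{\emme}(C\sqcap\neg D)\}$; combined with the strict inequality above, this forces $k_{\emme}(C)=k_{\emme}(C\sqcap D)$. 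Finally, since $x\in(C\sqcap\neg D)^I$, the definition of the rank of a concept (Definition~\ref{definition_rank_formula}) gives $k_{\emme}(C\sqcap\neg D)\le k_{\emme}(x)$, and chaining the inequalities yields $k_{\emme}(C)=k_{\emme}(C\sqcap D)<k_{\emme}(C\sqcap\neg D)\le k_{\emme}(x)$, which is exactly the claim $k_{\emme}(x)>k_{\emme}(C)$. As the inclusion was arbitrary, the bound holds for every relevant $C$.

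For part (a), I would argue by contradiction. Assuming $k_{\emme}(x)=0$ but $S^{\emme}_x\neq\vuoto$, part (b) would yield $k_{\emme}(x)>k_{\emme}(C)$ for some concept $C$ with $x\in C^I$; since $x\in C^I$ makes $C^I\neq\vuoto$, the value $k_{\emme}(C)$ is a genuine ordinal $\ge 0$, whence $k_{\emme}(x)>0$, contradicting $k_{\emme}(x)=0$. (Alternatively, a direct argument is available: $k_{\emme}(x)=0$ means $x$ is $<$-minimal, so there is no $y<x$; hence for any $C$ with $x\in C^I$ we have $x\in Min_<(C^I)=(\tip(C))^I$, and the existence of some $\tip(C)\sqset D\in S^{\emme}_x$ would give $x\in(\tip(C))^I\subseteq D^I$, contradicting $x\in(\neg D)^I$.)

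There is no real obstacle here: both parts reduce to Proposition~\ref{Truth conditions conditionals with rank} and the definition of the rank of a concept. The only point that genuinely needs a line of justification is the identity $k_{\emme}(C)=\min\{k_{\emme}(C\sqcap D),k_{\emme}(C\sqcap\neg D)\}$, which rests on the partition of $C^I$ induced by $D$ and on the fact that the minimum of a set of ordinals is attained on one of the two blocks; everything else is routine bookkeeping with the well-founded ranking.
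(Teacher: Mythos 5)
Your proof is correct, and it establishes exactly the inequalities the paper needs, but it travels by a genuinely different (if neighbouring) route. The paper's proof of (b) never invokes Proposition~\ref{Truth conditions conditionals with rank}: it argues (by contradiction, though the contradiction framing is inessential) directly from the semantics of the typicality operator --- since $\emme$ satisfies $\tip(C)\sqsubseteq D$ and $x\in(C\sqcap\neg D)^I$, the element $x$ is an instance of $C$ that is \emph{not} typical, i.e.\ $x\in C^I\setminus(\tip(C))^I$, so some instance of $C$ lies strictly below $x$ and hence $k_{\emme}(x)>k_{\emme}(C)$ in one step. You instead take Proposition~\ref{Truth conditions conditionals with rank} as the key lemma and combine it with the partition identity $k_{\emme}(C)=\min\{k_{\emme}(C\sqcap D),k_{\emme}(C\sqcap\neg D)\}$; this buys modular reuse of an already-stated result, at the small cost of the extra identity (which you correctly flag as the one step needing justification) and of the observation --- worth making explicit --- that your application of that lemma is safe precisely because $x$ itself witnesses $(C\sqcap\neg D)^I\neq\emptyset$, which keeps you away from the degenerate case $C^I=\emptyset$ where the rank characterization's ``only if'' direction is delicate under the $\infty$ convention. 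For (a), the paper only remarks that it ``follows from (b)''; your derivation ($k_{\emme}(C)$ is a genuine ordinal since $x\in C^I$, hence $k_{\emme}(x)>0$) spells out exactly that reduction, and your alternative direct argument via $x\in Min_<(C^I)=(\tip(C))^I$ is, in effect, the paper's own argument for (b) specialized to rank-zero elements.
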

\begin{proof}
Observe that (a) follows from (b). Let us prove (b). Suppose for a contradiction that (b) is false, so that $S^{\emme}_x \not= \emptyset$ and for some $C$ such that, for some $D$, $\tip(C) \sqset D \in S^{\emme}_x$, we have $k_{\emme}(x) \leq k_{\emme}(C)$.  We have also that $x\in (C \sqcap \neg D)^I$.  But  $\emme \models$ KB, in particular $\emme \models \tip(C) \sqset D$, thus it must be $x\not\in (\tip (C))^I$, but $x\in C^I$,  so that we get that $k_{\emme}(x) > k_{\emme}(C)$ a contradiction. 
 \vspace{-0.2cm}
$\hfill \Box$ \end{proof}

\begin{proposition}\label{cara2}
Let KB $= K_F \cup K_D$ and $\emme = \langle \Delta, <, I \rangle$ be a model of $K_F$; suppose that for any $x\in \Delta$ it holds:
\begin{itemize}
	\item (a) if $k_{\emme}(x) = 0$ then $S^{\emme}_x = \emptyset$
	\item (b) if $S^{\emme}_x \not= \emptyset$ then $k_{\emme}(x) > k_{\emme}(C)$ for every $C$ such that, for some $D$, $\tip(C) \sqset D \in S^{\emme}_x$.
\end{itemize}
then $\emme\models$ KB.
\end{proposition}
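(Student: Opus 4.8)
The plan is to read this proposition as the converse of Proposition~\ref{cara1}: there conditions (a) and (b) were \emph{derived} from $\emme\models$ KB, and here we want to \emph{recover} $\emme\models$ KB from them. Since $\emme$ is assumed to be a model of $K_F$ and KB $= K_F \cup K_D$, the only thing left to establish is that $\emme$ satisfies every defeasible inclusion $\tip(C)\sqset D\in K_D$.

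First I would fix an arbitrary $\tip(C)\sqset D\in K_D$ and argue by contradiction, assuming $\emme\not\models \tip(C)\sqset D$. By the semantics of $\tip$ (Definition~\ref{semalctr}) we have $(\tip(C))^I = Min_<(C^I)$, so the failure of the inclusion produces an element $x\in Min_<(C^I)$ with $x\notin D^I$. The key observation is that such a $<$-minimal element realizes the rank of $C$: since no $z\in C^I$ satisfies $z<x$, and $<$ is read off from the ranking function $k_\emme$, we get $k_{\emme}(x) = k_{\emme}(C)$ by Definition~\ref{definition_rank_formula}. Moreover $x\in (C\sqcap \neg D)^I$, so by the definition of $S^{\emme}_x$ we have $\tip(C)\sqset D\in S^{\emme}_x$, whence $S^{\emme}_x\neq \emptyset$.

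At this point hypothesis (b) applies to $x$ and yields $k_{\emme}(x) > k_{\emme}(C)$, which directly contradicts $k_{\emme}(x) = k_{\emme}(C)$. Note that condition (a) is not needed separately: exactly as observed in the proof of Proposition~\ref{cara1}, (b) already subsumes the case $k_{\emme}(x)=0$. This contradiction gives $\emme\models \tip(C)\sqset D$ for every defeasible inclusion, and together with $\emme\models K_F$ we conclude $\emme\models$ KB.

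I do not expect a serious obstacle here, since the argument is essentially the contrapositive of Proposition~\ref{cara1}. The one point deserving a line of care is the identity $k_{\emme}(x)=k_{\emme}(C)$ for a $<$-minimal element $x$ of $C^I$: it is immediate from Definition~\ref{definition_rank_formula} once one recalls that $k_{\emme}(C)=min\{k_{\emme}(y)\mid y\in C^I\}$ and that $x$ being $<$-minimal in $C^I$ means that no element of $C^I$ has strictly smaller rank. Alternatively, the whole statement could be obtained from Proposition~\ref{Truth conditions conditionals with rank}, but the direct argument via a falsifying typical element seems shortest.
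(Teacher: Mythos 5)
Your proposal is correct and follows essentially the same route as the paper's own proof: fix a defeasible inclusion $\tip(C)\sqsubseteq D\in K_D$, take a falsifying typical element $x$, note that $\tip(C)\sqsubseteq D\in S^{\emme}_x$, and derive the contradiction between hypothesis (b) ($k_{\emme}(x)>k_{\emme}(C)$) and the minimality of $x$ in $C^I$. The only difference is that you spell out explicitly the identity $k_{\emme}(x)=k_{\emme}(C)$, which the paper leaves implicit in the phrase ``against the fact that $x\in \tip(C)$''.
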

\begin{proof}
Let $\tip(C) \sqset D\in K_D$, suppose that for some $x\in C$, it holds $x\in (\tip(C))^I - D^I$, then $\tip(C) \sqset D \in S^{\emme}_x$. By hypothesis, we have $k_{\emme}(x) > k_{\emme}(C)$, against the fact that $x\in \tip(C)$.
$\hfill \Box$ \end{proof}

\begin{proposition}\label{cara3}
Let KB $= K_F \cup K_D$ and $\emme = \langle \Delta, <, I \rangle$  a \emph{ minimal model} of KB, for every $x\in \Delta$, it holds:
\begin{itemize}
	\item (a) if $S^{\emme}_x = \emptyset$ then $k_{\emme}(x) = 0$ 
	\item (b) if $S^{\emme}_x \not= \emptyset$ then $k_{\emme}(x) = 1+ max\{k_{\emme}(C) s.t. \tip(C) \sqset D \in S^{\emme}_x\}$.
\end{itemize}
\end{proposition}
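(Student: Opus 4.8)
The plan is to establish both clauses simultaneously by producing, for a minimal model $\emme$, a rank function that provably satisfies the claimed recursion \emph{and} lies pointwise below $k_{\emme}$; minimality then forces the two to coincide. The crucial preliminary remark is that under $<_{\mathit{FIMS}}$ the domain $\Delta$ and the interpretation $I$ are fixed, so each set $S^{\emme}_x$ depends only on $I$ and is shared by every model $\emme'$ with $\emme' \approx \emme$. This lets me recast the recursion as a fixed-point equation on ordinal-valued rank functions over the \emph{same} $\Delta$ and $I$, rather than as a statement about one fixed ranking.

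Concretely, I would introduce the operator $T$ sending a rank function $k : \Delta \to \mathit{Ord}$ to $T(k)$, where $T(k)(x) = 0$ if $S^{\emme}_x = \emptyset$ and $T(k)(x) = 1 + \max\{k(C) \mid \tip(C) \sqset D \in S^{\emme}_x\}$ otherwise, with $k(C) = \min\{k(y) \mid y \in C^I\}$ as usual. Two facts drive the argument. First, $T$ is monotone: if $k \le k'$ pointwise then $k(C) \le k'(C)$ for every concept, hence $T(k) \le T(k')$. Second, Proposition \ref{cara1}(b) gives $k_{\emme}(x) \ge 1 + \max_C k_{\emme}(C) = T(k_{\emme})(x)$ whenever $S^{\emme}_x \neq \emptyset$, while for $S^{\emme}_x = \emptyset$ we trivially have $T(k_{\emme})(x) = 0 \le k_{\emme}(x)$; thus $T(k_{\emme}) \le k_{\emme}$. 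Starting from $k_0 = k_{\emme}$ and iterating $k_{n+1} = T(k_n)$, monotonicity yields a pointwise non-increasing chain $k_0 \ge k_1 \ge k_2 \ge \cdots$.

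I would then set $k_\ast(x) = \inf_n k_n(x)$, a genuine minimum since the ordinals are well-ordered and each sequence is non-increasing, hence eventually constant, and verify that $k_\ast$ is a fixed point of $T$. This is the technical heart: one must commute $T$ with the pointwise limit. Because $K_D$ is finite, the maximum in $T$ ranges over finitely many concepts, and for each such $C$ one checks $\min_{y \in C^I} k_\ast(y) = \inf_n \min_{y \in C^I} k_n(y)$ (the inequality $\ge$ is immediate; for $\le$ pick a witness $y^\ast \in C^I$ attaining the minimum of $k_\ast$). Finite maxima and the map $\alpha \mapsto 1 + \alpha$ then commute with the stabilizing infimum, giving $T(k_\ast) = k_\ast$. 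Being a fixed point, $k_\ast$ satisfies exactly the equalities (a) and (b) of the statement, and in particular the hypotheses (a),(b) of Proposition \ref{cara2}; since $\Delta$ and $I$ are unchanged and satisfaction of the strict axioms of $K_F$ does not depend on the preference relation, the structure $\emme_\ast = \langle \Delta, <_\ast, I\rangle$ induced by $k_\ast$ still satisfies $K_F$, so Proposition \ref{cara2} gives $\emme_\ast \models$ KB.

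Finally I would invoke minimality. As $k_\ast \le k_0 = k_{\emme}$ pointwise and $\emme_\ast \approx \emme$, if $k_\ast$ differed from $k_{\emme}$ at any point we would have $\emme_\ast <_{\mathit{FIMS}} \emme$, contradicting the minimality of $\emme$. Hence $k_{\emme} = k_\ast$, and therefore $k_{\emme}$ itself satisfies the two equalities, which is precisely (a) and (b). I expect the main obstacle to be the circularity flagged above: lowering the rank of an element changes the ranks $k(C)$ of the concepts it inhabits, so replacing $k_{\emme}$ by the single application $T(k_{\emme})$ need not yield a model of KB. It is the passage to the fixed point, together with the interchange of $T$ with the decreasing limit, that repairs this and makes the minimality comparison legitimate.
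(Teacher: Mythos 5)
Your proof is correct, and its skeleton matches the paper's: construct from $k_{\emme}$ a pointwise smaller ranking satisfying the hypotheses of Proposition \ref{cara2} (using Proposition \ref{cara1} to get below $k_{\emme}$), conclude the induced structure is a model of KB, and let minimality w.r.t.\ $<_{\mathit{FIMS}}$ force equality with $k_{\emme}$, which yields (a) and (b). The genuine difference is that the paper stops after a \emph{single} application of your operator $T$: it defines $k_{\emme'} = T(k_{\emme})$, with the maxima computed over the \emph{old} concept ranks $k_{\emme}(C)$, and shows this already gives a model of KB. The circularity you flag as the main obstacle --- that lowering element ranks also lowers concept ranks, so $T(k_{\emme})$ need not satisfy the exact recursion --- is real but harmless: Proposition \ref{cara2} does not demand the equality $k(x) = 1+\max\{k(C) \mid \dots\}$, only the strict inequality $k(x) > k(C)$, and since concept ranks can only decrease under the update ($k_{\emme'}(C) \le k_{\emme}(C)$), one gets
$k_{\emme'}(x) = 1+\max\{k_{\emme}(C) \mid \tip(C) \sqset D \in S^{\emme}_x\} \ge 1+\max\{k_{\emme'}(C) \mid \tip(C) \sqset D \in S^{\emme'}_x\} > k_{\emme'}(C)$,
which is all Proposition \ref{cara2} needs (note $S^{\emme}_x = S^{\emme'}_x$ because $I$ is shared). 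So your passage to the fixed point $k_\ast$, with the interchange of $T$ and the decreasing limit, is sound but unnecessary: it buys you a ranking satisfying the exact recursion \emph{before} minimality is invoked, at the cost of the stabilization argument over ordinals and the finiteness-of-$K_D$ bookkeeping, whereas the paper's one-step construction avoids all limit considerations and lets minimality alone upgrade the inequalities to the stated equalities. One small slip in your limit step: the two directions are labeled backwards --- $k_\ast(C) \le \inf_n k_n(C)$ is the immediate inequality, and the witness $y^\ast$ attaining $\min_{y \in C^I} k_\ast(y)$ is what establishes $\ge$.
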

\begin{proof}
Let $\emme = \langle \Delta, <, I \rangle$ be a minimal model of KB. Define another model $\emme' = \langle \Delta, <', I \rangle$, where $<'$ is determined by a ranking function $k_{\emme'}$ as follows:
\begin{itemize}
	\item $k_{\emme'}(x) = 0$ if $S^{\emme}_x=\emptyset$,
	\item $k_{\emme'}(x) = 1+ max\{k_{\emme}(C) \mid \tip(C) \sqset D \in S^{\emme}_x\}$ if $S^{\emme}_x\not=\emptyset$.
\end{itemize}
It is easy to see that (i) for every $x$ $k_{\emme'}(x) \leq k_{\emme}(x)$.
Indeed, if $S^{\emme}_x =\emptyset$ then it is obvious; if $S^{\emme}_x \not=\emptyset$, then
$k_{\emme'}(x) = 1+ max\{k_{\emme}(C) \mid \tip(C) \sqset D \in S^{\emme}_x\} \leq k_{\emme}(x)$
by Proposition \ref{cara1}. 
It equally follows  that
(ii) for every concept $C$, $k_{\emme'}(C) \leq k_{\emme}(C)$. To see this: let $z\in C^I$ such that $k_{\emme}(z) = k_{\emme}(C)$, either $k_{\emme'}(C) = k_{\emme'}(z) \leq k_{\emme}(z)$ and we are done, or there exists $y\in C^I$, such that $k_{\emme'}(C) = k_{\emme'}(y) < k_{\emme'}(z) \leq k_{\emme}(z)$.

Observe that $S^{\emme}_x = S^{\emme'}_x$, since the evaluation function $I$ is the same in the two models. By definition of $\emme'$, we have $\emme' \models K_F$; moreover by (i) and (ii) it follows that: 

 (iii) if $k_{\emme'}(x) = 0$ then $S^{\emme'}_x = \emptyset$.

 (iv) if $S^{\emme'}_x \not= \emptyset$:
$k_{\emme'}(x)  =  1+ max\{k_{\emme}(C) \mid \tip(C) \sqset D \in S^{\emme}_x\}
 \geq  1+ max\{k_{\emme'}(C) \mid \tip(C) \sqset D \in S^{\emme'}_x\}$, that is $k_{\emme'}(x) > k_{\emme'}(C)$ for every $C$ such that for some $D$, $\tip(C) \sqset D \in S^{\emme'}_x$.

By  Proposition  \ref{cara2} we obtain that $\emme'\models$ KB; but by (i) $k_{\emme'}(x) \leq k_{\emme}(x)$ and by hypothesis $\emme$ is minimal. 
Thus it must be that for every $x\in \Delta$,  $k_{\emme'}(x) = k_{\emme}(x)$ (whence $k_{\emme'}(C)= k_{\emme}(C)$) which entails that $\emme$ satisfies (a) and (b) in the statement of the theorem. 
$\hfill \Box$ \end{proof}

Also the opposite direction holds:

\begin{proposition}\label{cara4}
Let KB $= K_F \cup K_D$, let $\emme = \langle \Delta, <, I \rangle$  be a model of $K_F$,  suppose that for every $x\in \Delta$, it holds:
\begin{itemize}
	\item (a)  $S^{\emme}_x = \emptyset$ iff $k_{\emme}(x) = 0$ 
	\item (b) if $S^{\emme}_x \not= \emptyset$ then $k_{\emme}(x) = 1+ max\{k_{\emme}(C) \mid \tip(C) \sqset D \in S^{\emme}_x\}$.
\end{itemize}
then $\emme$ is  a minimal model of KB.
\end{proposition}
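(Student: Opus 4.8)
The plan is to argue by contradiction, leaning on Propositions \ref{cara1} and \ref{cara2}. First I would observe that conditions (a) and (b) of the statement already guarantee that $\emme$ is a model of the whole KB: the ``iff'' in (a) yields in particular that $k_{\emme}(x)=0$ implies $S^{\emme}_x=\emptyset$, and (b) implies that $k_{\emme}(x) > k_{\emme}(C)$ whenever $\tip(C)\sqset D\in S^{\emme}_x$, so the hypotheses of Proposition \ref{cara2} are met and $\emme\models$ KB. It then remains to show that $\emme$ is $<_{\mathit{FIMS}}$-minimal. So suppose not, and let $\emme' = \langle \Delta, <', I\rangle$ be a model of KB with $\emme' <_{\mathit{FIMS}} \emme$. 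By Definition \ref{Preference between models in case of fixed valuation}, $\emme'$ shares the domain and the interpretation $I$ of every concept, satisfies $k_{\emme'}(x)\le k_{\emme}(x)$ for all $x$, and has some $y$ with $k_{\emme'}(y) < k_{\emme}(y)$. Since $I$ is common to the two models, $S^{\emme'}_x = S^{\emme}_x$ for every $x$; I write $S_x$ for this shared set.

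The decisive idea is to select the counterexample of least \emph{$\emme'$-rank}, rather than least $\emme$-rank. Let $A = \{x \mid k_{\emme'}(x) < k_{\emme}(x)\}$, which is nonempty by assumption, and choose $x \in A$ minimizing the ordinal $\beta := k_{\emme'}(x)$. From $k_{\emme'}(x) < k_{\emme}(x)$ we get $k_{\emme}(x) > 0$, hence by (a) we have $S_x \neq \emptyset$, and by (b) $k_{\emme}(x) = 1 + \max\{k_{\emme}(C) \mid \tip(C)\sqset D \in S_x\}$. Pick $\tip(C^*) \sqset D^* \in S_x$ realizing this maximum, so that $k_{\emme}(x) = 1 + k_{\emme}(C^*)$.

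Now I would use that $\emme'$ is a model of KB. Since $S^{\emme'}_x = S_x \neq \emptyset$, Proposition \ref{cara1}(b) applied to $\emme'$ gives $k_{\emme'}(x) > k_{\emme'}(C^*)$, so $k_{\emme'}(C^*) < \beta$. Let $z^* \in (C^*)^I$ realize $k_{\emme'}(C^*)$, i.e. $k_{\emme'}(z^*) = k_{\emme'}(C^*) < \beta$. By the minimality of $\beta$ among $\emme'$-ranks of elements of $A$, the element $z^*$ cannot lie in $A$, so $k_{\emme'}(z^*) = k_{\emme}(z^*)$. Since $z^* \in (C^*)^I$ we also have $k_{\emme}(z^*) \ge k_{\emme}(C^*)$, and therefore $k_{\emme'}(C^*) = k_{\emme'}(z^*) = k_{\emme}(z^*) \ge k_{\emme}(C^*)$. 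Chaining the inequalities, $k_{\emme'}(x) > k_{\emme'}(C^*) \ge k_{\emme}(C^*)$, hence $k_{\emme'}(x) \ge k_{\emme}(C^*) + 1 \ge 1 + k_{\emme}(C^*) = k_{\emme}(x)$, contradicting $x \in A$. Thus $A = \emptyset$, which contradicts the existence of $y$; so no $\emme' <_{\mathit{FIMS}} \emme$ exists and $\emme$ is a minimal model of KB.

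The main obstacle is exactly the choice of the well-founded measure. A naive transfinite induction on the $\emme$-rank $k_{\emme}$ fails, because lower-bounding $k_{\emme'}(C^*)$ forces one to control $k_{\emme'}(z)$ for \emph{every} $z \in (C^*)^I$, including elements of arbitrarily large $\emme$-rank that no induction hypothesis phrased in terms of $k_{\emme}$ can reach. Minimizing the $\emme'$-rank of the counterexample instead is what guarantees that the witness $z^*$ realizing $k_{\emme'}(C^*)$ sits strictly below the chosen level $\beta$, and is therefore automatically not a counterexample — this is the step that makes the argument close. I would finally note that $k_{\emme}(x) = 1 + k_{\emme}(C^*)$ is a successor and use $1 + \alpha \le \alpha + 1$ for ordinals, so the concluding inequality remains valid even for infinite ranks (which, as this section goes on to show, do not in fact arise on minimal models).
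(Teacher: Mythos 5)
Your proof is correct and takes essentially the same approach as the paper's: after securing modelhood via Propositions \ref{cara1} and \ref{cara2}, the paper shows $k_{\emme}(x) \leq k_{\emme'}(x)$ for every competing model $\emme'$ by induction on $k_{\emme'}(x)$, applying Proposition \ref{cara1} inside $\emme'$ and passing through witnesses $y_i$ realizing $k_{\emme'}(C_i)$ — precisely the well-founded measure and rank-transfer mechanism you use. Your minimal-counterexample argument is the contrapositive packaging of that same induction, and your remark that inducting on $k_{\emme}$ would fail identifies exactly why the paper, too, inducts on the competitor's rank.
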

\begin{proof}
In light of previous Propositions \ref{cara1} and \ref{cara2}, it is sufficient to show that $\emme$ is minimal. To this aim, let $\emme' = \langle \Delta, <', I \rangle$, with associated ranking function $k_{\emme'}$, be another model of KB, we show that for every $x\in \Delta$, it holds $k_{\emme}(x) \leq k_{\emme'}(x)$. 
We proceed by induction on $k_{\emme'}(x)$. If  $S^{\emme}_x =  S^{\emme'}_x = \emptyset$, we have that $k_{\emme}(x) = 0 \leq k_{\emme'}(x)$ (no need of induction). If  $S^{\emme}_x =  S^{\emme'}_x \not= \emptyset$, then  since $\emme'\models$ KB, by  Proposition \ref{cara1}: $k_{\emme'}(x) \geq 1 + max\{k_{\emme'}(C) \mid \tip(C) \sqset D \in S^{\emme'}_x\}$.
Let $S^{\emme'}_x = S^{\emme }_x= \{\tip(C_1)\sqset D_1 ,\ldots, \tip(C_u)\sqset D_u\}$. 
For $i = 1,\ldots, u$  let $k_{\emme'}(C_i) =  k_{\emme'}(y_i)$ for some $y_i\in \Delta$. 
Observe that  $k_{\emme'}(y_i) < k_{\emme'}(x)$, thus by induction hypothesis $k_{\emme}(y_i) \leq k_{\emme'}(y_i)$, for $i = 1,\ldots, u$.
But then $k_{\emme}(C_i) \leq k_{\emme}(y_i)$, so that we finally get:
\begin{footnotesize}
	\begin{eqnarray*}
k_{\emme'}(x) & \geq & 1+ max\{k_{\emme}(C) \mid \tip(C) \sqset D \in S^{\emme'}_x\}\\
& = & 1+ max\{k_{\emme'}(C_1),\ldots, k_{\emme'}(C_u)\}\\
& = & 1+ max\{k_{\emme'}(y_1),\ldots, k_{\emme'}(y_u)\}\\
& \geq & 1+ max\{k_{\emme}(y_1),\ldots, k_{\emme}(y_u)\}\\
& \geq & 1+ max\{k_{\emme}(C_1),\ldots, k_{\emme}(C_u)\}\\
& = & 1+ max\{k_{\emme}(C) \mid \tip(C) \sqset D \in S^{\emme}_x\} \\ 
& = & k_{\emme'}(x) 
\end{eqnarray*}
\end{footnotesize}
$\hfill \Box$ \end{proof}

\noindent Putting Propositions \ref{cara3} and \ref{cara4} together, we obtain the following theorem which provides a characterization of minimal models.

\begin{theorem}\label{teocara} Let KB $= K_F \cup K_D$, and let $\emme = \langle \Delta, <, I \rangle$  be a model of $K_F$. The following are equivalent:
\begin{itemize}
	\item $\emme$ is a minimal model of KB
	\item For every $x\in \Delta$ it holds:
 (a)  $S^{\emme}_x = \emptyset$ iff $k_{\emme}(x) = 0$ 
	 (b) if $S^{\emme}_x \not= \emptyset$ then $k_{\emme}(x) = 1+ max\{k_{\emme}(C) \mid \tip(C) \sqset D \in S^{\emme}_x\}$.
\end{itemize}
\end{theorem}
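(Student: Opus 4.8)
The plan is to derive the theorem by gluing together the three preceding propositions, since all of the substantive work has already been carried out in them. The statement is a biconditional between ``$\emme$ is a minimal model of KB'' and the pair of rank conditions (a)--(b), so I would treat the two directions separately and in each case reduce to an already-established proposition.

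For the direction from minimality to the rank conditions, I would begin from the observation that a minimal model of KB is in particular a model of KB, so Proposition~\ref{cara1} applies to $\emme$. Proposition~\ref{cara1}(a) supplies exactly the implication $k_{\emme}(x)=0 \Rightarrow S^{\emme}_x = \emptyset$. Proposition~\ref{cara3} then supplies the remaining content: its clause (a) gives the converse implication $S^{\emme}_x = \emptyset \Rightarrow k_{\emme}(x)=0$, and its clause (b) gives clause (b) of the theorem verbatim. Combining the two implications yields the biconditional asserted in clause (a), so both rank conditions hold for every $x \in \Delta$.

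For the converse direction, from the rank conditions to minimality, I would simply invoke Proposition~\ref{cara4}, whose hypotheses are precisely that $\emme$ is a model of $K_F$ satisfying conditions (a) and (b), and whose conclusion is exactly that $\emme$ is a minimal model of KB. Nothing further is needed here.

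Since the genuine arguments live inside Propositions~\ref{cara3} and~\ref{cara4}, I do not expect any real obstacle at the level of the theorem itself. The only point that requires a moment's attention is the apparent mismatch between the one-directional clause (a) of Proposition~\ref{cara3} (only $S^{\emme}_x = \emptyset \Rightarrow k_{\emme}(x)=0$) and the biconditional clause (a) demanded by the theorem; this gap is closed by appealing to Proposition~\ref{cara1}(a), which is legitimately available precisely because a minimal model is a fortiori a model of KB. I would also remark that the standing hypothesis ``$\emme$ is a model of $K_F$'' is consistent across both directions, being implied by minimality in the forward direction and assumed outright in the converse.
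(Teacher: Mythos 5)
Your proof is correct and takes essentially the same route as the paper, which obtains the theorem precisely by combining Propositions~\ref{cara3} and~\ref{cara4}. Your explicit appeal to Proposition~\ref{cara1}(a) to supply the missing direction of the biconditional in clause (a) soundly fills a step the paper leaves implicit (alternatively, that direction also follows from clause (b) of Proposition~\ref{cara3}, since $S^{\emme}_x \neq \emptyset$ would force $k_{\emme}(x) \geq 1$).
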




\noindent The following proposition shows that in any minimal model the \emph{rank} of each domain element is finite.
 
\begin{proposition}\label{finiti nicola}
Let KB $= K_F \cup K_D$ and $\emme = \langle \Delta, <, I \rangle$  a minimal model of KB, for every $x\in \Delta$, $k_{\emme}(x)$  is a finite ordinal ($k_{\emme}(x) < \omega$).
\end{proposition}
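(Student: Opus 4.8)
The plan is to derive a contradiction from the assumption that a minimal model has an element of infinite rank, reducing the whole matter to a short fact about ordinal arithmetic. The only structural inputs I need are the exact characterization of minimal models in Theorem~\ref{teocara} and the strict bound of Proposition~\ref{cara1}(b); the essential finiteness hypothesis is that $K_D$ is finite, which guarantees that the set $S^{\emme}_x$ is finite for every $x$, so that the ``$\max$'' appearing in clause (b) is a genuine maximum attained at some concept, rather than a supremum.

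Concretely, I would suppose toward a contradiction that $\emme$ is a minimal model of KB and that some $x \in \Delta$ has $\alpha := k_{\emme}(x) \geq \omega$. First, since $\alpha > 0$, Theorem~\ref{teocara}(a) forces $S^{\emme}_x \neq \emptyset$. Next, every $\tip(C) \sqsubseteq D \in S^{\emme}_x$ satisfies $x \in C^I$, so each such $C$ has nonempty extension and hence a genuine ordinal rank $k_{\emme}(C)$; moreover, by Proposition~\ref{cara1}(b), $k_{\emme}(C) < k_{\emme}(x) = \alpha$. Because $S^{\emme}_x$ is finite, the value $\gamma := \max\{k_{\emme}(C) \mid \tip(C)\sqsubseteq D \in S^{\emme}_x\}$ is attained and therefore satisfies $\gamma < \alpha$ strictly. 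Finally, Theorem~\ref{teocara}(b) gives $\alpha = 1 + \gamma$.

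The step I expect to be the crux is the passage from $\alpha = 1 + \gamma$ with $\gamma < \alpha$ to the conclusion that $\alpha$ is finite. The naive idea --- that $\alpha = 1 + \gamma$ exhibits a strictly smaller rank and so, by iterating, contradicts well-foundedness --- does \emph{not} work, since well-foundedness by itself does not rule out an element of rank $\omega$ (a chain of ranks $0 < 1 < 2 < \cdots$ below a single rank-$\omega$ element is well-founded), and since $1 + \gamma = \gamma$ whenever $\gamma$ is infinite. The correct leverage is to combine the \emph{successor} shape $\alpha = 1 + \gamma$ with the \emph{strict} inequality $\gamma < \alpha$: if $\gamma$ were finite then $\alpha = 1+\gamma$ would be finite, contradicting $\alpha \geq \omega$; and if $\gamma$ were infinite then $\alpha = 1 + \gamma = \gamma < \alpha$, again absurd. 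Hence no $\gamma < \alpha$ can satisfy $\alpha = 1+\gamma$ when $\alpha$ is infinite, contradicting the previous paragraph. Therefore every $\alpha = k_{\emme}(x)$ is finite, i.e.\ $k_{\emme}(x) < \omega$ for all $x \in \Delta$, which is the claim.
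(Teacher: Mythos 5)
Your proof is correct, but it takes a genuinely different route from the paper's. The paper proves the proposition by transfinite induction on $\alpha = k_{\emme}(x)$, using Proposition~\ref{cara3}: in the base case $S^{\emme}_x = \emptyset$ forces $\alpha = 0$; in the inductive step it picks witnesses $y_i$ with $k_{\emme}(y_i) = k_{\emme}(C_i) < \alpha$ for the concepts $C_i$ occurring in $S^{\emme}_x$, applies the induction hypothesis to conclude each $\beta_i < \omega$, and then gets $\alpha = 1 + \max\{\beta_1,\ldots,\beta_u\} < \omega$. You avoid induction altogether: assuming a single element of infinite rank, you combine the successor shape $\alpha = 1+\gamma$ from Theorem~\ref{teocara}(b) with the strict bound $\gamma < \alpha$ from Proposition~\ref{cara1}(b) and the attained (finite) maximum, and close the argument with the ordinal absorption law $1+\gamma = \gamma$ for $\gamma \geq \omega$. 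What your version buys is twofold: it is shorter, and it makes fully explicit a point the paper leaves implicit --- the strict inequality $\beta_i < \alpha$ used in the paper's inductive step does \emph{not} follow from the displayed equation $\alpha = 1 + \max\{\beta_j\}$ alone (when the maximum is infinite the $1$ is absorbed and equality would hold), but requires Proposition~\ref{cara1}(b), exactly as you observe; your remark that well-foundedness alone cannot exclude rank $\omega$ is likewise a worthwhile clarification. What the paper's induction buys is that it needs no ordinal arithmetic beyond ``$1$ plus a finite ordinal is finite,'' and its inductive structure matches the style of the surrounding proofs (e.g.\ that of Proposition~\ref{cara4}). Both arguments rest on the same two structural inputs (the minimal-model characterization and the strict bound), so the difference is in the closing logic, not in the ingredients.
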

\begin{proof}
Let $k_{\emme}(x)= \alpha$, we proceed by induction on $\alpha$. If $S^{\emme}_x = \emptyset$, then by  Proposition \ref{cara3} $\alpha = 0$ and we are done (no need of induction). Otherwise if $S^{\emme}_x\not = \emptyset$, by Proposition \ref{cara3}, we have that
$k_{\emme}(x)= \alpha = 1+ max\{k_{\emme}(C) \mid \tip(C) \sqset D \in S^{\emme}_x\}$.
Let $S^{\emme}_x = \{\tip(C_1)\sqset D_1 ,\ldots, \tip(C_u)\sqset D_u\}$. For $i = 1,\ldots, u$  let $k_{\emme}(C_i) = \beta_i = k_{\emme}(y_i)$ for some $y_i\in \Delta$.
So that we have
$k_{\emme}(x)= \alpha = 1+ max\{\beta_1,\ldots, \beta_u\}$.
Since $k_{\emme}(y_i) = \beta_i< \alpha$, by induction hypothesis we have that $\beta_i < \omega$, thus also $\alpha < \omega$. 
$\hfill \Box$ \end{proof}

The previous proposition is essential for establishing a correspondence between the minimal model semantics of a KB and its rational closure. From now on, we can assume that the ranking function assigns to each domain element in $\Delta$ a natural number, i.e. that $k_\emme: \Delta \longrightarrow  \mathbb{N}$.



\section{A Minimal Model Semantics for Rational Closure in $\shiq$} \label{canonicalmodels}
 In previous sections  we have extended to $\shiq$ the syntactic notion of rational closure introduced in \cite{whatdoes} for propositional logic.
To provide a semantic characterization of this notion, we define a special class of minimal models, exploiting the fact that, by Proposition \ref{finiti nicola}, in all minimal $\shiqrt$ models the \emph{rank} of each domain element is always finite.
First of all, we can observe that the minimal model semantics in Definition \ref{Preference between models in case of fixed valuation}  as it is cannot capture the rational closure of a TBox.
 
 Consider the following KB=(TBox,$\vuoto$), where TBox contains: 
 \begin{quote}
  $\mathit{VIP}  \sqsubseteq \mathit{Person}$,\\
  $\tip(\mathit{Person}) \sqsubseteq \ \leq 1 \ \mathit{HasMarried}.\mathit{Person}$,\\
  $\tip(\mathit{VIP})$ $ \sqsubseteq \ \geq 2 \ \mathit{HasMarried}.$ $\mathit{Person}$.
 \end{quote}
  We observe that 
$\tip(\mathit{VIP} \sqcap \mathit{Tall})  \sqsubseteq \ \geq 2 \ \mathit{HasMarried}.\mathit{Person}$ does not hold in all minimal $\shiqrt$ models of KB w.r.t. Definition \ref{Preference between models in case of fixed valuation}.
 Indeed there can be a model $\emme=\sx\Delta, <, I\dx$ in which
$\Delta = \{x,y,z\}$, $\mathit{VIP}^I=\{x, y\}$, $\mathit{Person}^I=\{x, y, z\}$,
$(\leq 1 \ \mathit{HasMarried}.\mathit{Person})^I=\{x, z\}$, $(\geq 2 \ \mathit{HasMarried}.\mathit{Person})^I=\{y\}$, $\mathit{Tall}^I=\{x\}$, and $z < y < x$.
$\emme$ is a model of KB, and  it is minimal. 
Also,  $x$ is a typical
tallVIP in $\emme$ (since there is no other tall VIP preferred to
him) and has no more than one spouse, therefore $\tip(\mathit{VIP} \sqcap \mathit{Tall}) \sqsubseteq \ \geq 2 \ \mathit{HasMarried}.\mathit{Person}$ does not hold in $\emme$. 
On the contrary, it can be  verified that
$\tip(\mathit{VIP} \sqcap \mathit{Tall}) \sqsubseteq \ \geq 2 \ \mathit{HasMarried}.\mathit{Person} \in \overline{\mathit{TBox}}$.

 Things change if we
consider the minimal models semantics applied to models that contain a domain element for {\em each
combination of concepts consistent with KB}.
We call these models {\em canonical models}. Therefore,
in order to semantically characterize the rational closure of a $\shiqrt$   KB, we restrict our attention to \emph{minimal canonical models}.
First, we define $\lingconc$ as the set of all the concepts (and subconcepts) not containing $\tip$,
which occur in KB or in the query $F$, together with their complements.

In order to define canonical models, we consider all the sets of concepts $\{C_1, C_2, \dots,$ $ C_n\} \subseteq \lingconc$ that are {\em consistent with KB}, i.e., s.t. KB $\not\models_{\shiqrt} C_1 \sqcap C_2 \sqcap \dots \sqcap C_n \sqsubseteq \bot$.

\begin{definition}[Canonical model with respect to $\lingconc$]\label{def-canonical-model-DL}
Given KB=(TBox,ABox) and a query $F$, a  model $\emme=$$\sx \Delta, <, I \dx$ satisfying KB is 
{\em canonical with respect to $\lingconc$} if it contains at least a domain element $x \in \Delta$ s.t.
$x \in (C_1 \sqcap C_2 \sqcap \dots \sqcap C_n)^I$, for each set of concepts
$\{C_1, C_2, \dots, C_n\} \subseteq \lingconc$ that is consistent with KB. \end{definition}

\noindent Next we define the notion of minimal canonical model.
\begin{definition}[Minimal canonical models (w.r.t. $\lingconc$)]\label{Preference between models wrt TBox}
$\emme$ is a minimal canonical model of KB
if it satisfies KB, it is  minimal (with respect to Definition \ref{Preference between models in case of fixed
valuation}) and it is canonical (as defined in Definition \ref{def-canonical-model-DL}).

\end{definition}




\begin{proposition}[Existence of minimal canonical models]\label{modello_minimo_canonico_DL}
Let KB be a finite knowledge base, if KB is satisfiable then it has a minimal canonical model.
\end{proposition}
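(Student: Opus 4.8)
The plan is to prove the statement in two moves: first exhibit \emph{some} canonical model $\emme$ of KB, and then turn it into a \emph{minimal} one without destroying canonicity. The second move is essentially free. The minimisation construction used in the proof of Proposition~\ref{modello_minimo_DL} keeps the domain $\Delta$ and the interpretation $I$ fixed and only lowers the ranking function, and the preference order $<_{\mathit{FIMS}}$ of Definition~\ref{Preference between models in case of fixed valuation} compares only models sharing the same $\Delta$ and the same $I$; hence applying that construction to $\emme$ yields a model $\emme_{min}$ with the same $\Delta$ and $I$ that is minimal in the required sense. Canonicity (Definition~\ref{def-canonical-model-DL}) is a property of $\Delta$ and $I$ alone---it only asks that $\Delta$ contain, for every $\{C_1,\dots,C_n\}\subseteq\lingconc$ consistent with KB, an element realising $C_1\sqcap\cdots\sqcap C_n$---so $\emme_{min}$ is canonical whenever $\emme$ is. Everything therefore reduces to building one canonical model.

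To build it, I would first note that $\lingconc$ is finite, so only finitely many subsets $S=\{C_1,\dots,C_n\}\subseteq\lingconc$ are consistent with KB. For each such $S$ we have $\mathrm{KB}\not\modelsshiqrt C_1\sqcap\cdots\sqcap C_n\sqsubseteq\bot$, so there is a model $\emme_S=\langle\Delta_S,<_S,I_S\rangle$ of KB with some $x_S\in(C_1\sqcap\cdots\sqcap C_n)^{I_S}$; by modularity and well-foundedness (Definition~\ref{semalctr}) each $<_S$ is induced by a rank $k_{\emme_S}$ into the ordinals. I then form the finite disjoint union $\Delta=\bigsqcup_S\Delta_S$, interpreting every concept name and role name component-wise (no role edge crossing components), and interpreting each individual constant inside one extra component $\emme_0$, a model of the whole KB that exists because KB is satisfiable. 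For the preference relation I set $k(x)=k_{\emme_S}(x)$ for $x\in\Delta_S$ and let $x<y$ iff $k(x)<k(y)$; this relation is at once irreflexive, transitive, modular and well-founded, as required.

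It remains to check that $\emme=\langle\Delta,<,I\rangle$ is a model of KB and is canonical. Canonicity is immediate, since $x_S$ witnesses every consistent combination $S$. Because roles do not cross components, the extension of any $\tip$-free $\shiq$ concept $C$ satisfies $C^I\cap\Delta_S=C^{I_S}$ (this is where locality of conjunction, number and quantifier restrictions, and inverse and transitive roles is used); hence every strict and role inclusion of $K_F$, holding in each $\emme_S$, holds in $\emme$, while the ABox holds because all its individuals live in the full model $\emme_0$. For a defeasible inclusion $\tip(C)\sqsubseteq D\in K_D$, take $x\in(\tip(C))^I=Min_<(C^I)$ with $x\in\Delta_S$: having globally least rank among the elements of $C^I$, $x$ has in particular least $k_{\emme_S}$-rank among the elements of $C^{I_S}$, so $x\in Min_{<_S}(C^{I_S})=(\tip(C))^{I_S}$; since $\emme_S\modelsshiqrt\tip(C)\sqsubseteq D$ we get $x\in D^{I_S}\subseteq D^I$. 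Thus $\emme\models K_D$, so $\emme$ is a canonical model of KB, and minimising it as in Proposition~\ref{modello_minimo_DL} produces the desired minimal canonical model.

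The main obstacle is the construction of this single canonical model: a naive disjoint union leaves elements of different components $<$-incomparable and so violates modularity, and it also duplicates the named individuals of the ABox. Re-indexing all the preferences through one global ordinal-valued rank repairs modularity and well-foundedness simultaneously, while the verification that the $\tip$-inclusions survive the merge hinges precisely on the observation that global $<$-minimality inside $C^I$ forces component-wise $<_S$-minimality---so that the single inclusion $\tip(C)\sqsubseteq D$ already satisfied by each $\emme_S$ suffices. Delegating every named individual to a single component $\emme_0$ that already satisfies the whole KB is what keeps the ABox satisfied afterwards.
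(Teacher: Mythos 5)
Your proof is correct and takes essentially the same route as the paper's: both build a canonical model by a rank-preserving disjoint union of KB-models witnessing the consistent subsets of $\lingconc$ (the paper grows a minimal model iteratively and keeps the named individuals in that starting component, whereas you do the union in one step with a dedicated component $\emme_0$ for the ABox), both verify the defeasible inclusions via the key observation that global $<$-minimality implies minimality within a component, and both conclude by applying the minimisation of Proposition~\ref{modello_minimo_DL}, which fixes $\Delta$ and $I$ and therefore preserves canonicity. The differences are organisational, not conceptual.
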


\begin{proof}
Let $\emme = \la \Delta, <, I\ra$ be a minimal model of KB (which exists by Proposition \ref{modello_minimo_DL}), and let
$\{C_1, C_2, \dots, C_n\} \subseteq \lingconc$ any subset of $\lingconc$ consistent with KB.

We show that we can expand $\emme$ in order to obtain a model of KB that contains an instance of $C_1 \sqcap C_2 \sqcap \dots \sqcap C_n$. By repeating the same construction for all  maximal subsets $\{C_1, C_2, \dots, C_n\}$ of $\lingconc$, we eventually obtain a canonical model of KB.

For each $\{C_1, C_2, \dots, C_n\}$ consistent with KB, it holds that  KB $\not\models_{\shiqrt} C_1 \sqcap C_2 \sqcap \dots \sqcap C_n \sqsubseteq \bot$, i.e. there is a model $\emme' = \la \Delta', <', I' \ra$ of KB that contains an instance of $\{C_1, C_2, \dots, C_n\}$.

Let $\emme^{'*}$ be the union of $\emme$ and $\emme'$, i.e. $\emme^{'*} = \la \Delta^{'*}, <^{'*}, I^{'*} \ra$,
where $\Delta^{'*} = \Delta \cup \Delta^*$.
As far as individuals named in the ABox,
$I^{'*} = I$, whereas for the concepts and roles, $I^{'*} = I$ on $\Delta$ and $I^{'*} = I'$ on $\Delta'$.
Also, $k_{\emme^{'*}} = k_{\emme}$ for the elements in $\Delta$,
and $k_{\emme^{'*}} = k_{\emme'}$ for the elements in $\Delta'$. $<^{'*}$ is straightforwardly defined from $k_{\emme^{'*}}$ as described just before Definition \ref{definition_rank_formula}.

The model $\emme^{'*}$ is still a model of KB. For the set $K_F$ in the previous definition this is obviously true. For $K_D$, for each $\tip(C ) \sqsubseteq D$ in $K_D$,
if $x \in Min_{<'*}( C)$ in $\emme'*$, also $x \in Min_{<}( C)$ in $\emme$
or $x \in Min_{<'}( C)$ in $\emme'$. In both cases $x$ is an instance of $D$ (since both $\emme$ and $\emme'$ satisfy $K_D$), therefore $x \in  D^{I^{'*}}$, and $\emme^{'*}$ satisfies $K_D$.
 
By repeating the same construction for all  maximal subsets $\{C_1, C_2, \dots, C_n\}$ of $\lingconc$, we obtain a canonical model of KB, call it $\emme^*$. We do not know whether the model is minimal. However by applying the construction used in the  proof of Proposition \ref{modello_minimo_DL}, we obtain  ${\emme^{*}}_{min}$ that is a minimal model of KB with the same domain and interpretation function than $\emme^*$. ${\emme^{*}}_{min}$ is therefore a canonical model of KB, and furthermore it is minimal. Therefore KB has a minimal canonical model.
\vspace{-0.2cm}
$\hfill \bbox$
\end{proof} 

\noindent To prove the correspondence between minimal canonical models and the rational closure of a TBox,
we need to introduce some propositions.
The next one concerns all $\shiqrt$ models. Given a $\shiqrt$ model $\emme=$$\langle \Delta, <, I \rangle$,
we define a sequence $\emme$$_0$, $\emme$$_1, \emme$$_2, \ldots$ of models as follows:
We let  $\emme$$_0=\emme$ and,
for all $i$, we let $\emme$$_i =\langle \Delta,
<_i, I \rangle$ be the $\shiqrt$ model obtained from $\emme$ by
assigning a rank $0$ to all the domain elements $x$ with $k_{\emme}(x) < i$, i.e.,
$k_{{\emme}_{i}}(x)= k_{\emme}(x)-i$ if  $k_{\emme}(x) > i $, and $k_{{\emme}_i}(x)= 0$ otherwise.
We can prove the following:

\begin{proposition}\label{prop:B_DL}
Let KB$=\langle TBox, ABox \rangle$ and
let $\emme=$$\langle \Delta, <, I \rangle$ be any $\shiqrt$ model of TBox.
 For any concept
$C$, if rank($C$) $\geq i$, then 1) $k_{\emme}($C$) \geq i$, and
2) if $\tip(C) \sqsubseteq D$ is
entailed by $E_i$, then $\emme$$_i$ satisfies  $\tip(C) \sqsubseteq D$.
\end{proposition}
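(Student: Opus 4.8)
The plan is to prove both claims simultaneously by induction on $i$, the linchpin being an auxiliary statement I will call $(\star_i)$: the shifted model $\emme_i$ is itself a model of $E_i$. Once $(\star_i)$ is available, part 2) is immediate, since if $\tip(C)\sqsubseteq D$ is entailed by $E_i$ and $\emme_i\models E_i$, then $\emme_i\models\tip(C)\sqsubseteq D$ irrespective of the rank of $C$. So the real work is to establish $(\star_i)$ and part 1) together. First I would record the syntactic characterization of the $E_i$'s: unfolding $E_i=\mathcal{E}(E_{i-1})\cup\{\text{strict inclusions}\}$ shows, by a side induction, that a defeasible inclusion $\tip(C)\sqsubseteq D$ belongs to $E_i$ exactly when it is in TBox and $C$ is exceptional for each of $E_0,\dots,E_{i-1}$, i.e. exactly when $\rf(C)\geq i$. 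This is the bridge between the syntactic construction and the rank condition in the statement.

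For the induction, the base case $i=0$ is trivial: $k_{\emme}(C)\geq 0$ always, and $\emme_0=\emme\models\text{TBox}=E_0$. In the inductive step I would assume part 1) and $(\star_i)$ at level $i$ and derive part 1) at $i+1$ first, then $(\star_{i+1})$. To get part 1) at $i+1$, take $C$ with $\rf(C)\geq i+1$; by the induction hypothesis $k_{\emme}(C)\geq i$, so it remains to rule out $k_{\emme}(C)=i$. If some $x\in C^I$ had $k_{\emme}(x)=i$, then by construction $k_{\emme_i}(x)=0$, hence $x\in Min_{<_i}(\Delta)$. But $\rf(C)\geq i+1$ means $C$ is exceptional for $E_i$, so $E_i\modelsshiqrt\tip(\top)\sqsubseteq\neg C$, and $(\star_i)$ gives $\emme_i\models\tip(\top)\sqsubseteq\neg C$; thus $Min_{<_i}(\Delta)\subseteq(\neg C)^I$, contradicting $x\in C^I$. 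Therefore $k_{\emme}(C)\geq i+1$.

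It then remains to prove $(\star_{i+1})$, i.e. $\emme_{i+1}\models E_{i+1}$. The strict inclusions of $E_{i+1}$ hold because $\emme_{i+1}$ carries the same interpretation function $I$ as $\emme$, and their truth does not depend on $<$. For a surviving defeasible inclusion $\tip(C)\sqsubseteq D\in E_{i+1}$ the syntactic characterization gives $\rf(C)\geq i+1$, so by the part 1) just proved $k_{\emme}(x)\geq i+1$ for every $x\in C^I$. On such elements the shift $k_{\emme_{i+1}}(x)=k_{\emme}(x)-(i+1)$ is strictly order preserving, so the preference order restricted to $C^I$ is unchanged and $Min_{<_{i+1}}(C^I)=Min_{<}(C^I)$. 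Since $\emme\models\text{TBox}$ yields $Min_{<}(C^I)\subseteq D^I$, we conclude $Min_{<_{i+1}}(C^I)\subseteq D^I$, i.e. $\emme_{i+1}\models\tip(C)\sqsubseteq D$.

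The step I expect to be the crux is this order-preservation argument inside $(\star_i)$: it is exactly where part 1) is needed, because only the knowledge that every instance of a concept of rank $\geq i$ already sits at height $\geq i$ in $\emme$ guarantees that collapsing the bottom $i$ layers to rank $0$ leaves the minimal elements of $C^I$ untouched. The dependency is genuinely intertwined at each level, so the induction must be threaded so that part 1) at level $i$ yields $(\star_i)$, which in turn yields part 1) at level $i+1$; neither claim can be proved in isolation. A minor but needed point is that every rank-$0$ element of $\emme_i$ lies in $Min_{<_i}(\Delta)$, which is immediate since no element has smaller rank.
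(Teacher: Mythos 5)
Your proof is correct and takes essentially the same route as the paper's: a simultaneous induction on $i$ in which part 2) is read as the statement $\emme_i \models E_i$, part 1) at the next level is extracted from it via exceptionality of $C$ (so that the rank-$0$ elements of the shifted model avoid $C^I$), and $\emme_{i+1} \models E_{i+1}$ is then recovered from part 1) via order-preservation of the rank shift on elements above the cut. The details you make explicit --- the characterization that $\tip(C) \sqsubseteq D \in E_i$ exactly when $\rf(C) \geq i$, and the argument that $Min_{<_i}(C^I) = Min_{<}(C^I)$ --- are precisely the steps the paper's terser proof leaves implicit.
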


\begin{proof} By induction on $i$.
For $i=0$, 1) holds (since it always holds that $k_{\emme}($C$) \geq 0$).
2) holds trivially as $\emme$$_0=\emme$.

For $i>0$, 1) holds: if rank($C$)$\geq i$, then, by Definition \ref{Def:Rank of a formula_DL},
for all $j < i$, we have that $E_{j} \models \tip(\top) \sqsubseteq \neg C$. By inductive hypothesis on 2), for all $j<i$, $\emme$$_j \models \tip(\top) \sqsubseteq \neg C$.
Hence, for all $x$ with $k_{\emme}(x) < i$, $x \not \in C^I$, and $k_{\emme} (C) \geq i$.

To prove 2), we reason as follows. Since $E_i \subseteq E_0$, $\emme$ $\models E_i$. Furthermore by
definition of rank, for all $\tip(C) \sqsubseteq D \in E_i$, rank($C$) $\geq i$, hence by 1) just proved $k_{\emme} (C) \geq i$. Hence, in $\emme$, $Min_{<}(C^I)\geq i$, and also $\emme$$_i \models \tip(C) \sqsubseteq D$. Therefore  $\emme$$_i \models E_i$.
 \vspace{-0.3cm}
$\hfill \Box$ \end{proof}

\noindent  Let us now focus our attention on minimal canonical models by proving the  correspondence between rank of a formula (as in Definition \ref{Def:Rank of a formula_DL})
and rank of a formula in a model (as in Definition \ref{definition_rank_formula}).
The following proposition 
is proved by induction on the rank $i$:

\begin{proposition}\label{proposition_rank}
Given KB and $\lingconc$, for all $C \in \lingconc$, if  $\rf(C)=i$, then:
1. there is a $\{C_1 \dots C_n\} \subseteq \lingconc$ maximal and consistent with KB such that $C \in \{C_1 \dots C_n\}$ and $\rf(C_1 \sqcap \dots \sqcap  C_n) = i$;
2. for any $\emme$ minimal canonical model of KB, $k_{\emme}(C) =i$.\end{proposition}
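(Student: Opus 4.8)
The plan is to prove both statements simultaneously by induction on $i=\rf(C)$, noting that one inequality of part 2 is already free: since $\rf(C)=i$ and $x\in C^I$ forces $k_{\emme}(x)\geq i$ by Proposition \ref{prop:B_DL}(1), every minimal canonical model satisfies $k_{\emme}(C)\geq i$. The real work is to produce the maximal consistent type of part 1 and to establish the matching upper bound $k_{\emme}(C)\leq i$. Throughout I use the correspondence $\tip(C'')\sqset D''\in E_i$ iff $\rf(C'')\geq i$, which follows directly from the construction of the $E_i$'s and the definition of $\rf$.

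For part 1, since $\rf(C)=i$ the concept $C$ is not exceptional for $E_i$, so there is a model of $E_i$ with a rank-$0$ element $x$ in $C^I$. I take $\Gamma=\{A\in\lingconc \mid x\in A^I\}$, a maximal $\lingconc$-type containing $C$ for which the conjunction $C_1\sqcap\dots\sqcap C_n$ of its members is not exceptional for $E_i$, giving $\rf(C_1\sqcap\dots\sqcap C_n)\leq i$; the reverse inequality holds because $C_1\sqcap\dots\sqcap C_n\sqsubseteq C$ forces $\rf(C_1\sqcap\dots\sqcap C_n)\geq\rf(C)=i$. The delicate point is that $\Gamma$ must be consistent with KB, not merely with $E_i$. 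I would establish this by a disjoint-union-and-reranking argument: first observe that the rank-$0$ witness $x$ can falsify only those $\tip(C'')\sqset D''$ absent from $E_i$, i.e. with $\rf(C'')<i$; then glue the $E_i$-model, with all ranks shifted up by $i$, to a minimal canonical model $\emme^{*}$ of KB, which exists by Proposition \ref{modello_minimo_canonico_DL}. By the induction hypothesis on part 2, every such $C''$ has $k_{\emme^{*}}(C'')=\rf(C'')<i$, so in the union the minimal $C''$-elements come from $\emme^{*}$ and satisfy $D''$, while the pushed-up witness is never minimal in any violated $C''$; the union is thus a model of KB realizing $C_1\sqcap\dots\sqcap C_n$, so $\Gamma$ is consistent with KB. The base case $i=0$ needs nothing here, since $E_0$ is the whole TBox and any model of $E_0$ is already a model of KB.

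For part 2 it remains to prove $k_{\emme}(C)\leq i$ in a minimal canonical model $\emme$. Using $\Gamma$ from part 1, canonicity yields $x\in(C_1\sqcap\dots\sqcap C_n)^I\subseteq C^I$, and I compute $k_{\emme}(x)$ via Theorem \ref{teocara}. If $S^{\emme}_x=\vuoto$ then $k_{\emme}(x)=0\leq i$; otherwise $k_{\emme}(x)=1+\max\{k_{\emme}(C')\mid \tip(C')\sqset D'\in S^{\emme}_x\}$. The crucial claim is that every $\tip(C')\sqset D'$ falsified by $x$ has $\rf(C')<i$: since $x$ realizes $\Gamma$ and $C',D'\in\lingconc$ we have $C',\neg D'\in\Gamma$, and if $\rf(C')\geq i$ then $\tip(C')\sqset D'\in E_i$ would be satisfied by a rank-$0$ realization $w$ of $C_1\sqcap\dots\sqcap C_n$ in a model of $E_i$ (which exists because $\rf(C_1\sqcap\dots\sqcap C_n)=i$); but $w\in C'$ makes $w$ typical in $C'$, hence $w\in D'$, contradicting $\neg D'\in\Gamma$. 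Thus $\rf(C')<i$, the induction hypothesis gives $k_{\emme}(C')=\rf(C')\leq i-1$, and so $k_{\emme}(x)\leq i$. Combining with $k_{\emme}(C)\geq i$ yields $k_{\emme}(C)=i$.

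The main obstacle is the KB-consistency step inside part 1, i.e. passing from ``realizable at rank $0$ in a model of the reduced theory $E_i$'' to ``consistent with the full KB''. Everything else is bookkeeping with Proposition \ref{prop:B_DL}, Theorem \ref{teocara}, and the exceptionality/rank correspondence, but the union-and-reranking construction is where the finite-rank characterization of minimal models and the induction hypothesis on lower ranks are genuinely needed.
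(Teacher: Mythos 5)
Your proof is correct, and most of its skeleton coincides with the paper's: the simultaneous induction on $i$, the choice of $\Gamma$ as the $\lingconc$-type of a rank-$0$ witness in a model of $E_i$, the squeeze giving $\rf(C_1\sqcap\dots\sqcap C_n)=i$, the consistency-with-KB step via gluing the $E_i$-model (shifted up by $i$) onto a minimal canonical model while using the inductive hypothesis of part 2 to neutralize the inclusions outside $E_i$, and the lower bound $k_{\emme}(C)\geq i$ from Proposition \ref{prop:B_DL} are all exactly the paper's moves. The genuine divergence is the upper bound $k_{\emme}(C)\leq i$ in part 2. The paper proves it by an ad hoc perturbation: assuming every $\Gamma$-instance of $\emme$ has rank $>i$, it lowers one such element to rank $i$, re-verifies that the perturbed structure is still a model of KB (using the same dichotomy you use: inclusions $\tip(E)\sqsubseteq G$ with $\rf(E)<i$ are harmless by the inductive hypothesis, while those in $E_i$ are respected because the witness of the $E_i$-model satisfies them at rank $0$), and then contradicts minimality w.r.t. $<_{\mathit{FIMS}}$. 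You instead apply the finite-rank characterization of minimal models, Theorem \ref{teocara} (in effect Proposition \ref{cara3}), to the $\Gamma$-instance $x$ directly: every inclusion in $S^{\emme}_x$ has antecedent of rank $<i$ by your witness argument, so Theorem \ref{teocara}(b) together with the inductive hypothesis gives $k_{\emme}(x)\leq 1+(i-1)=i$. This reuses machinery the paper establishes but does not invoke at this point; it is shorter, avoids re-checking model-hood of a perturbed interpretation, and handles base case and inductive step uniformly (for $i=0$ your claim forces $S^{\emme}_x=\emptyset$, hence $k_{\emme}(x)=0$), whereas the paper argues $i=0$ separately. What the paper's route buys is that the roles of canonicity and FIMS-minimality stay explicit rather than being encapsulated in Theorem \ref{teocara}. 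One small repair: your remark that the base case ``needs nothing'' because a model of $E_0$ is already a model of KB ignores the ABox (a model of the TBox need not satisfy it); the paper's own base case makes the same silent identification, and in any event your gluing argument applies verbatim at $i=0$, so nothing is lost by running it there as well.
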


\begin{proof}
By induction on $i$. Let us first consider the base case in which $i =0$. We have that KB $\not\modelsshiqrt \tip(\top) \sqsubseteq \neg C$. Then there is a minimal  model $\emme$$_1$ of KB with a domain element $x$ such that $k_{{\emme}_{1}}$$(x)=0$ and $x$ satisfies $C$. For 1): consider the maximal consistent set of concepts in $\lingconc$ of which $x$ is an instance in $\emme$$_1$.  This is a maximal consistent $\{C_1 \dots C_n\} \subseteq \lingconc$ containing $C$. Furthermore, $\rf(C_1 \sqcap \dots \sqcap  C_n) = 0$ since
clearly KB $\not\modelsshiqrt \tip(\top) \sqsubseteq \neg (C_1 \sqcap \dots \sqcap  C_n)$. For 2): by definition of canonical model, in any canonical model $\emme$ of KB, $\{C_1 \dots C_n\}$ is satisfiable by an element $x$. Furthermore, in any minimal canonical $\emme$, $k_{\emme}(x)=0$, since otherwise we could build $\emme'$ identical to $\emme$ except from the fact that  $k_{\emme'}(x)=0$. It can be easily proven that $\emme'$ would still be a model of KB (indeed $\{C_1 \dots C_n\}$ was already satisfiable in $\emme$$_1$ by an element with rank 0) and  $\emme' <_{\mathit{FIMS}} \emme$, against the minimality of $\emme$. Therefore, in any minimal canonical model $\emme$ of KB, it holds $k_{\emme}(C)=0$.

For the inductive step, consider the case in which $i>0$. We have that
 $E_i \not\modelsshiqrt \tip(\top) \sqsubseteq \neg C$, then there  must be a model $\emme_1=\la \Delta_1, <_1, I_1\ra$ of $E_i$, and a domain element $x$ such that $k_{{\emme}_1}(x)=0$ and $x$ satisfies $C$. Consider the maximal consistent set of concepts $\{C_1, \dots Cn\} \subseteq \lingconc$ of which $x$ is an instance in $\emme$$_1$.  $C \in \{C_1, \dots Cn\}$. Furthermore, $\rf(C_1 \sqcap \dots \sqcap  C_n) = i$. Indeed $E_{i-1} \modelsshiqrt \tip(\top) \sqsubseteq \neg (C_1 \sqcap \dots \sqcap  C_n)$ (since $E_{i-1} \modelsshiqrt \tip(\top) \sqsubseteq \neg C$ and $C \in \{C_1, \dots Cn\}$), whereas clearly by the existence of $x$,  $E_{i} \not\modelsshiqrt \tip(\top) \sqsubseteq \neg (C_1 \sqcap \dots \sqcap  C_n)$.
In order to prove 1) we are left to prove that the set $\{C_1, \dots Cn\}$ (that we will call $\Gamma$ in the following) is consistent with KB. 

To prove this, take any minimal canonical model  $\emme=\la \Delta, <, I\ra$ of KB. By inductive hypothesis we know that for all concepts $C'$ such that $\rf(C') < i$, there is a maximal consistent set of concepts $\{C'_1, \dots C'_n \}$ with  $C' \in \{C'_1, \dots C'_n \}$ and $\rf(C'_1 \sqcap \dots \sqcap C'_n) =j < i$. Furthermore, we know that  $k_{\emme}(C')  = j < i$. For a contradiction, if $\emme$ did not contain any element satisfying $\Gamma$ we could expand 
it 
by adding to $\emme$ a portion of the model $\emme_1$ including $x \in \Delta_1$.
More precisely, we add to $\emme$ a new set of domain elements $\Delta_x \subseteq \Delta_1$, 
containing the domain element  $x$ of $\emme_1$  and all the domain elements of $\Delta_1$ which are reachable from $x$ in $\emme_1$ through a sequence of relations $R_i^{I_1}$s  or $(R^{-}_i)^{I_1}$s. 
Let $\emme'$  be the resulting model. We define $I'$ on the elements of $\Delta$ as in $\emme$, while we define $I'$ on the element of $\Delta_x$  as in $I_1$.
Finally, we let, for all $w \in \Delta$, $k_{\emme'}(w) =k_{\emme}(w)$
and,  for all $y \in \Delta_x$, $k_{\emme'}(y) =i+k_{\emme_1}(y)$.
In particular, $k_{\emme'}(x) =i$.
The resulting model  $\emme'$ would still be a model of KB.  Indeed, the ABox would still be satisfied by the resulting model (being the $\emme$ part unchanged). For the TBox: all domain elements  already in $\emme$ still satisfy all the inclusions. For all $y \in \Delta_x$ (including $x$): for all inclusions in $E_i$, $y$ satisfies them (since it did it in $\emme_1$); for all typicality inclusions $\tip(D) \sqsubseteq G \in$ KB $- E_i$, $\rf(D) < i$, hence by inductive hypothesis $k_{\emme}(D) < i$, hence $k_{\emme'}(D) < i$, and $y$ is not a typical instance of $D$ and trivially satisfies the inclusion. 
It is easy to see that $\emme'$ also satisfies role inclusions $R \sqsubseteq S$ and that, for each transitive roles $R$, $R^{I'}$ is transitive.

We have then built a model of KB satisfying $\Gamma$. Therefore $\Gamma$  is consistent with KB, and therefore by definition of canonical model, $\Gamma$ must be satisfiable in $\emme$. Up to now we have proven that $\Gamma$ is maximal and consistent with KB, it contains $C$ and has rank $i$, therefore point 1) holds. 

In order to prove point 2) we need to prove that any minimal canonical model $\emme$ of KB not only satisfies $\Gamma$ but it satisfies it with rank $i$, i.e. $k_{\emme}(C_1 \sqcap \dots \sqcap  C_n)= i$, which entails $k_{\emme}(C)= i$ (since $C \in \{C_1, \dots C_n \}$). By Proposition \ref{prop:B_DL} we know that $k_{\emme}(C_1 \sqcap \dots \sqcap  C_n) \geq i$. We need to show that also $k_{\emme}(C_1 \sqcap \dots \sqcap  C_n) \leq i$. We reason as above: for a contradiction suppose $k_{\emme}(C_1 \sqcap \dots \sqcap  C_n) > i$, i.e., for all the minimal domain elements $y$ instances of $C_1 \sqcap \dots \sqcap  C_n$,  $k_{\emme}(y) > i$. We show that this contradicts the minimality of $\emme$. Indeed consider $\emme'$ obtained from $\emme$ by letting  $k_{\emme'}(y) = i$, for some minimal domain element $y$ instance of $C_1 \sqcap \dots \sqcap  C_n$, and leaving all the rest unchanged. $\emme'$ would still be a model of KB: the only thing that changes with respect to $\emme$ is that $y$ might have become in $\emme'$ a minimal instance of a concept of which it was only a non-typical instance in $\emme$. This might compromise the satisfaction in $\emme$ of a typical inclusion as $\tip(E) \sqsubseteq G$. However: if $\rf(E) < i$, we know by inductive hypothesis that $k_{\emme}(E) < i$ hence also $k_{\emme'}(E) < i$ and $y$ is not a minimal instance of $E$ in $\emme'$.  
If $\rf(E) \geq i$, then $\tip(E) \sqsubseteq G \in E_i$. As $y\in C_1 \sqcap \dots \sqcap  C_n$ (where $\{C_1, \dots C_n \}$ is maximal consistent with KB), we have that:
$y \in F^I$ iff $x \in F^{I_1}$, for all concepts $F$.
If $y \in E^I$, then $E \in \{C_1, \dots C_n \}$. Hence, in $\emme_1$, $x \in E^{I_1}$. But $\emme_1$ is a model of $E_i$, and satisfies all the inclusions in $E_i$.
Therefore $x \in G^{I_1}$ and, thus, $y \in G^I$.

It follows that $\emme'$ would be a model of KB, and $\emme'$$ <_{\mathit{FIMS}} \emme$, against the minimality of $\emme$. We are therefore forced to conclude that $k_{\emme}(C_1 \sqcap \dots \sqcap  C_n)= i$, and hence also $k_{\emme}(C)= i$, and 2) holds. 
\vspace{-0.3cm}
 
$\hfill \Box$ \end{proof}

\noindent The following theorem follows  from the propositions above:

\begin{theorem}\label{Theorem_RC_TBox}
Let KB=(TBox,ABox) be a knowledge base and $C \sqsubseteq D$ a query.
We have that $C \sqsubseteq D \in$ $\overline{\mathit{TBox}}$ if and only if $C \sqsubseteq D$ holds in all minimal  canonical models of KB with respect to $\lingconc$.
\end{theorem}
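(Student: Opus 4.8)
The plan is to handle the query according to the two components of the definition of $\overline{\mathit{TBox}}$, reducing in each case the behaviour of minimal canonical models to the syntactic rank function via Proposition~\ref{proposition_rank}. Consider first a typicality query $\tip(C) \sqsubseteq D$. I would start from the elementary observation that $C$ is exceptional for $E_i$ exactly when both $C \sqcap D$ and $C \sqcap \neg D$ are, whence $\rf(C) = \min\{\rf(C \sqcap D), \rf(C \sqcap \neg D)\}$; consequently the membership condition $\rf(C) < \rf(C \sqcap \neg D)$ is equivalent to $\rf(C \sqcap D) < \rf(C \sqcap \neg D)$. Fixing any minimal canonical model $\emme$ (one exists by Proposition~\ref{modello_minimo_canonico_DL}), Proposition~\ref{proposition_rank} gives $k_{\emme}(C \sqcap D) = \rf(C \sqcap D)$ and $k_{\emme}(C \sqcap \neg D) = \rf(C \sqcap \neg D)$, the ranks of these Boolean combinations being determined by the maximal consistent subsets of $\lingconc$ realizing them, for which the proposition equates syntactic and semantic rank. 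Finally Proposition~\ref{Truth conditions conditionals with rank} tells us that $\emme \models \tip(C) \sqsubseteq D$ iff $k_{\emme}(C \sqcap D) < k_{\emme}(C \sqcap \neg D)$. Chaining these equivalences yields $\emme \models \tip(C) \sqsubseteq D$ iff $\rf(C \sqcap D) < \rf(C \sqcap \neg D)$ iff $\tip(C) \sqsubseteq D \in \overline{\mathit{TBox}}$; since the right-hand side is independent of $\emme$, the inclusion holds in \emph{all} minimal canonical models precisely when it belongs to $\overline{\mathit{TBox}}$.

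The degenerate case $\rf(C) = \infty$ has to be argued on its own, since Proposition~\ref{proposition_rank} speaks only of finite ranks. Here $C$ is exceptional for every $E_i$, so $E_i \modelsshiqrt \tip(\top) \sqsubseteq \neg C$ for all $i$; by part 1 of Proposition~\ref{prop:B_DL} this forces $k_{\emme}(C) \geq i$ for every $i$, and since every element of a minimal model has finite rank by Proposition~\ref{finiti nicola}, we conclude $C^{\emme} = \emptyset$. Then $\tip(C) \sqsubseteq D$ is vacuously satisfied in every minimal canonical model, matching its unconditional membership in $\overline{\mathit{TBox}}$ in this case.

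For a strict query $C \sqsubseteq D$, with $C, D$ ordinary $\shiq$ concepts, membership in $\overline{\mathit{TBox}}$ means $K_F \models_{\shiq} C \sqsubseteq D$. Soundness is immediate: a minimal canonical model satisfies KB, so its $\shiq$-reduct satisfies $K_F$ and therefore $C \sqsubseteq D$. For the converse I would argue by contraposition: if $K_F \not\models_{\shiq} C \sqsubseteq D$, then, via the $\shiq$/$\shiqrt$ correspondence of Proposition~\ref{relazione_shiq_shiqrt}, the concept $C \sqcap \neg D$ is consistent with KB; as $C, \neg D \in \lingconc$, the defining property of canonical models (Definition~\ref{def-canonical-model-DL}) yields in every minimal canonical model a domain element in $(C \sqcap \neg D)^{\emme}$, so $C \sqsubseteq D$ fails there.

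The heart of the argument is Proposition~\ref{proposition_rank}, already established; given it, the typicality case is essentially mechanical and the whole difficulty migrates to the two boundary phenomena the rank correspondence does not itself cover. The first is the vacuous treatment of infinite-rank concepts, which I close using the finiteness of ranks in minimal models (Proposition~\ref{finiti nicola}) together with Proposition~\ref{prop:B_DL}. The second, and the more delicate, is the strict-inclusion direction: one must convert the failure of $K_F \models_{\shiq} C \sqsubseteq D$ into consistency of $C \sqcap \neg D$ with the \emph{full} KB in $\shiqrt$, which does not follow from extending a single $\shiq$ model of $K_F$ but instead from the entailment-level correspondence of Proposition~\ref{relazione_shiq_shiqrt} and the satisfiability of KB.
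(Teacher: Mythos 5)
Your proof follows essentially the paper's own route: the same split into strict versus typicality queries, with Proposition~\ref{proposition_rank} doing the real work in the typicality case, Proposition~\ref{Truth conditions conditionals with rank} converting satisfaction of $\tip(C)\sqsubseteq D$ into a rank comparison, and canonicity (Definition~\ref{def-canonical-model-DL}) settling the strict case. Your syntactic identity $\rf(C)=\min\{\rf(C\sqcap D),\rf(C\sqcap\neg D)\}$ is just the syntactic counterpart of the semantic identity $k_{\emme}(C)=\min\{k_{\emme}(C\sqcap D),k_{\emme}(C\sqcap\neg D)\}$ that the paper uses implicitly, so this is a variation in bookkeeping, not in substance.

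Two detail-level points deserve mention. First, your separate treatment of $\rf(C)=\infty$ (exceptionality at every level, hence $k_{\emme}(C)\geq i$ for all $i$ by part 1 of Proposition~\ref{prop:B_DL}, hence $C^I=\emptyset$ by finiteness of element ranks, Proposition~\ref{finiti nicola}) is a genuine improvement in rigor: at the corresponding point the paper simply cites Proposition~\ref{proposition_rank}, which speaks only of finite ranks, so the argument you give is the one actually required. Note, however, that the same supplement is also needed \emph{inside} your main equivalence chain in the subcase where $\rf(C)$ is finite but $\rf(C\sqcap\neg D)=\infty$: there Proposition~\ref{proposition_rank} cannot be applied to $C\sqcap\neg D$ either, and the fix is your own degenerate-case argument applied to $C\sqcap\neg D$ in place of $C$. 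Second, your appeal to Proposition~\ref{relazione_shiq_shiqrt} in the strict case is a misattribution: that proposition relates $\shiqrt$-entailment from KB to $\shiq$-entailment from the Box-encoding KB$'$, not from $K_F$, so it does not yield the step you need, namely that $K_F\not\models_{\shiq} C\sqsubseteq D$ implies that $\{C,\neg D\}$ is consistent with the \emph{full} KB in the $\shiqrt$ sense of Definition~\ref{def-canonical-model-DL}. Here you are in the same position as the paper, whose proof asserts exactly this transfer with no justification; and the step is genuinely problematic, since a defeasible inclusion such as $\tip(C\sqcap\neg D)\sqsubseteq\bot$ forces $(C\sqcap\neg D)^I=\emptyset$ in every $\shiqrt$ model of KB while leaving $K_F\not\models_{\shiq} C\sqsubseteq D$. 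So neither your citation nor the paper's silence closes this point; your merit is to have at least identified it as the delicate one.
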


\begin{proof}
{\em (Only if part)}
Assume that $C \sqsubseteq D$ holds in all minimal  canonical models
of KB with respect to $\lingconc$, and let $\emme=$$\langle \Delta, <, I \rangle$
be a minimal canonical model of KB satisfying $C \sqsubseteq D$.
Observe that $C$ and $D$ (and their complements) belong to $\lingconc$.
We consider two cases: (1) the left end side of the inclusion $C$ does not contain the typicality operator, and (2) the left end side of the inclusion is $\tip(C)$.

In case (1), if the minimal canonical model $\emme$ of KB satisfies $C \sqsubseteq D$.
Then, $C^I \subseteq D^I$.
For a contradiction, let us assume that $C \sqsubseteq D \not \in$ $\overline{\mathit{TBox}}$.
Then, by definition of $\overline{\mathit{TBox}}$, it must be: KB $\not \models_{\shiq} C \sqsubseteq D$.
Hence, KB $\not \models_{\shiq} C \sqcap \neg D \sqsubseteq \bot$,
and the set of concepts $\{C, \neg D\}$ is consistent with KB.
As $\emme$ is a canonical model of KB, there must be a element $x \in \Delta$ such that
$x \in (C \sqcap \neg D)^I$.
This contradicts the fact that $C^I \subseteq D^I$.

In case (2), assume $\emme$ satisfies $\tip(C) \sqsubseteq D$.
Then, $\tip(C)^I \subseteq D^I$, i.e., for each $x \in Min_{<}(C^I)$, $x \in D^I$.
 If $Min_{<}(C^I) = \emptyset$, then
there is no $x \in C^I$ (by the smoothness condition),
hence $C$ has no rank
in $\emme$ and, by Proposition \ref{proposition_rank},
$C$ has no rank ($\rf(C)=\infty$). In this case, by Definition \ref{def:rational closureDL}, $\tip(C) \sqsubseteq D \in
\overline{\mathit{TBox}}$.
Otherwise, 
let us assume that $k_{\emme}(C) = i$.
As
$k_{\emme}(C \sqcap D) < k_{\emme}(C \sqcap \neg D)$, then
$k_{\emme}(C \sqcap \neg D)> i$. By Proposition \ref{proposition_rank},
$rank(C)=i$ and $rank(C \sqcap \neg D)> i$. Hence, by Definition
\ref{def:rational closureDL}, $\tip(C) \sqsubseteq D \in \overline{\mathit{TBox}}$.

{\em (If part)}
If $C \sqsubseteq D \in \overline{\mathit{TBox}}$, then, by definition of $ \overline{\mathit{TBox}}$,
KB $\models_{\shiq} C \sqsubseteq D$.
Therefore, each minimal canonical model $\emme$ of KB satisfies $C \sqsubseteq D$.

If $\tip(C) \sqsubseteq D \in \overline{\mathit{TBox}}$,
then by  Definition \ref{def:rational closureDL},
either (a) $rank(C) < rank(C \sqcap \neg D)$, or (b) $C$ has no
rank.
Let $\emme$ be any minimal canonical model of KB.
In the case (a), by Proposition \ref{proposition_rank}, $k_{\emme}(C) < k_{\emme}(C \sqcap \neg D)$, which entails $k_{\emme}(C \sqcap D) < k_{\emme}(C \sqcap \neg D)$. Hence $\emme$ satisfies $\tip(C) \sqsubseteq D$.
In case (b), by Proposition \ref{proposition_rank}, $C$ has no rank in $\emme$, hence $\emme$ satisfies $\tip(C) \sqsubseteq D$.
 \vspace{-0.3cm}
$\hfill \Box$\end{proof}


\hide{

\begin{theorem}[Complexity of rational closure over TBox]
Given a knowledge base KB$=(TBox,ABox)$, the problem of deciding whether
$\tip(C) \sqsubseteq D \in \overline{\mathit{TBox}}$ is in \textsc{ExpTime}.
\end{theorem}

\begin{proof}
Checking if $\tip(C) \sqsubseteq D \in \overline{\mathit{TBox}}$ can be done by
computing the finite sequence  $E_0, E_1, \ldots, E_n$ of non increasing subsets of TBox inclusions
in the construction of the rational closure.
Note that the number $n$ of the $E_i$ is  $O(|\mbox{KB}|)$,
where $|\mbox{K}|$ is the size of the KB.
Computing each $E_i ={\cal E}(E_{i-1})$, requires to check, for all concepts $C'$
occurring on the left hand side of an inclusion in the TBox, whether $E_{i-1} \modelsshiqrt \tip(\top) \sqsubseteq \neg C'$,
which requires an exponential time in the size of $E_{i-1}$ (and hence in the size of KB).
If not already checked, the exceptionality of $C$ and of $C\sqcap \neg D$ have to be checked for each $E_i$,
to determine the ranks of $C$ and of $C\sqcap \neg D$ (which also requires an exponential time in the size of KB).
Hence, verifying if $\tip(C) \sqsubseteq D \in \overline{\mathit{TBox}}$ is in \textsc{ExpTime}.
\vspace{-0.2cm} $\hfill \bbox$ \end{proof}

}

\section{Rational Closure over the ABox} \label{Section:Abox}
\vspace{-0.3cm}


The definition of rational closure in Section \ref{Rat_Closure_DL} takes only into account the TBox. We address the issue of ABox reasoning first by  the semantical side: \hide{in order to treat individuals explicitly mentioned in the ABox in a uniform way with respect to the other domain elements:} as for any  domain element, we would like to attribute to each individual constant named in the ABox the lowest possible rank. Therefore we further refine Definition \ref{Preference between models wrt TBox} of minimal canonical models with respect to TBox by taking into account the interpretation of individual constants of the ABox. 

\begin{definition}[Minimal canonical model w.r.t. ABox]\label{model-minimally-satisfying-k}
Given KB=(TBox,ABox), let $\emme = $$\langle \Delta, <, I \rangle$ and $\emme' =
\langle \Delta', <', I' \rangle$ be two canonical models of KB which are minimal w.r.t. Definition \ref{Preference between models wrt TBox}. We say that $\emme$ is preferred to $\emme'$ w.r.t. ABox ($\emme <_{\mathit{ABox}} \emme'$) if, for all individual constants $a$ occurring in ABox, $k_{\emme}(a^I) \leq k_{\emme'}(a^{I'})$ and there is at least one individual constant $b$ occurring in ABox such that  $k_{\emme}(b^I) < k_{\emme'}(b^{I'})$.
\end{definition}

\noindent As a consequence of Proposition  \ref{modello_minimo_canonico_DL}
 we can prove that:
\begin{theorem}\label{modello_minimo_DL_ABox}
For any KB$=(TBox, ABox)$ there exists a minimal canonical model of KB with respect to ABox. 
\end{theorem}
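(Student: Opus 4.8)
The plan is to reduce the existence of a $<_{\mathit{ABox}}$-minimal model to the well-ordering of $\mathbb{N}$. First, by Proposition \ref{modello_minimo_canonico_DL} the class $\mathcal{K}$ of canonical models of KB that are minimal with respect to Definition \ref{Preference between models wrt TBox} is nonempty (KB is satisfiable, as otherwise there is nothing to prove). Since KB is finite, only finitely many individual constants $a_1,\dots,a_m$ occur in ABox, so the comparison in Definition \ref{model-minimally-satisfying-k} only involves the finite tuple of ranks $(k_{\emme}(a_1^{I}),\dots,k_{\emme}(a_m^{I}))$ attached to each $\emme=\la \Delta, <, I\ra \in \mathcal{K}$.

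The key observation is that each of these ranks is a natural number. Indeed, a canonical model minimal with respect to Definition \ref{Preference between models wrt TBox} is in particular minimal in the sense of Definition \ref{Preference between models in case of fixed valuation}, so Proposition \ref{finiti nicola} applies and every $k_{\emme}(x)$, and in particular every $k_{\emme}(a_i^{I})$, is a finite ordinal. This lets us replace the componentwise (Pareto) order of Definition \ref{model-minimally-satisfying-k} by a single well-ordered scale: for $\emme\in\mathcal{K}$ define $n(\emme)=\sum_{i=1}^{m} k_{\emme}(a_i^{I})\in\mathbb{N}$. The set $\{n(\emme)\mid \emme\in\mathcal{K}\}$ is a nonempty subset of $\mathbb{N}$, hence has a least element, attained by some $\emme^{*}\in\mathcal{K}$ with interpretation $I^{*}$. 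I claim $\emme^{*}$ is a minimal canonical model of KB with respect to ABox. Suppose not; then there is $\emme'\in\mathcal{K}$ (with interpretation $I'$) such that $\emme' <_{\mathit{ABox}} \emme^{*}$, i.e. $k_{\emme'}(a_i^{I'})\leq k_{\emme^{*}}(a_i^{I^{*}})$ for all $i$ and $k_{\emme'}(a_j^{I'})< k_{\emme^{*}}(a_j^{I^{*}})$ for at least one $j$. Summing over $i$ yields $n(\emme')<n(\emme^{*})$, contradicting the minimality of $n(\emme^{*})$. Hence no such $\emme'$ exists, and $\emme^{*}$ is the required model.

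I expect the only substantive points to be the two earlier ingredients, rather than any new difficulty: Proposition \ref{modello_minimo_canonico_DL} for nonemptiness of $\mathcal{K}$, and Proposition \ref{finiti nicola} for the finiteness of the ranks of the named individuals, which is exactly what makes the scalar surrogate $n(\emme)$ well-defined and $\mathbb{N}$-valued. The finiteness step is the one worth stating carefully: without it the rank tuples would live in a finite product of ordinals, and although such a product is still well-founded under the componentwise order (so a minimal element would still exist by arguing directly that any nonempty subset of a finite product of well-orders has a minimal element), the convenient ``minimize the sum'' shortcut would no longer be available. Using Proposition \ref{finiti nicola} we avoid this and keep the argument entirely within $\mathbb{N}$.
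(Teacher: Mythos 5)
Your proof is correct and follows the paper's intended route: the paper states this theorem without an explicit proof, presenting it as ``a consequence of Proposition \ref{modello_minimo_canonico_DL}'', and your argument supplies exactly the details that claim leaves implicit --- nonemptiness of the class of minimal canonical models, finiteness of the named individuals' ranks (via Definition \ref{Preference between models wrt TBox} reducing to Definition \ref{Preference between models in case of fixed valuation} so that Proposition \ref{finiti nicola} applies), and the well-ordering of $\mathbb{N}$ to extract a $<_{\mathit{ABox}}$-minimal element. Your observation that summing the finitely many ranks turns Pareto-minimization into scalar minimization is a clean and valid way to finish, and your side remark (that even without Proposition \ref{finiti nicola} a componentwise-minimal element would exist by well-foundedness of a finite product of well-orders) is also accurate.
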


%

\noindent In order to see the strength of the above semantics, consider our  example about marriages and VIPs.
\vspace{-0.15cm}
\begin{example}\label{example-ABox-semantic1}
Suppose we have a  KB=(TBox,ABox) where:  TBox=$\{\tip (\mathit{Person}) \sqsubseteq \ \leq 1 \ \mathit{HasMarried}.\mathit{Person},$  
$\tip (\mathit{VIP}) \sqsubseteq \ \geq 2 \ \mathit{HasMarried}.\mathit{Person},$ 
$\mathit{VIP} \sqsubseteq \mathit{Person} \},$
and  ABox = $\{\mathit{VIP}(\mathit{demi}), \mathit{Person}(\mathit{marco})\}$.
 Knowing  that Marco is a person and Demi is a VIP, we would like to be able to assume, in the absence of other information, that Marco is a typical person, whereas Demi is a typical VIP, and therefore Marco has at most one spouse, whereas Demi has at least two. Consider any minimal canonical model $\emme$ of KB. Being canonical, $\emme$ will contain, among other elements, the following:

\vspace{0.1cm}
\begin{footnotesize}
\noindent $x \in (\mathit{Person})^I$, $x \in (\leq 1 \ \mathit{HasMarried}.\mathit{Person})^I$, $x \in (\neg \mathit{VIP})^I$, $k_{\emme}(x)=0$;

\noindent $y \in (\mathit{Person})^I$, $y \in (\geq 2 \ \mathit{HasMarried}.\mathit{Person})^I$, $y \in (\neg \mathit{VIP})^I$, $k_{\emme}(y)=1$;

\noindent $z \in (\mathit{VIP})^I$, $z \in (\mathit{Person})^I$, $z \in (\geq 2 \ \mathit{HasMarried}.\mathit{Person})^I$ , $k_{\emme}(z)=1$;

\noindent $w \in (\mathit{VIP})^I$, $w \in (\mathit{Person})^I$, $w \in (\leq 1 \ \mathit{HasMarried}.\mathit{Person})^I$ , $k_{\emme}(w)=2$.
\vspace{0.1cm}
\end{footnotesize}

\noindent so that $x$ is a typical person and $z$ is a typical VIP. Notice that in the definition of minimal canonical model there is no constraint on the interpretation of   constants $\mathit{marco}$ and $\mathit{demi}$. As far as Definition \ref{Preference between models wrt TBox} is concerned, for instance, $\mathit{marco}$ can be mapped onto $x$ ($(\mathit{marco})^I=x$) or onto $y$ ($(\mathit{marco})^I=y$): the minimality of $\emme$ w.r.t. Definition \ref{Preference between models wrt TBox} is not affected by this choice. However in the first case it would hold that Marco is a typical person, in the second Marco is not a typical person. 
According to Definition \ref{model-minimally-satisfying-k}, we prefer the first case, and there is a unique minimal canonical model w.r.t. ABox in which $(\mathit{marco})^I=x$ and $(\mathit{demi})^I=z$.
\end{example}

\noindent We next provide an algorithmic construction for the rational closure of ABox. 
The idea is that of considering all the possible minimal consistent
assignments of ranks to the individuals explicitly named in the ABox. Each
assignment adds some properties to named individuals which can be used to infer
new conclusions. We adopt a skeptical view by considering only those
conclusions which hold for all assignments. The equivalence with the semantics
shows that the minimal entailment captures a skeptical approach when reasoning
about the ABox.
More formally, in order to calculate the rational closure of ABox, written $\overline{\mathit{ABox}}$, for all individual constants of the ABox we find out which is the lowest possible rank they can have in minimal canonical models with respect to Definition \ref{Preference between models wrt TBox}:  the idea is that an individual constant $a_i$ can have a given rank $k_j(a_i)$ just in case it is compatible with all the inclusions $\tip(A)  \sqsubseteq D$ of the TBox whose antecedent $A$'s rank is  $\geq  k_j(a_i)$ 
(the inclusions whose antecedent $A$'s rank is $< k_j(a_i)$ do not matter since, in the canonical model, there will be an instance of $A$ with rank $<  k_j(a_i)$ and therefore $a_i$ will not be a typical instance of $A$). 
The algorithm below computes all minimal rank assignments $k_j$s to all individual constants:
 $\mu^j_{i}$ contains all the concepts that $a_i$ would need to satisfy in case it had the rank attributed by $k_j$ ($k_j(a_i)$). The algorithm verifies whether $\mu^j$ is compatible with ($\overline{\mathit{TBox}}$, ABox) and whether it is minimal. Notice that, in this phase, all constants are considered simultaneously (indeed, the possible ranks of different individual constants depend on each other).
For this reason  $\mu^j $ takes into account the ranks attributed to all individual constants, being the union of all $\mu^j_{i}$ for all $a_i$, and the consistency of this union with ($\overline{\mathit{TBox}}$, ABox) is verified. 

\begin{definition}[$\overline{\mathit{ABox}}$: rational closure of ABox]\label{Rational Closure ABox}
     Let $a_1, \dots, a_m$ be the individuals explicitly named in the ABox. Let
$k_1, k_2, \dots, k_h$ be all the possible rank assignments (ranging from $1$
to $n$) to the individuals occurring in ABox.

\noindent -- Given a rank assignment $k_j$  we define:
\begin{itemize}
\vspace{-0.3cm}
\item for each $a_i$: $\mu^j_{i} = \{ (\neg C \sqcup D)(a_i)$
s.t. $C, D \in \lingconc$, $\tip(C) \sqsubseteq D$ in
$\overline{\mathit{TBox}}$, and  $k_j(a_i) \leq rank(C)  \} \cup \{ ( \neg C
\sqcup D)(a_i) $ s.t. $C \sqsubseteq D$ in TBox $\}$;
\item let $\mu^j = \mu^j_{1} \cup \dots \cup \mu^j_{m} $ for all $\mu^j_{1}
\dots \mu^j_{m} $ just calculated for all $a_1, \dots, a_m$ in  ABox
\end{itemize}

\vspace{-0.15cm}
\noindent -- $k_j$ is \emph{minimal and consistent} with ($\overline{\mathit{TBox}}$, ABox), i.e.:
(i) TBox $\cup$ ABox $\cup \mu^j$ is consistent in $\shiqrt$;
(ii) there is no $k_i$ consistent with ($\overline{\mathit{TBox}}$, ABox) s.t. for all $a_i$, $k_i(a_i) \leq k_j(a_i)$ and for some $b$, $k_i(b) < k_j(b)$.

\noindent -- The rational closure of ABox (
$\overline{\mathit{ABox}}$) is the set of all assertions derivable in $\shiqrt$ from
TBox $\cup$ ABox $\cup \mu^j$ for all minimal consistent rank assignments $k_j$, i.e:
\begin{center}
\vspace{-0.2cm}
\begin{footnotesize}
$\overline{\mathit{ABox}} = \bigcap_{k_j \mbox{{\tiny minimal consistent}}}\{C(a): \;$ TBox $\cup$ ABox $\cup \ \mu^j \
\models_{\shiqrt} C(a) \}$
\end{footnotesize}
\vspace{-0.2cm}
\end{center}
\end{definition}

\noindent 
The  example below is the syntactic counterpart of the semantic Example \ref{example-ABox-semantic1} above.

\vspace{-0.2cm}

\begin{example}\label{example-ABox-algorithm1}
Consider the KB in Example \ref{example-ABox-semantic1}.
%
Computing the ranking of concepts we get that $rank(\mathit{Person}) = 0$, $rank(\mathit{VIP}) = 1$, $rank(\mathit{Person} \ \sqcap \ \geq 2 \ \mathit{HasMarried}.\mathit{Person}) = 1$,
$rank(\mathit{VIP} \ \sqcap \ \leq 1 \ \mathit{HasMarried}.\mathit{Person}) = 2$. It is easy to see that a rank assignment $k_0$ with $k_0(\mathit{demi}) = 0$ is inconsistent with  KB as $\mu^0$ would contain $(\neg \mathit{VIP} \sqcup \mathit{Person})(\mathit{demi})$,  $(\neg \mathit{Person} \ \sqcup \ \leq 1 \ \mathit{HasMarried}.\mathit{Person})(\mathit{demi})$, $(\neg \mathit{VIP} \sqcup \ \geq \ 2 \ \mathit{HasMarried}.$ $\mathit{Person})(\mathit{demi})$ and $\mathit{VIP}(\mathit{demi})$. Thus we are left with only two ranks $k_1$ and $k_2$ with respectively $k_1(\mathit{demi}) = 1, k_1(\mathit{marco}) = 0$  and $k_2(\mathit{demi}) =  k_2(\mathit{marco}) = 1$.

The set $\mu^1$ contains, among the others, $(\neg \mathit{VIP} \ \sqcup \ \geq 2 \ \mathit{HasMarried}.\mathit{Person})(\mathit{demi})$ ,  $(\neg \mathit{Person} \ \sqcup \ \leq 1 \ \mathit{HasMarried}.\mathit{Person})(\mathit{marco})$.
It is tedious but easy to check that KB $\cup \mu^1$ is consistent and that $k_1$ is the only minimal  consistent assignment (being $k_1$ preferred to $k_2$),   thus both $(\geq 2 \ \mathit{HasMarried}.\mathit{Person})(\mathit{demi})$ and $(\leq 1 \ \mathit{HasMarried}.\mathit{Person})$ $(\mathit{marco})$ belong to $\overline{\mathit{ABox}}$.
\end{example}

\noindent We are now ready to show the soundness and completeness of the algorithm with respect to the semantic definition of rational closure of ABox.


\begin{theorem}[Soundness of $\overline{\mathit{ABox}}$]\label{Theorem_Soundness_ABox}
Given KB=(TBox, ABox), for each individual constant $a$ in ABox, we have that
if $C(a) \in$ $\overline{\mathit{ABox}}$ then $C(a)$ holds in all minimal
canonical models with respect to ABox of KB.
\end{theorem}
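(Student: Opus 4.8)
The plan is to connect the minimal canonical models with respect to the ABox to the minimal consistent rank assignments $k_j$ of Definition \ref{Rational Closure ABox}. Concretely, I would show that every minimal canonical model $\emme$ of KB with respect to the ABox induces, through the ranks it assigns to the named individuals, one of the minimal consistent rank assignments, and that $\emme$ is itself a model of the corresponding set TBox $\cup$ ABox $\cup \mu^j$. Once this is established, soundness is immediate: if $C(a) \in \overline{\mathit{ABox}}$ then, by definition of $\overline{\mathit{ABox}}$, TBox $\cup$ ABox $\cup \mu^j \modelsshiqrt C(a)$ for that $j$, and since $\emme$ satisfies TBox $\cup$ ABox $\cup \mu^j$ we conclude $a^I \in C^I$; as $\emme$ is an arbitrary minimal canonical model w.r.t. the ABox, $C(a)$ holds in all of them.

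So fix a minimal canonical model $\emme = \la \Delta, <, I \ra$ with respect to the ABox and define the rank assignment $\bar{k}$ by $\bar{k}(a_i) = k_{\emme}(a_i^I)$ for each individual $a_i$ named in the ABox. First I would verify that $\emme$ satisfies $\mu^{\bar{k}}$. The strict part is trivial, since $\emme \models$ TBox. For a constraint $(\neg C \sqcup D)(a_i)$ arising from some $\tip(C) \sqsubseteq D \in \overline{\mathit{TBox}}$ with $C,D \in \lingconc$ and $\bar{k}(a_i) \le \rf(C)$, assume $a_i^I \in C^I$; then $k_{\emme}(a_i^I) \ge k_{\emme}(C)$, while Proposition \ref{proposition_rank} gives $k_{\emme}(C) = \rf(C) \ge \bar{k}(a_i) = k_{\emme}(a_i^I)$, so $a_i^I$ has minimal rank and hence (by modularity) is a $<$-minimal element of $C^I$, i.e. $a_i^I \in (\tip(C))^I$. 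Since $\emme$ is a minimal canonical model it satisfies $\tip(C) \sqsubseteq D$ by Theorem \ref{Theorem_RC_TBox}, whence $a_i^I \in D^I$. Thus $\emme \models \mu^{\bar{k}}$, and together with $\emme \models$ TBox $\cup$ ABox this shows that TBox $\cup$ ABox $\cup \mu^{\bar{k}}$ is $\shiqrt$-satisfiable, i.e. $\bar{k}$ is consistent.

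The main obstacle is to show that $\bar{k}$ is also \emph{minimal}, i.e. that it satisfies condition (ii) of Definition \ref{Rational Closure ABox}. I would argue by contradiction: suppose there is a consistent assignment $k'$ with $k'(a_i) \le \bar{k}(a_i)$ for all $i$ and $k'(b) < \bar{k}(b)$ for some $b$. The key ingredient is a \emph{realization lemma}: from the consistency of $k'$, witnessed by a model $\enne$ of TBox $\cup$ ABox $\cup \mu^{k'}$, one builds a minimal canonical model $\emme'$ of KB with respect to the TBox in which $k_{\emme'}(a_i^{I'}) \le k'(a_i)$ for every named $a_i$. This is obtained by first making $\enne$ canonical through the expansion construction of Proposition \ref{modello_minimo_canonico_DL} (which adds only fresh elements and leaves the interpretation of the named individuals untouched, hence preserves $\mu^{k'}$), and then minimizing ranks as in Proposition \ref{modello_minimo_DL} (which preserves the interpretation function). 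The delicate point is the rank bound. Since the interpretation of the $a_i$ is preserved, the set $S^{\emme'}_{a_i^{I'}}$ of defeasible inclusions it falsifies is unchanged from $\enne$; moreover every $\tip(C) \sqsubseteq D \in K_D$ belongs to $\overline{\mathit{TBox}}$ (it holds in all models of KB, hence in all minimal canonical ones, so by Theorem \ref{Theorem_RC_TBox} it is in $\overline{\mathit{TBox}}$), so if $a_i$ falsified such an inclusion with $\rf(C) \ge k'(a_i)$ then $(\neg C \sqcup D)(a_i) \in \mu^{k'}_{i}$ would be violated by $\enne$, a contradiction. Hence every inclusion in $S^{\emme'}_{a_i^{I'}}$ has antecedent rank $< k'(a_i)$, and Theorem \ref{teocara} together with Proposition \ref{proposition_rank} gives $k_{\emme'}(a_i^{I'}) = 1 + \max\{\rf(C) \mid \tip(C) \sqsubseteq D \in S^{\emme'}_{a_i^{I'}}\} \le k'(a_i)$.

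Granting the realization lemma, both $\emme$ and $\emme'$ are canonical models minimal with respect to the TBox, and $k_{\emme'}(a_i^{I'}) \le k'(a_i) \le \bar{k}(a_i) = k_{\emme}(a_i^I)$ for all $i$, with strict inequality at $b$; hence $\emme' <_{\mathit{ABox}} \emme$, contradicting the minimality of $\emme$ with respect to the ABox. Therefore $\bar{k}$ is a minimal consistent rank assignment, and the argument of the first paragraph yields $a^I \in C^I$ whenever $C(a) \in \overline{\mathit{ABox}}$, which is the claim. I expect the realization lemma, and in particular the verification that the named individuals retain a rank bounded by $k'(a_i)$ under canonical expansion and minimization, to be the part requiring the most care.
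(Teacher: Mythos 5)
Your proposal follows the same basic route as the paper's (sketched) proof: associate to a minimal canonical model $\emme$ w.r.t.\ ABox the rank assignment $\bar{k}(a_i)=k_{\emme}(a_i^I)$, show $\emme \models \mu^{\bar{k}}$, and conclude via the definition of $\overline{\mathit{ABox}}$. The difference is one of completeness rather than of strategy, and it is in your favour. The paper's proof simply says ``consider the rank assignment $k_j$ corresponding to $\emme$'' and then invokes the definition of $\overline{\mathit{ABox}}$; but since $\overline{\mathit{ABox}}$ is an intersection taken only over \emph{minimal} consistent rank assignments, that step is valid only if the induced assignment is itself minimal consistent, a fact the paper never verifies (its consistency is witnessed by $\emme$ itself, but minimality is not discussed, and without it ``TBox $\cup$ ABox $\cup\, \mu^j \not\modelsshiqrt C(a)$'' does not entail $C(a)\notin\overline{\mathit{ABox}}$, because lowering ranks only enlarges $\mu^j$). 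You identify exactly this obligation and discharge it with your realization lemma: from any consistent $k'$ strictly below $\bar{k}$ you build, by the canonical-expansion construction of Proposition \ref{modello_minimo_canonico_DL} followed by the rank-minimization of Proposition \ref{modello_minimo_DL}, a minimal canonical model $\emme'$ whose named individuals have ranks bounded by $k'$ (via Theorem \ref{teocara} and Proposition \ref{proposition_rank}, using that every inclusion of $K_D$ lies in $\overline{\mathit{TBox}}$, so that $\mu^{k'}$ forbids $a_i$ from falsifying any inclusion whose antecedent has rank $\geq k'(a_i)$); this yields $\emme' <_{\mathit{ABox}} \emme$, contradicting ABox-minimality. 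This is the same model-surgery machinery the paper deploys in the completeness direction (Theorem \ref{Theorem_Completeness_ABox}), so your argument stays within the paper's toolkit while making the soundness proof actually close. One small point to tidy up: when checking $\emme \models \mu^{\bar{k}}$ you implicitly treat $\rf(C)$ as finite in order to apply Proposition \ref{proposition_rank}; in the case $\rf(C)=\infty$ the constraint holds vacuously, since then $\tip(C)\sqsubseteq\bot \in \overline{\mathit{TBox}}$ and Theorem \ref{Theorem_RC_TBox} forces $C^I=\emptyset$ in every minimal canonical model.
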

\begin{proof}[Sketch]
Let $C(a) \in $
$\overline{\mathit{ABox}}$, and suppose for a contradiction that there is a
minimal canonical model $\emme$ with respect to ABox of KB s.t. $C(a)$ does not hold in
$\emme$. Consider now the rank assignment $k_j$ corresponding to $\emme$ (such that $k_j(a_i)=k_{\emme}(a_i)$). 
By hypothesis $\emme$ $\models $ TBox $\cup$ ABox. Furthermore it can be easily shown that $\emme$ $\models \mu^j$.

Since by hypothesis $\emme$$ \not\models C(a)$, it follows that TBox $\cup$ ABox $\cup  \ \mu^j
\not\models_{\shiqrt} C(a)$, and by definition of
$\overline{\mathit{ABox}}$, $C(a) \not\in \overline{\mathit{ABox}}$, against the hypothesis.
$\hfill \Box$ \end{proof}

\begin{theorem}[Completeness of $\overline{\mathit{ABox}}$]\label{Theorem_Completeness_ABox}
Given KB=(TBox, ABox), for all  individual constant $a$ in ABox, we have that
if $C(a)$ holds in all minimal canonical models with respect to ABox of KB, then
$C(a) \in$ $\overline{\mathit{ABox}}$.
\end{theorem}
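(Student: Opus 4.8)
The plan is to argue by contraposition: assuming $C(a)\notin\overline{\mathit{ABox}}$, I will exhibit a minimal canonical model of KB with respect to ABox in which $C(a)$ fails. By Definition \ref{Rational Closure ABox}, $C(a)\notin\overline{\mathit{ABox}}$ means there is a minimal and consistent rank assignment $k_j$ with $\text{TBox}\cup\text{ABox}\cup\mu^j\not\models_{\shiqrt}C(a)$; hence there is a $\shiqrt$ model $\enne$ of $\text{TBox}\cup\text{ABox}\cup\mu^j$ with $a^{\enne}\in(\neg C)^{\enne}$. The goal is to turn $\enne$ into a minimal canonical model whose named-individual ranks coincide with $k_j$, preserving the failure of $C(a)$.

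First I would canonicalize and then minimize $\enne$. Applying the union construction in the proof of Proposition \ref{modello_minimo_canonico_DL} (adding, for each subset of $\lingconc$ consistent with KB, an instance realizing it) yields a canonical model of KB that keeps the interpretation of $\enne$ on its original elements, and in particular keeps $a^{\enne}$ together with its concept memberships and the role assertions of the ABox; then the minimization of Proposition \ref{modello_minimo_DL}, which fixes the domain and the interpretation $I$ and only lowers ranks, produces a minimal canonical model $\emme$ of KB. Since $I$ is unchanged on the part coming from $\enne$, we still have $a^{\emme}\in(\neg C)^{\emme}$, so $\emme\not\models C(a)$.

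Next comes the core step: showing $k_{\emme}(a_i)=k_j(a_i)$ for every named individual $a_i$. Because $\enne\models\mu^j$, for every $\tip(C')\sqsubseteq D'\in\overline{\mathit{TBox}}$ with $\rf(C')\geq k_j(a_i)$ we have $a_i\notin(C'\sqcap\neg D')^{\emme}$; since every defeasible inclusion of KB lies in $\overline{\mathit{TBox}}$ and $I$ is preserved, every inclusion in $S^{\emme}_{a_i}$ has antecedent of rank $<k_j(a_i)$. By Theorem \ref{teocara} together with Proposition \ref{proposition_rank} (which gives $k_{\emme}(C')=\rf(C')$ for $C'\in\lingconc$ in a minimal canonical model) this yields $k_{\emme}(a_i)\leq k_j(a_i)$. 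Conversely, the assignment $a_i\mapsto k_{\emme}(a_i)$ is itself consistent: one checks $\emme\models\mu^{j'}$ using that a named individual of rank $k_{\emme}(a_i)$ belonging to some $C'$ with $\rf(C')\geq k_{\emme}(a_i)$ is forced to be a typical $C'$ (its rank then equals $k_{\emme}(C')=\rf(C')$) and hence satisfies $D'$, as $\emme$ validates $\overline{\mathit{TBox}}$ by Theorem \ref{Theorem_RC_TBox}. Then the minimality of $k_j$ in Definition \ref{Rational Closure ABox} forces $k_{\emme}(a_i)=k_j(a_i)$. The same consistency-plus-minimality argument shows $\emme$ is minimal with respect to ABox: any minimal canonical $\emme''$ with $\emme''<_{\mathit{ABox}}\emme$ would induce a consistent assignment strictly below $k_j$, contradicting its minimality. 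Thus $\emme$ is a minimal canonical model with respect to ABox in which $C(a)$ fails, contradicting the hypothesis.

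I expect the main obstacle to be this core step, namely pinning down that the canonicalize-then-minimize construction assigns to each named individual \emph{exactly} the rank $k_j(a_i)$. This is where three facts must be combined with care: that $\enne\models\mu^j$ bounds from above the exceptionality inherited by $a_i$, that Theorem \ref{teocara} and Proposition \ref{proposition_rank} convert this into the bound $k_{\emme}(a_i)\le k_j(a_i)$, and that the \emph{minimality} of $k_j$ closes the gap from below. A secondary point requiring attention is verifying that canonicalization and minimization genuinely leave the interpretation of the named individuals and the role assertions among them untouched, so that both ABox and $\neg C(a)$ survive the construction.
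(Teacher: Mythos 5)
Your proof is correct and, while it follows the same outer shape as the paper's (contraposition, then exhibit a minimal canonical model w.r.t.\ ABox falsifying $C(a)$), the way you build that model is genuinely different. The paper does \emph{not} canonicalize its auxiliary model $\emme'$ of TBox $\cup$ ABox $\cup\,\mu^j$: it performs a surgical gluing, attaching to a minimal canonical model $\emme''$ of the TBox only the fragment of $\emme'$ consisting of the named individuals and the elements role-reachable from them, assigning the ranks $k_j(a_i)$ to the named individuals by construction and copying the ranks of the other glued elements from ``twin'' elements of $\emme''$; it must then verify by hand that this hybrid structure is a model of KB, canonical, and minimal in both senses (part of which is omitted for space). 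You instead run the ready-made canonicalize-then-minimize machinery of Propositions \ref{modello_minimo_canonico_DL} and \ref{modello_minimo_DL} on $\enne$ itself, so model-hood, canonicity and minimality w.r.t.\ Definition \ref{Preference between models in case of fixed valuation} come for free, and you recover the crucial coincidence $k_{\emme}(a_i)=k_j(a_i)$ \emph{a posteriori}: the upper bound from $\enne\models\mu^j$ via Theorem \ref{teocara} and Proposition \ref{proposition_rank}, the lower bound from consistency of the induced rank assignment (via Theorem \ref{Theorem_RC_TBox}) combined with minimality of $k_j$; the same consistency-plus-minimality argument yields minimality w.r.t.\ Definition \ref{model-minimally-satisfying-k}. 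The trade-off is reasonable in both directions: the paper buys direct control over the named individuals' ranks at the price of re-proving all structural properties of an ad hoc model, while you buy reuse of established propositions at the price of the rank computation, which you carry out correctly. Two debts you should discharge explicitly: first, the fact that every defeasible inclusion of KB belongs to $\overline{\mathit{TBox}}$ is used silently --- it is true, and immediate from Theorem \ref{Theorem_RC_TBox} since every minimal canonical model of KB satisfies it, but it deserves a line; second, the claim that $\neg C(a)$ survives canonicalization and minimization is immediate only for $\tip$-free $C$ (the union adds no cross-component role edges, so $\shiq$-concept memberships are untouched), whereas for a query mentioning $\tip$ the rank changes could matter --- but the paper's own sketch asserts the analogous transfer with exactly the same gap, so this leaves you at parity with it rather than behind.
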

\begin{proof}[Sketch]
We show the contrapositive.
Suppose $C(a) \not\in $ $\overline{\mathit{ABox}}$, i.e. there is a minimal
$k_j$ consistent with ($\overline{\mathit{TBox}}$, ABox) s.t. TBox $\cup$ ABox $\cup \mu^j
\not\models_{\shiqrt} C(a)$. This means that there is an  $\emme' = \langle \Delta', <, I' \rangle$ such that for all $a_i \in ABox$, $k_{\emme'}(a_i) = k_j(a_i) $, 
$\emme' $$\models $ TBox $\cup$ ABox $\cup \mu^j$ and $\emme' \not\models C(a)$.
From $\emme'$ we build a minimal canonical model with respect to ABox $\emme = $$\langle \Delta, < , I \rangle$ of KB, 
such that $C(a_i)$ does not hold in $\emme$.

Since we do not know whether $\emme'=$$\langle \Delta', <', I' \rangle$ is minimal or canonical, we cannot use it directly; rather, we only use it as a support to the construction of $\emme$. 
As the TBox is satisfiable, by Theorem 46, we know that there exists a minimal canonical model $\emme''=$$\langle \Delta'', <'', I'' \rangle$ of the TBox. We extend such a model with domain elements from $\Delta'$ including those elements
interpreting the individuals $a_1, \ldots, a_m$ explicitly named in the ABox.
Let $\Delta =  \Delta_1 \cup \Delta''$ where 
$\Delta_1= \{ (a_i)^{I'}:$ $a_i$ in ABox $\} \cup$ 
$ \{x \in \Delta':$  $x$ is reachable from some $(a_i)^{I'}$ in $\emme'$ by a sequence of $R^{I'}$ or $(R^-)^{I'}\}$.

We define the rank $k_{\emme}$ of each domain element in $\Delta$ as follows.
For the elements $y \in \Delta''$, $k_{\emme}(y) = k_{\emme''}(y)$.
For the elements $x \in \Delta_1$, if $x=(a_i)^{I'}$, then  $k_{\emme}(x) = k_{\emme'}(x)$;
 if $x \neq (a_i)^{I'}$, then  $k_{\emme}(x) = k_{\emme''}(X)$, for some $X \in \Delta''$ such that 
for all concepts $C' \in {\cal S}$, we have $x \in (C')^{I'}$ if and only if $X \in (C')^{I''}$.
%
We then define $I$ as follows.
%
First, for all $a_i$ in ABox we let $a_i^I =(a_i)^{I'}$. 
We define the interpretation of each concept as in $\Delta'$ on the elements of $\Delta_1$ 
and  as in $\Delta''$ on the elements of $\Delta''$.
Last, we define the interpretation of each role $R$ as in $\emme'$ on the pairs of elements of $\Delta_1$ 
and  as in $\emme''$ on  the pairs of  elements of $\Delta''$.
$I$ is extended to quantified concepts in the
usual way.

It can be proven that $\emme$ satisfies ABox (by definition of $I$ and since $\emme'$ satisfies it).
Furthermore it can be proven that $\emme$ satisfies TBox (the full proof is omitted due to space limitations).
$C(a)$ does not hold in $\emme$, since it does not hold in $\emme'$.
Last, $\emme$ is 
canonical by construction. It is minimal
with respect to Definition \ref{Preference between models wrt TBox}: for all $X \in \Delta_2$ $k_{\emme}(X)$ is the lowest possible rank it can have in any model (by Proposition \ref{proposition_rank}); for all $a_i \in \Delta_1$, this follows by minimality of $k_j$. From minimality of $k_j$ it also follows that $\emme$ is a minimal canonical model with respect to ABox. Since in $\emme$   $C(a)$ does not
hold, the theorem follows by contraposition.
$\hfill \Box$  \end{proof}

\begin{theorem}[Complexity of rational closure over the ABox]
Given a knowledge base KB=(TBox,ABox) in $\shiqrt$, an individual constant $a$ and a concept $C$, the problem of deciding whether $C(a) \in \overline{\mathit{ABox}}$ is \textsc{ExpTime}-complete.
\end{theorem}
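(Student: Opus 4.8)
The plan is to establish the two bounds separately. For the \textsc{ExpTime} upper bound I would analyse the cost of the construction underlying Definition \ref{Rational Closure ABox}, whose correctness (namely, that it computes exactly the assertions holding in all minimal canonical models with respect to the ABox) is already guaranteed by Theorems \ref{Theorem_Soundness_ABox} and \ref{Theorem_Completeness_ABox}; so the theorem is purely a statement about running time. For the matching lower bound I would reduce ordinary $\shiq$ instance checking, which is already \textsc{ExpTime}-hard \cite{HorrocksIGPL00}, to membership in $\overline{\mathit{ABox}}$.

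For the upper bound I would proceed phase by phase. First, compute the rank $\rf(C)$ of every concept occurring on the left-hand side of a typicality inclusion and, more generally, the finite relevant fragment of $\overline{\mathit{TBox}}$ restricted to concepts of $\lingconc$; by the theorem on the complexity of rational closure over the TBox this costs only \textsc{ExpTime}. Next I would bound the number of candidate rank assignments: the $m \leq |KB|$ named individuals each receive one of at most $n+1$ ranks with $n = O(|KB|)$, so there are at most $(n+1)^m = 2^{O(|KB|\log|KB|)}$ assignments $k_j$, a singly exponential quantity. For each $k_j$ the set $\mu^j$ has size polynomial in $|KB|$, so testing condition (i), i.e.\ consistency of $\mathit{TBox} \cup \mathit{ABox} \cup \mu^j$ in $\shiqrt$, costs \textsc{ExpTime} by the encoding of Theorem \ref{encoding_shiqrt_shiq}; testing minimality (condition (ii)) reduces to comparing rank vectors, which is cheap once the consistent assignments are known. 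Finally, $C(a) \in \overline{\mathit{ABox}}$ is decided by checking, for every minimal consistent $k_j$, whether $\mathit{TBox} \cup \mathit{ABox} \cup \mu^j \modelsshiqrt C(a)$, each such test being again \textsc{ExpTime} (entailment reduces to unsatisfiability, which stays in \textsc{ExpTime} since the class is closed under complement), and then taking the skeptical intersection over all minimal consistent assignments.

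The crucial observation, and the only point that requires care, is that although there are exponentially many rank assignments and each is processed by an \textsc{ExpTime} subroutine, the overall cost remains within \textsc{ExpTime}: the product of two singly exponential functions, $2^{p(|KB|)} \cdot 2^{q(|KB|)} = 2^{p(|KB|)+q(|KB|)}$, is still singly exponential. Hence the whole procedure runs in \textsc{ExpTime}, giving the membership part.

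For the lower bound I would observe that when the defeasible part $K_D$ is empty the sets $\mu^j$ contain only the strict inclusions, and $\overline{\mathit{ABox}}$ collapses to ordinary $\shiq$ entailment of assertions from $(\mathit{TBox},\mathit{ABox})$; thus every instance of the \textsc{ExpTime}-complete $\shiq$ instance-checking problem is an instance of our problem, and \textsc{ExpTime}-hardness follows. Combining the two bounds yields \textsc{ExpTime}-completeness. The main obstacle is the bookkeeping of the exponential blow-up in the number of rank assignments: once one sees that exponentially many \textsc{ExpTime} calls still stay in \textsc{ExpTime}, and that each individual consistency and entailment test can be discharged through the $\shiqrt$-to-$\shiq$ encoding, the result is immediate.
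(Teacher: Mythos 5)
Your proof is correct and is essentially the argument the paper intends: the paper itself omits the proof, deferring to the analogous result for $\alc$ (Theorem 5 of \cite{dl2013}), and that argument is exactly your two-part analysis --- enumerate the at most $2^{O(|KB|\log |KB|)}$ rank assignments, discharge each consistency, minimality and entailment test in \textsc{ExpTime} through the $\shiqrt$-to-$\shiq$ encoding of Theorem \ref{encoding_shiqrt_shiq} (after precomputing, in \textsc{ExpTime}, the ranks and the fragment of $\overline{\mathit{TBox}}$ over $\lingconc$), and observe that exponentially many exponential-time calls remain singly exponential, with hardness inherited from $\shiq$. One small imprecision in your lower bound: when $K_D=\emptyset$ it is not literally true that ``the sets $\mu^j$ contain only the strict inclusions,'' since $\overline{\mathit{TBox}}$ still contains every typicality inclusion $\tip(C) \sqsubseteq D$ with $\mbox{KB} \models_{\shiq} C \sqsubseteq D$ (in that case $\rf(C \sqcap \neg D)=\infty$ while $\rf(C)\in\{0,\infty\}$), so the corresponding assertions $(\neg C \sqcup D)(a_i)$ do appear in $\mu^j$. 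This does not damage your reduction: all such assertions are already satisfied in every model of TBox $\cup$ ABox, hence TBox $\cup$ ABox $\cup\ \mu^j$ is equivalent to TBox $\cup$ ABox, and for a $\tip$-free query $C(a)$ entailment in $\shiqrt$ coincides with entailment in $\shiq$ (any $\shiq$ model extends to a $\shiqrt$ model via the empty preference relation), so $\overline{\mathit{ABox}}$ does collapse to $\shiq$ instance checking as you claim, which gives \textsc{ExpTime}-hardness.
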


\noindent We omit the proof, which is similar to the one for rational closure over  ABox in $\alc$ (Theorem 5 \cite{dl2013}).


\section{Extending the correspondence to more expressive logics} \label{Section:Shoiq}

A natural question is whether  the correspondence between the rational closure and the minimal canonical model semantics of the previous section can be extended to stronger DLs. We give  a negative answer for the logic ${\cal SHOIQ}$.
This depends on the fact that, due to the interaction of nominals with number restriction, a consistent ${\cal SHOIQ}$
knowledge base may have no canonical models (whence no minimal canonical ones). Let us consider for instance the following example:

\begin{example}
Consider the KB, where
TBox$= \{ \{o\} \sqsubseteq \leq 1 R^-.\top, \;  \neg \{o\} \sqsubseteq \ \geq 1 R.\{o\}  \}$, and
ABox$=\{\neg A(o), \neg B(o)\}$.

KB is consistent and, for instance, the model $\emme_1= \la \Delta, <, I\ ra$, where $\Delta =\{x,y\}$,
$<$ is the empty relation, $A^I= B^I= (\neg  \{o\} )^I=  \{x\}$, and $(\{o\} )^I=  \{y\}$, is a model of KB.
In particular, $x \in (A \sqcap B)^I$.

Also, there is a model $\emme_2$  of KB similar  to $\emme$ (with $\Delta_2=\{ x_2, y\}$) in which  $x_2 \in (A \sqcap \neg B)^I$,
another one $\emme_3$  (with $\Delta_3=\{ x_2, y\}$) in which  $x_3 \in (\neg A \sqcap B)^I$, and so on.
Hence, $\{A, B\}$, $\{A,\neg B\}$, $\{\neg  A, B\}$, $\{\neg A,\neg B\}$ are all sets of concepts $\lingconc$
that are consistent with KB.
Nevertheless, there is no canonical model for KB containing $x_1$, $x_2$ and $x_3$ all together.
as the inclusions in the TBox prevent models from containing more than two domain elements.


\end{example}

The above example shows that the notion of canonical model as defined in this paper is too strong to capture the notion of rational closure for logics which are as expressive as ${\cal SHOIQ}$.
Beacause of this negative result, we can regard the correspondence result for ${\cal SHIQ}$ only as a first step in the definition of a semantic characterization of rational closure for expressive description logics. A suitable refinement of the semantics is needed, and we leave its definition for future work.

\vspace{-0.2cm}

\section{Related Works}
\vspace{-0.3cm}


There are a number of works which are closely related to our proposal. 

In \cite{FI09,AIJ} nonmonotonic extensions of DLs based on the \tip \ operator have been proposed. In these extensions, focused on the basic DL $\alc$, the semantics of \tip \ is based on preferential logic \Pe. 
Moreover and more importantly, the notion of minimal model adopted here is completely independent from the language and  is  determined only by the relational structure of models. 

\cite{casinistraccia2010} develop a notion of rational closure for DLs. They propose a construction to compute the rational closure of an $\alc$  knowledge base, which is not directly based on Lehmann and Magidor definition of rational closure, but is similar to the construction of rational closure proposed by  Freund \cite{freund98} at a propositional level.  
\normalcolor
\hide{
\cite{casinistraccia2010} keeps the ABox into account, and defines closure operations over individuals. 
It introduces a consequence relation $\Vdash$ among a KB and assertions, under the requirement that the TBox is unfoldable and
 the ABox is closed under completion rules, such as, for instance, that if 
$a: \esiste R.C \in$ ABox, then both $aRb$ and $b: C$ (for some individual constant $b$) must belong to the ABox too. 
Under such restrictions, a procedure is defined to  compute the rational closure of the ABox, assuming that the individuals explicitly named are linearly ordered, and different orders determine different sets of consequences. It is shown that, for each order $s$, the consequence relation $\Vdash_s$ is rational and can be computed in \textsc{PSpace}.
}
 In a subsequent work, \cite{Casinistraccia2011} introduces an approach based on the combination of rational closure and \emph{Defeasible Inheritance Networks} (INs).
In \cite{CasiniDL2013},  a work on the semantic characterization of
a variant of the notion of rational closure introduced in \cite{casinistraccia2010} has been presented, based on a generalization to $\alc$ of our semantics in \cite{jelia2012}.

An approach related to ours can be found in \cite{BoothParis}.
The basic idea of their semantics is similar to ours, but it is restricted to the propositional case. ÊFurthermore, their construction relies on a specific representation of models and it provides a recipe to build a model of the rational closure, rather than a characterization of its properties. Our semantics, defined in terms of Êstandard Kripke models, can be more easily generalized to richer languages, as we have done here for $\shiq$.

In \cite{sudafricanisemantica} the semantics of the logic of defeasible subsumptions is strengthened by a preferential semantics. 
Intuitively, given a TBox, the authors first introduce a preference ordering $\ll$ on the class of all
subsumption relations $\subsud$ including TBox, then they define the rational closure of TBox as the most preferred relation $\subsud$ with respect to $\ll$, i.e. such that there is no other relation $\subsud'$ such that TBox $\incluso \subsud'$ and $\subsud' \ll \subsud$.
Furthermore, the authors describe an \textsc{ExpTime} algorithm in order to compute the rational closure of a given TBox in $\alc$.  \cite{sudafricanisemantica} does not address the problem of dealing with the ABox. \hide{This algorithm is essentially equivalent to our definition of rational closure for a TBox (Definition \ref{def:rational closureDL}), from which it is straightforward to conclude that the semantics $\mathit{FIMS}$ we have proposed is sound and complete also with respect to the semantics for rational closure proposed in \cite{sudafricanisemantica}, if we restrict our concern to the TBox.}
 In \cite{sudafricaniprotege} a plug-in for the Prot\'eg\'e ontology editor 
implementing the mentioned algorithm for computing the rational closure for a TBox for OWL ontologies is described.

Recent works discuss the combination of open and closed world reasoning in DLs. In particular, formalisms have been defined for combining DLs with logic programming rules (see, for instance, \cite{eiter2004} and \cite{rosatiacm}). A grounded circumscription approach for DLs with local closed world capabilities has been defined in \cite{hitzlerdl}.

\vspace{-0.15cm}

\section{Conclusions}
\vspace{-0.3cm}
In this work we have proposed an extension of the rational closure defined by Lehmann and Magidor to the Description Logic $\shiq$, taking into account both TBox and ABox reasoning. Defeasible inclusions are expressed by means of a typicality operator  $\tip$ which selects the typical instances of a concept. 
One of the contributions is that of extending the semantic characterization of rational closure 
proposed in \cite{jelia2012} for propositional logic, to $\shiq$, which does not enjoy the finite model property.
In particular, we have shown that in all minimal models of a finite KB in $\shiq$ the rank of domain elements is always finite, although the domain might be infinite, and we have exploited this result to establish the correspondence between the minimal model semantics and the rational closure construction for $\shiq$. The (defeasible) inclusions belonging to the rational closure of a $\shiq$ KB correspond to those that are minimally entailed by the KB, when restricting  to canonical models.
We have provided  some complexity results, namely that, for $\shiq$, the problem of deciding whether an inclusion belongs to the rational closure of the TBox  is in \textsc{ExpTime} as well as the problem of deciding whether $C(a)$ belongs to the rational 
closure of the ABox. 
Finally, we have shown that the rational closure of a TBox 
can be computed simply using entailment in $\shiq$.

The rational closure construction in itself can be applied to any description logic. We would like to extend its semantic characterization to stronger logics, such as ${\cal SHOIQ}$, for which the notion of canonical model as defined in this paper is too strong, as we have seen in section \ref{Section:Shoiq}.

\hide{This depends on the fact that, due to the interaction of nominals with number restrictions, a consistent ${\cal SHOIQ}$ 
knowledge base may have no minimal canonical models. 

Let us consider for instance the following example: 

\begin{example} 
Consider the KB, where 
TBox$= \{ \{o\} \sqsubseteq \ \leq 1 R^-.\top, \;  \neg \{o\} \sqsubseteq \ \geq 1 R.\{o\}  \}$, and 
ABox$=\{\neg A(o), \neg B(o)\}$. 
KB is consistent and, for instance, the model $\emme_1= \la \Delta,<, I\ra$, where $\Delta =\{x_1,y\}$, 
$<$ is the empty relation, $A^I= B^I= (\neg  \{o\} )^I=  \{x_1\}$, $(\{o\} )^I=  \{y\}$ and $(x_1,y) \in R^I$, is a model of KB. 
In particular, $x \in (A \sqcap B)^I$. 
Also, there is a model $\emme_2$  of KB similar  to $\emme$ (with $\Delta_2=\{ x_2, y\}$) in which  $x_2 \in (A \sqcap \neg B)^I$, 
another one $\emme_3$  (with $\Delta_3=\{ x_2, y\}$) in which  $x_2 \in (\neg A \sqcap B)^I$, and so on. 
Hence, $\{A, B\}$, $\{A,\neg B\}$, $\{\neg  A, B\}$, $\{\neg A,\neg B\}$ are all sets of concepts 
that are consistent with KB. 
Nevertheless, there is no canonical model for KB containing $x_1$, $x_2$ and $x_3$ all together,
as the inclusions in the TBox prevent models from containing more than two domain elements. 
\end{example} 

The above example shows that the notion of canonical model as defined in this paper is too strong to capture the notion of rational closure for logics which are expressive as ${\cal SHOIQ}$. 
Due to this negative result, we can regard the correspondence result for ${\cal SHIQ}$ only as a first step in the definition of a semantic characterization of rational closure for expressive DLs. A refinement of the semantics is needed to make it weaker, and we leave its definition for future work. \\
}

It is well known that rational closure has some weaknesses that accompany its well-known qualities. 
Among the weaknesses is the fact that one cannot separately reason property by property,
so that, if a subclass of $C$ is exceptional for a given aspect, it is exceptional ``tout court'' and does not inherit any of the
typical properties of $C$.
Among the strengths there is its computational lightness,  which is crucial in Description Logics. Both the qualities and the weaknesses  seems to be inherited by its extension to Description Logics. 
To address the mentioned weakness of rational closure, we may think of attacking the problem from a semantic point of view by  considering a finer semantics where  models are equipped with several preference relations; in such a semantics it might be possible to relativize the notion of typicality, whence to reason about typical properties independently from each other. 

\vspace{0.2cm} 
\noindent {\bf Acknowledgement}. We thank the anonymous referees for their helpful comments. This work has been partially supported by  the Compagnia di San Paolo and by the project ``CONDESC: deduzione automatica per logiche CONDizionali e DESCrittive''.

\bibliographystyle{splncs03}
\bibliography{bibliokr14}

\newpage
\begin{appendix}

\section{APPENDIX: Encoding $\shiqrt$ in $\shiq$}\label{appendiceencoding}

In this section, we provide an encoding of $\shiqrt$ in $\shiq$ and show that
reasoning in $\shiqrt$ has the same complexity as reasoning in $\shiq$.
To this purpose, we first need 
to show that among $\shiqrt$ models, we can restrict our consideration to models where the rank of each element is finite
and less than the number of (sub)concepts occurring in the KB, which is polynomial in the size of the KB.


\begin{proposition} \label{Prop:catene_corte}
Given a  knowledge base KB= (TBox, ABox) in $\shiqrt$, 
there is an $h_{KB}\in  \mathbb{N}$ 
such that, 
for each  model $M$ of the KB in $\shiqrt$ satisfying a concept $C$, there exists a model $M'$ of the KB such that the rank of each element in $M'$ is finite and less then $h_{KB}$, satisfying the concept $C$. Also, $h_{KB}$ is polynomial in the size of the KB.
\end{proposition}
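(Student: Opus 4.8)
The plan is to keep the domain $\Delta$ and the interpretation $I$ of all $\shiq$ concepts and roles fixed, and to obtain $M'$ from $M$ only by \emph{compressing} the ranking function, collapsing the ``gaps'' between the ranks that are actually relevant to the typicality inclusions. The key point is that the relation $<$ enters the satisfaction of $\mathit{KB}$ (and of the query) only through the sets $Min_<(E^I)$ for those concepts $E$ occurring as $\tip(E)$ in $\mathit{KB}$ or in the query; hence if a new relation $<'$ leaves all these minimal-sets unchanged, $M' = \la \Delta, <', I\ra$ will still be a model of $\mathit{KB}$ satisfying the query.

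Concretely, first I would collect the finite set $\lan$ of concepts $E$ such that $\tip(E)$ occurs in $\mathit{KB}$ or in the query; since, by the grammar, $\tip$ can only occur at the top level, $N := |\lan|$ is bounded by the number of typicality subconcepts of $\mathit{KB}$ and the query, hence polynomial (indeed linear) in the size of $\mathit{KB}$. For each $E \in \lan$ with $E^I \neq \vuoto$ let $k_M(E) = \min\{k_M(x) : x \in E^I\}$ (Definition \ref{definition_rank_formula}), and let $r_1 < r_2 < \dots < r_m$ be the distinct values taken by these $k_M(E)$, so $m \le N$. I then define a new ranking $k' : \Delta \to \{0, \dots, m\}$ by
\[
k'(x) = |\{\, j : r_j < k_M(x)\,\}|,
\]
that is, $k'(x)$ counts how many of the relevant thresholds lie strictly below the old rank of $x$, and I let $<'$ be the relation induced by $k'$. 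Being induced by a ranking into $\mathbb{N}$, $<'$ is automatically irreflexive, transitive, modular, and well-founded. Setting $M' = \la \Delta, <', I\ra$, every element has rank at most $m \le N$, so it suffices to take $h_{\mathit{KB}} = N+1$, which is polynomial in the size of $\mathit{KB}$ and, crucially, independent of the particular model $M$.

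The heart of the argument is the claim that $Min_{<'}(E^I) = Min_<(E^I)$ for every $E \in \lan$. For this I would fix $E \in \lan$ with $E^I \neq \vuoto$, write $k_M(E) = r_j$, and check that for $x \in E^I$ (so $k_M(x) \ge r_j$) one has $k'(x) = j-1$ exactly when $k_M(x) = r_j$, whereas $k_M(x) > r_j$ forces $k'(x) \ge j$; hence the $<'$-minimal elements of $E^I$ are precisely the $<$-minimal ones. Granted this claim, each $\tip(E) \sqsubseteq D \in \mathit{KB}$ is preserved because $Min_{<'}(E^I) = Min_<(E^I) \subseteq D^I$, each typicality assertion $\tip(E)(a)$ is preserved for the same reason, and the strict and role inclusions in $K_F$, together with all role and $\shiq$-concept assertions, are preserved trivially since $I$ is unchanged. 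The query is satisfied in $M'$ as well: if it is a $\shiq$ concept because $I$ is unchanged, and if it is $\tip(E)$ because $E \in \lan$ and its minimal-set is preserved. Concepts $E \in \lan$ with $E^I = \vuoto$ require no attention, since both $Min_<(E^I)$ and $Min_{<'}(E^I)$ are empty.

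The main obstacle is precisely guaranteeing that the compression does not disturb \emph{any} of the sets $Min_<(E^I)$: collapsing ranks too aggressively could promote a non-typical $E$-element to minimal rank and thereby falsify a typicality inclusion. Defining $k'$ through the thresholds $r_1 < \dots < r_m$ (rather than, say, by naively subtracting a constant from each rank) is what ensures that every element keeps its relative position with respect to every relevant concept-rank, so that $Min_{<'}(E^I)$ coincides with $Min_<(E^I)$ exactly. Verifying this equality is the only non-routine step; once it is in place, the preservation of $\mathit{KB}$ and of the query, as well as the polynomial bound $h_{\mathit{KB}} = N+1$, follow immediately.
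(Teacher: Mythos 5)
Your proof is correct, and it takes a genuinely different route from the paper's. Both arguments fix the domain and the interpretation of atomic concepts and roles and replace the ranking with a compressed one, but the compression mechanism and the invariant differ. You compress by thresholding at the ranks $r_1 < \dots < r_m$ of the (at most $N$) concepts appearing under $\tip$ in the KB or the query, and your invariant is the exact preservation of $Min_<(E^I)$ for each such $E$ --- the weakest invariant that suffices, since $<$ enters satisfaction of the KB and the query only through these sets; this yields the bound $h_{KB}=N+1$ and never mentions the $\Box$ modality. The paper instead groups domain elements by their ``box profile'' $x_\Box = \{\Box\neg C \mid x \in (\Box\neg C)^I\}$ over all concepts $C$ occurring in the KB, assigns each element the minimum old rank within its profile class, and derives the bound from the fact that box profiles strictly grow as ranks decrease; its bound is the number of (sub)concepts of the KB, so yours is tighter. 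What the paper's stronger invariant buys is preservation of the extension of every boxed concept $\Box\neg C$, which matches the shape of the subsequent encoding of $\shiqrt$ into $\shiq$ (Theorem \ref{encoding_shiqrt_shiq}), where ranks are represented by concepts $S_i$ and same-rank elements are forced to satisfy the same boxed concepts; your construction preserves only the typicality memberships, which suffices for this proposition but gives less structure to re-use in that encoding. One small point to tidy: as written, your $N$ counts typicality concepts of the query as well, so $h_{KB}$ formally depends on $C$, whereas the statement quantifies $h_{KB}$ before $C$; since the grammar allows $\tip$ only at top level, the query contributes at most one concept to your set, so you can take $h_{KB}$ to be the number of typicality inclusions and assertions in the KB plus two, restoring independence from $C$.
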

\begin{proof}{\em (Sketch)}
Given a $\shiqrt$ model $\emme =  \langle \Delta, <, I \rangle$, observe that: 
\begin{itemize}
\item[(1)] it is not the case that an element $x \in \Delta$ is an instance of concept $\Box \neg C$ and another domain element $y \in \Delta$, 
with $y<x$ is an instance of concept $\neg \Box \neg C$;

\item[(2)] given two domain elements $x$ and $y$ such that $x$ and $y$ have different ranks (for instance, $k_\emme(x)=i$, $k_\emme(y)=j$ and $i<j$),  
if they are instances of exactly the same
concepts of the form  $\Box \neg C$ 
(i.e., $x \in (\Box \neg C)^I$ iff $y \in (\Box \neg C)^I$)
for all concepts $C$ occurring in the KB, 
then $y$ can be assigned the same rank as $x$ 
without changing the set of concepts 
of which $y$ is an instance. Note that $\tip$ cannot occur in the scope of a $\Box$ modality.
\end{itemize}
By changing the rank of (possibly infinite many) domain elements according item (2), 
we can transform any $\shiqrt$ model into another $\shiqrt$ model $\emme'=  \langle \Delta, {<'}, I' \rangle$
where each domain element has a finite rank.

For each domain element $x\in \Delta$, let 
$$x^\emme_\Box = \{ \Box \neg C \mid x \in ( \Box \neg C)^I\}$$
We let $I'=I$ and we define  $<'$ by the following ranking function, for any $y\in \Delta$:
\begin{center}
$k_{\emme'}(y) = min \{k_{\emme}(x) \mid x \in \Delta \mbox{ and } x^\emme_\Box = y^\emme_\Box\}$
\end{center}
Observe that $k_{min}(y)$ is well-defined for any element $y\in \Delta$  
(a set of ordinals has always a least element). 
We can show that $\emme' \models$ KB. 
Since $I'$ is the same as $I$ in $\emme$, it follows immediately that 
$\emme'$ satisfies strict concept inclusions, role inclusions and ABox assertions.
Also, for each transitive role $R$, $R^{I'}$ is transitive (as $R^I$ is transitive).

We prove that $\emme'\models K_D$. Let $\tip(C) \sqset E\in F_D$. Suppose, by absurdum, that $\emme' \not\models \tip(C) \sqset E$, this means that
there is a $z\in \Delta$ such that $z \in (T(C))^{I'}$ and $z \not \in E^{I'}$.
We show that in $\emme$,  $z \in (T(C))^{I'}$ and $z \not \in E^{I'}$.
As $I'=I$, from $z \not \in E^{I'}$ it follows that $z \not \in E^{I}$.
Let  $z \in (T(C))^{I'}$. Then, by definition of $T(C)$ as $C \sqcap \Box \neg C$, it must be that  $z \in (C)^{I'}$ and  $z \in (\Box \neg C)^{I'}$.
Observe that, by construction, $z_\Box^{\emme'} = z_\Box^\emme$, since $z$ has been assigned in $\emme'$ the same rank as 
an element $x$ such that $ x^\emme_\Box = z^\emme_\Box$. Therefore,  $z \in (\Box \neg C)^{I}$.
Also, since $I'=I$,  $z \in (C)^{I}$. Hence,  $z \in ( C \sqcap \Box \neg C)^{I}$, and  $z \in (T(C))^{I}$.
We can then conclude that $\emme \not\models \tip(C) \sqset E$,
against the fact that $\emme$ is a model of KB.

Hence, $\emme'$ is a model of KB.
Similarly, it can be easily shown that if $C$ is satisfiable in $\emme$, i.e. there is an $x\in \Delta$ such that $x\in C^I$,
then $x\in C^{I'}$ and therefore, $C$ is satisfiable in $\emme'$.

Observe that, in $\emme'$, any pair of domain element with different ranks cannot be instances of the same concepts $\Box \neg C$
for all the $C$ occurring in the KB  (not containing the $\tip$ operator).
This is true, in particular, for the pairs  $v$ and $w$ of domain elements with adjacent ranks, i.e., 
such that $k_\emme(v)=i+1$ and $k_\emme(w)=i$, for some $i$.
For such a pair, there must be at least a concept $C$ such that $v$ is an instance of $\neg \Box \neg C$ while $w$ is an instance of $\Box \neg C$
(the converse, that  $w$ is an instance of $\neg \Box \neg C$ while $v$ is  an instance of $\Box \neg C$, is not possible by the transitivity of $\Box$, as $w < v$).

As a consequence, for each domain element $w$ with rank $i$, there is at least a concept $C$ occurring in the KB such that: 
all the domain elements with rank $i+1$ are instances of $\neg \Box \neg C$, while $w$ is an instance of $\Box \neg C$.
Informally, the number of $\Box$ formulas of which a domain element is an instance increases, when the rank decreases. 
For a given KB, an upper bound $h_{KB}$ on the rank  of all domain elements can thus be determined as the number 
of (sub)concepts occurring in the KB, which is polynomial in the size of the KB.

$\hfill \Box$

\end{proof}
In the following, we can restrict our consideration to models of the KB with finite ranks whose value is less or equal to $h_{KB}$,
the number of  (sub)concepts occurring in the KB (which is polynomial in the size of the KB).


The following theorem says that
reasoning in $\shiqrt$ has the same complexity as reasoning in $\shiq$, i.e. it is in \textsc{ExpTime}.
Its proof is given by providing an encoding of satisfiability in $\shiqrt$ into satisfiability $\shiq$,
which is known to be an  \textsc{ExpTime}-complete problem.

\vspace{0.3cm}

\noindent {\bf Theorem \ref{encoding_shiqrt_shiq}.}
Satisfiability in  $\shiqrt$ is an  \textsc{ExpTime}-complete problem.

\begin{proof} 
{\em (Sketch)}
The hardness comes from the fact that satisfiability in $\shiq$ is  \textsc{ExpTime}-hard.
We show that satisfiability in  $\shiqrt$ can be solved in  \textsc{ExpTime} 
by defining a polynomial reduction of satisfiability in  $\shiqrt$ to satisfiability in $\shiq$.

Let KB=(TBox,ABox) be a knowledge base, and $C_0$ a concept in $\shiqrt$.
We define an encoding (TBox', ABox') of KB and $C'_0$ of $C_0$ in $\shiq$ as follows.

First, we introduce new atomic concepts $Zero$ and $W$ in the language and a new role $R$,
where $R$ is intended to model the relation $<$ of $\shiqrt$ models. We let TBox' contain the inclusions
\begin{center}
$\top \sqsubseteq \leq 1 R.\top$ \ \ \ \ \ \ \ \ \ $\top \sqsubseteq \leq 1 R^-.\top$
\end{center}
so that $R$ allows to represent linear sequences.
We will consider the linear sequences of elements of the domain reachable trough $R^-$ from the $Zero$ elements,
i.e., those sequences  $w_0, w_1, w_2, \ldots$, with $w_0  \in Zero^I $ and $(w_i, w_{i+1})\in (R^-)^I$.
%
Given Proposition \ref{Prop:catene_corte}, we can restrict our consideration to finite linear sequences
with length less or equal to $h$, the number of sub-concepts of the KB (which is polynomial in the size of KB).
We introduce $h$ new atomic concepts $S_1, \ldots, S_h$ such that the instances of $S_i$ are
the domain elements reachable form a $Zero$ element by a chain of length $i$ of $R^-$-successors.
We introduce in TBox' the following inclusions:
\begin{center}
$Zero \sqsubseteq \forall R^-. S_1$ \ \ \ \ \ \ \
$S_1 \sqsubseteq \exists R. Zero$  \ \ \ \ \ \ \
$S_i \sqsubseteq \forall R^-. S_{i+1}$  \ \ \ \ \ \ \
$S_{i+1}\sqsubseteq \exists R. S_i$
\end{center}
$Zero$-elements have no $R$-successor
and $S_h$-elements have no $R$-predecessors.
\begin{center}
$Zero \sqsubseteq \neg \exists R. \top.$  \ \ \ \ \ \ \
$S_h \sqsubseteq \neg \exists R^-. \top.$
\end{center}

All the elements in a sequences $w_0, w_1, w_2, \ldots$, as introduced above, are instances of concept $W$:
\begin{center}
$W \sqsubseteq Zero \sqcup S_1 \sqcup \ldots \sqcup S_n$  \ \ \ \ \ \ \
$ Zero \sqcup S_1 \sqcup \ldots \sqcup S_n \sqsubseteq  W$
\end{center}

From the sequences $w_0, w_1, w_2, \ldots$ starting from $Zero$ elements, we can encode in $\shiq$ the structure of ranked models of $\shiqrt$, by associating rank $i$ to all the elements $w_i$ in $S_i$.

We have to provide an encoding for the inclusions in TBox.
For each $A \sqsubseteq B \in$ TBox, not containing $\tip$, we introduce  
$A \sqsubseteq B $ in TBox'.

For each $\tip(A)$  occurring in the TBox, we introduce a new atomic concept  $\Box_{\neg A}$
and, for each inclusion $\tip(A) \sqsubseteq B \in$ TBox, we add to TBox'
 the inclusion 
 $$ A \sqcap \Box_{\neg A} \sqsubseteq B$$

%

To capture the properties of the $\Box$ modality, the following equivalences are introduced in TBox':
\begin{center}
$ \Box_{\neg A} \equiv  \forall R. (\neg A \sqcap  \Box_{\neg A})$

$\top  \sqsubseteq \forall U. (\neg S_i \sqcup  \Box_{\neg A}) \sqcup   \forall U. (\neg S_i  \sqcup \neg \Box_{\neg A})$

\end{center}
for all $i=0, \ldots, h$ and for all concept names $A \in {\cal C}$, where $U$ is the universal role  (which can be defined in $\shiq$  \cite{HorrocksIGPL00}).
The first inclusion, says that if a domain element of rank $i$ is an instance of concept $\Box_{\neg A}$,
the elements of rank $i-1$ (in the same sequence) are instances of both the concepts  ${\neg A}$ and $\Box_{\neg A}$.
(this is to account for the transitivity of the $\Box$ modality).
The second inclusion forces the $S_i$-elements
(i.e. all the domain elements with rank $i$)
to be instances of the same boxed concepts $\Box_{\neg A}$, for all $A \in  {\cal C}$.

For each named individual $a \in N_I$, we add to ABox' the assertion $W(a)$,
to guarantee the interpretation of $a$ to be a $W$-element.

For all the assertions $C_R(a)$ in ABox, we add $C_R(a)$ to ABox'.
For all the assertions $\tip(C)(a)$ in ABox, we add $(A \ \sqcap \ \Box_{\neg A})(a)$ to ABox'.
For all the assertions $R(a,b) \in$ABox, we add $R(a,b)$ to ABox'.

Given a $\shiqrt$ concept $C_0$, whose size is assumed to be polynomial in the size of the KB,
we encode by introducing the following $\shiq$ concept $C'_0$
$$\exists U. (W \sqcap [C_0] )$$
where $U$ is the universal role  
and 
$[C_0]$ is obtained from $C_0$ by replacing  each occurrence of $\tip(A)$ in $C_0$
with $A \sqcap \Box_{\neg A}$. $[C_0]$ is a $\shiq$ concept
and we require it to be satisfied in some $W$-element.
We can then prove the following:

\begin{itemize}
\item
The size of KB' and size of $C'_0$ are polynomial in the size of KB.


\item
Concept $C_0$ is satisfiable with respect to KB in $\shiqrt$ if and only if $C'_0 $ is satisfiable  with respect to KB' in $\shiq$.
\end{itemize}
The proof of this result can be done by showing that a $\shiqrt$ model of KB satisfying $C_0$ can be transformed into a 
$\shiqrt$ model of KB' satisfying $C'_0$. And vice-versa.

We can therefore conclude that the satisfiability problem in $\shiqrt$ can be polynomially reduced to the satisfiability problem in $\shiq$,
which is in  \textsc{ExpTime}.
$\hfill \bbox$
\end{proof}

\section{APPENDIX: Well-founded relations}\label{appendicenicola}

A few definitions.

\begin{definition}
Let $S$ be a non-empty set and $<^*$ a transitive, irreflexive relation on $S$ (a strict pre-order). 
Let $U \subseteq S$, with $U\not=\emptyset$, we say that $x\in S$ is a\emph{ minimal element} of $U$ with respect to $<^*$ if it holds:
\begin{quote}
$x\in U$ and $\forall y \in U$ we have $y \not<^* x$.
\end{quote}
Given $U \subseteq S$, we denote by $Min_{<^*}(U)$ the set of minimal elements of $U$ with respect to $<^*$. 
\end{definition}

\begin{definition}
Let $S$ and $<^*$ as in previous definition. We say that $<^*$ \emph{is well-founded } on $S$ if for every non-empty $U \subseteq S$, we have 
$Min_{<^*}(U)\not = \emptyset$.
\end{definition}

\begin{proposition}
Let $S$ and $<^*$ as above. The following are equivalent:
\begin{enumerate}
	\item $<^*$ is well-founded on $S$;
	\item there are no infinite descending chains: $\ldots x_{i+1} <^* x_i <^* \ldots <^* x_0$ of elements of $S$.
\end{enumerate}
\end{proposition}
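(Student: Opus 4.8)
The plan is to prove the two implications separately, working directly from the definition of well-foundedness (every non-empty $U \subseteq S$ satisfies $Min_{<^*}(U) \neq \emptyset$) against the combinatorial condition on descending chains. Both directions go most cleanly by contradiction.

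First I would establish $(1) \Rightarrow (2)$. Assume $<^*$ is well-founded and suppose, for contradiction, that an infinite descending chain $\ldots <^* x_{i+1} <^* x_i <^* \ldots <^* x_0$ exists. Collect its elements into $U = \{x_i \mid i \in \mathbb{N}\}$, a non-empty subset of $S$. By well-foundedness there is some $x_m \in Min_{<^*}(U)$. But $x_{m+1} \in U$ and $x_{m+1} <^* x_m$, which contradicts the minimality of $x_m$. This direction uses nothing beyond the definitions.

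For $(2) \Rightarrow (1)$ I would again argue by contraposition. Assume there are no infinite descending chains, and suppose $<^*$ is not well-founded, i.e.\ there is a non-empty $U \subseteq S$ with $Min_{<^*}(U) = \emptyset$. Unfolding the definition of a minimal element, this says that for every $x \in U$ there exists $y \in U$ with $y <^* x$. Then, starting from an arbitrary $x_0 \in U$, I would build a sequence recursively: given $x_i \in U$, since $x_i$ is not minimal in $U$, choose some $x_{i+1} \in U$ with $x_{i+1} <^* x_i$. This produces an infinite descending chain of elements of $S$, contradicting $(2)$.

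The point deserving care is entirely in this second direction. The recursive construction of the chain relies on the axiom of dependent choice, which I would acknowledge explicitly rather than leave implicit. I would also remark that transitivity and irreflexivity of $<^*$ guarantee the $x_i$ are pairwise distinct: if $x_i = x_j$ for some $i < j$, then transitivity along $x_j <^* \ldots <^* x_i$ would give $x_j <^* x_j$, contradicting irreflexivity. Hence the chain is genuinely infinite and cannot collapse into a cycle, which is exactly what is needed for it to witness the failure of $(2)$.
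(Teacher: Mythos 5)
Your proof is correct and follows essentially the same route as the paper: both directions argue by contradiction, with $(1)\Rightarrow(2)$ collecting the chain's elements into a set $U$ with no minimal element, and $(2)\Rightarrow(1)$ recursively choosing ever-smaller elements from a set witnessing the failure of well-foundedness. The only difference is bookkeeping --- the paper uses the full axiom of choice to first construct a function $f:U\longrightarrow U$ with $f(x)<^*x$ and then iterates it, whereas you invoke dependent choice directly (which is indeed the precise principle needed); your extra remark that irreflexivity and transitivity make the $x_i$ pairwise distinct is harmless but not required, since the constructed sequence is an infinite descending chain by definition either way.
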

\begin{proof}
\begin{itemize}
	\item 
$(1) \Rightarrow (2)$. Suppose that $<^*$ is well-founded on $S$ and by absurd that there is an infinite descending chain  $\ldots x_{i+1} <^* x_i <^* \ldots <^* x_0$ of elements of $S$. Let $U$ be the set of elements of such a chain. Clearly for every $x_i\in U$ there is a $x_j\in U$ with $x_j <^* x_i$. But this means that $Min_{<^*}(U)=\emptyset$ against the hypothesis that $<^*$ is well-founded on $S$.

\item
$(2) \Rightarrow (1)$. Suppose by absurd that for a non-empty $U\subseteq S$, we have that $Min_{<^*}(U)=\emptyset$. Thus:
$$\forall x \in U \ \ \exists y \in U \ y <^* x$$
We can \emph{assume} that there is a function $f: U \longrightarrow U$ such that $f(x) <^* x$. 

[If $S$ is enumerable then $f$ can be defined by means of an enumeration of  $S$ (e.g. take the smallest $ y <^* x$ in the enumeration); otherwise and more generally, by using the axiom of choice we can proceed as follows:  
given $x\in U$, let $U_{\downarrow x} = \{y\in U \mid y <^* x\}$, thus for every $x \in U$ the set $U_{\downarrow x}$ is non-empty. 
By \emph{the axiom of choice}, there is a function $g$:
$$g: \{U_{\downarrow x} \mid x\in U\}  \longrightarrow \bigcup_{x\in U} U_{\downarrow x} \ (= U) $$
such that for every $x\in U$, $g(U_{\downarrow x})\in U_{\downarrow x}$. We then define $f(x) = g(U_{\downarrow x})$.] 

Since $f(x) < x$ we also have $f(f(x)) <^* f(x) <^*x$ and so on. Using the notation $f^i(x)$ for the 
$i$ - iteration of $f$, we can immediatly define an infinite descending chain by fixing $x_0\in U$ and by taking $x_i = f^i(x_0)$ for all $i> 0$. 
\end{itemize}
$\hfill \bbox$
\end{proof}

\begin{theorem}
Let $S$ be a non-empty set and $<^*$ a binary relation on $S$. The following are equivalent:
\begin{enumerate}
	\item $<^*$ is (i) irreflexive, (ii) transitive, (iii) modular, (iv) well-founded.
	\item there exists a function $k: S \longrightarrow Ord$ such that $x <^* y$ iff $k(x) < k(y)$ (where Ord is the set of ordinals). 
\end{enumerate}
\end{theorem}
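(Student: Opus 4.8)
The plan is to prove the two implications separately, the construction of the rank function being the substantial direction.

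For $(2) \Rightarrow (1)$, I would assume a function $k : S \longrightarrow Ord$ with $x <^* y$ iff $k(x) < k(y)$ and transfer the four properties from the well-order of the ordinals. Irreflexivity and transitivity are immediate from $k(x) \not< k(x)$ and transitivity of $<$ on ordinals. Modularity follows because $Ord$ is totally ordered: if $x <^* y$, i.e. $k(x) < k(y)$, then for any $z$ either $k(x) < k(z)$ (so $x <^* z$) or $k(z) \leq k(x) < k(y)$ (so $z <^* y$). Well-foundedness follows because every non-empty $U \subseteq S$ maps to a non-empty set of ordinals, which has a least element $\alpha = k(x_0)$ with $x_0 \in U$; then no $y \in U$ can satisfy $y <^* x_0$, so $x_0 \in Min_{<^*}(U)$.

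For $(1) \Rightarrow (2)$, I would define the rank function by well-founded recursion along $<^*$:
$$k(x) = \sup\{k(y)+1 \mid y <^* x\},$$
with the convention $\sup \emptyset = 0$. This is legitimate because $<^*$ is well-founded (equivalently, by the previous proposition, has no infinite descending chains) and $S$ is a set, so the class of predecessors of each element is a set and the recursion terminates. The forward inclusion $x <^* y \Rightarrow k(x) < k(y)$ is then immediate: if $x <^* y$ then $k(x)+1$ is one of the terms in the supremum defining $k(y)$, whence $k(y) \geq k(x)+1 > k(x)$.

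The crux is the converse, $k(x) < k(y) \Rightarrow x <^* y$, and this is exactly where modularity enters. I would prove the contrapositive: if $x \not<^* y$ then $k(x) \geq k(y)$. If $y <^* x$, this is already covered by the forward inclusion. The remaining case is that $x$ and $y$ are incomparable, and here I would establish the key lemma that incomparable elements have the same set of strict predecessors: if $z <^* x$, then applying modularity to $z <^* x$ with the element $y$ gives $z <^* y$ or $y <^* x$, and since $y \not<^* x$ we conclude $z <^* y$; the symmetric argument gives the reverse inclusion. Hence $\{z \mid z <^* x\} = \{z \mid z <^* y\}$, and the recursive definition yields $k(x) = k(y)$, so in particular $k(x) \geq k(y)$. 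The main obstacle is precisely this lemma: without modularity, two incomparable elements of a well-founded order may receive distinct ranks and the canonical rank function would fail the backward implication. Modularity is exactly the ingredient that collapses each ``layer'' of mutually incomparable elements to a common rank, making $<^*$ coincide with the order induced by a single ordinal-valued function.
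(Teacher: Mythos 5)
Your proof is correct, and for the substantial direction $(1)\Rightarrow(2)$ it takes a genuinely different route from the paper's. The paper builds the rank function by transfinite stratification: it defines $S_{\alpha} = S - \bigcup_{\beta<\alpha} A_{\beta}$ and $A_{\alpha} = Min_{<^*}(S_{\alpha})$, peels off the minimal layer at each ordinal stage, argues by cardinality that the process exhausts $S$ at some least ordinal $\lambda$, and sets $k(x)$ equal to the index of the unique layer containing $x$; the fact that the layers are linearly ordered by $<^*$ (where modularity is silently needed) is left as ``easily shown.'' You instead define the canonical rank of the well-founded relation directly by well-founded recursion, $k(x) = \sup\{k(y)+1 \mid y <^* x\}$, get the forward implication for free, and isolate the role of modularity in a crisp lemma: two incomparable elements have identical sets of strict predecessors, hence equal rank. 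The two rank functions actually coincide (an element lies in the paper's $A_{\alpha}$ exactly when its recursive rank is $\alpha$), but your decomposition buys a cleaner account of \emph{where} modularity enters --- precisely to collapse incomparable elements to a common rank --- whereas the paper's construction buys an explicit picture of the ranked structure ($S$ partitioned into linearly ordered layers) and an explicit ordinal bound $\lambda$. Two small polish points: the legitimacy of the recursion is best cited as the well-founded recursion theorem (which needs only the minimal-element form of well-foundedness, not the no-infinite-descending-chain form, so no appeal to choice is needed there), and in the contrapositive case analysis you should note the degenerate case $x = y$, which is harmless since equal elements trivially share predecessors. The $(2)\Rightarrow(1)$ direction is essentially identical to the paper's.
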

\begin{proof}

\begin{itemize}
	\item $(2) \Rightarrow (1)$. Suppose that there is a function $k: S \longrightarrow Ord$ such that $x <^* y$ iff $k(x) < k('y)$. We can easily check that properties (i)--(iv) holds: irreflexivity and transitivity are immediate. For (iii) modularity: let $x <^* y$ and $z$ be any element in $S$. 
Suppose that $ x \not <^* z$, thus $k(x) \not < k(z)$; then it must be either $k(x) = k(z)$ or $k(z) < k(x)$, whence $k(z) < k(y)$ in both cases, thus $z <^* y$. 

For (iv) well-foundedness, suppose by absurd that there is a non-empty $U \subseteq S$ such that $Min_{<^*}(U)=\emptyset$, then for every $x \in U$ there is $y\in U$ such that $y <^* x$. Let us consider the image of $U$ under $k$: $U_k = \{k(x) \mid x\in U\}$. The set of ordinals $U_k$ has a \emph{least} element, say $\beta\in Ord$ (this by  property of ordinals: \emph{every non-empty set of ordinals has a least element}).  
Let $z\in U$ such that $k(z) = \beta$. By hypothesis, there is $y\in U$ such that $y <^* u$, but then $k(y) \in U_k$ and $k(y) < \beta$, against the fact that $\beta$ is the \emph{least} ordinal in $U_k$.

\item 
$(1) \Rightarrow (2)$ (Sketch). Suppose that $<^*$ satisfies properties (i)--(iv). Let us consider the following sequence of sets indexed on Ordinals:
\begin{quote}
$S_{\alpha} = S - \bigcup_{\beta < \alpha} A_\beta$\\
$A_{\alpha} = Min_{<^*}(S_\alpha)$
\end{quote}
Thus $S_0 = S$ and $A_0 = Min_{<^*}(S)$. 
Observe that if $S_{\alpha}\not=\emptyset$ then also $A_{\alpha}\not=\emptyset$ (by well-foundness); moreover the sequence is decreasing: $S_\alpha \subset S_\beta$ for $\beta < \alpha$. But for cardinality reasons there must be a least ordinal $\lambda$ such that $S_{\lambda} = \emptyset$, this means  that $S_{\lambda} = S - \bigcup_{\beta < \lambda} A_\beta = \emptyset$,  so that we get 
$$S = \bigcup_{\beta < \lambda} A_\beta$$
It can be easily shown that: 
\begin{itemize}
	\item for $\alpha<\beta<\lambda $, $\forall x\in A_{\alpha}, \forall y\in A_{\beta} \ x <^* y$, and also $A_{\alpha}\cap A_{\beta} = \emptyset$
	\item for each $x\in S$, there exists a \emph{unique} $A_{\alpha}$ with $\alpha < \lambda$ such that $x\in A_{\alpha}$
	\item  $x <^* y$ iff for some $\alpha, \beta < \lambda$  $x \in A_{\alpha}$ and $y\in A_{\beta}$ and $\alpha <\beta$.
\end{itemize}
We can then define $k(x) =$ the unique $\alpha$ such that $x\in A_{\alpha}$ and the result follows.
\end{itemize}
$\hfill \bbox$
\end{proof}

\end{appendix}

\end{document}